\newif\ifspacehack
\newcommand{\fillcell}{\cellcolor{gray!25}}
\renewcommand{\tilde}{\widetilde}
\def \R {\mathbb{R}}
\newcommand{\LS}{\mathrm{LS}}
\newcommand{\FG}{\mathrm{FG}}
\newcommand{\calX}{{\mathcal{X}}}
\newcommand{\calS}{{\mathcal{S}}}
\newcommand{\calH}{{\mathcal{H}}}
\newcommand{\calF}{{\mathcal{F}}}
\newcommand{\calD}{{\mathcal{D}}}
\newcommand{\Reg}{{\mathrm{Reg}}}
\newcommand{\NZ}{\ensuremath{[N]_0}\xspace}
\newcommand{\E}{{\mathbb{E}}}
\newcommand{\order}{\mathcal{O}}
\newcommand{\one}{\boldsymbol{1}}
\newcommand{\KL}{\text{\rm KL}}
\newcommand{\halftick}{\ensuremath{\checkmark\kern-1.1ex\raisebox{.7ex}{\rotatebox[origin=c]{125}{--}}}}
\DeclareMathOperator*{\argmin}{argmin}
\DeclareMathOperator*{\argmax}{argmax}
\newcommand{\wh}{\widehat}
\newcommand{\ellhat}{\wh{\ell}}
\DeclareMathOperator{\conv}{conv}
\newcommand{\OffAlg}{\ensuremath{\mathsf{Alg}_{\mathsf{off}}}\xspace}
\newcommand{\AlgLog}{\ensuremath{\mathsf{Alg}_{\mathsf{on}}}\xspace}
\newcommand{\dec}{\ensuremath{\mathsf{dec}_\gamma}\xspace}
\newtheorem{assumption}{Assumption}
\newtheorem{event}{Event}
\newcommand{\otil}{\ensuremath{\tilde{\mathcal{O}}}}
\renewcommand{\tilde}{\widetilde}
\def \E {\mathbb{E}}
\def \R {\mathbb{R}}
\def \Pr {\mathsf{Pr}}
\newcommand{\RegMNL}{\ensuremath{\mathrm{\mathbf{Reg}}_{\mathsf{MNL}}}\xspace}
\newcommand{\RegLog}{\ensuremath{\mathrm{\mathbf{Reg}}_{\mathsf{\log}}}\xspace}
\newcommand{\ErrLog}{\ensuremath{\mathrm{\mathbf{Err}}_{\mathsf{\log}}}\xspace}
\definecolor{wine_red}{RGB}{228,48,64}
\definecolor{DSgray}{cmyk}{0,1,0,0}
\newcommand{\pref}[1]{\prettyref{#1}}
\newcommand{\savehyperref}[2]{\texorpdfstring{\hyperref[#1]{#2}}{#2}}
\def \epsilon {\varepsilon}
\title[Contextual Multinomial Logit Bandits with General Value Functions]{Contextual Multinomial Logit Bandits with General Value Functions}
\newcommand{\xmark}{\ding{55}}%
\newtheorem{lemma}[theorem]{Lemma}
\newtheorem{corollary}[theorem]{Corollary}
\begin{document}
\maketitle

\begin{abstract}%
Contextual multinomial logit (MNL) bandits capture many real-world assortment recommendation problems such as online retailing/advertising.
However, prior work has only considered (generalized) linear value functions, which greatly limits its applicability.
Motivated by this fact, in this work, we consider contextual MNL bandits with a general value function class that contains the ground truth, borrowing ideas from a recent trend of studies on contextual bandits.
Specifically, we consider both the stochastic and the adversarial settings, and propose a suite of algorithms, each with different computation-regret trade-off. 
When applied to the linear case, our results not only are the first ones with no dependence on a certain problem-dependent constant that can be exponentially large, but also enjoy other advantages such as computational efficiency, dimension-free regret bounds, or the ability to handle completely adversarial contexts and rewards.
\end{abstract}

\section{Introduction}\label{sec:intro}
As assortment recommendation becomes ubiquitous in real-world applications such as online retailing and advertising, the multinomial (MNL) bandit model has attracted great interest in the past decade since it was proposed by~\citet{rusmevichientong2010dynamic}.
It involves a learner and a customer interacting for $T$ rounds. At each round, knowing the reward/profit for each of the $N$ available items, the learner selects a subset/assortment of size at most $K$ and recommend it to the customer, who then purchases one of these $K$ items or none of them according to a multinomial logit model specified by the customer's valuation over the items.
The goal of the learner is to learn these unknown valuations over time and select the assortments with high reward.

To better capture practical applications where there is rich contextual information about the items and customers, a sequence of recent works study a contextual MNL bandit model where the customer's valuation is determined by the context via an unknown (generalized) linear function~\citep{cheung2017thompson, ou2018multinomial, chen2020dynamic,oh2019thompson,oh2021multinomial, perivier2022dynamic, agrawal2023tractable}.
However, there are no studies on general value functions, despite many recent breakthroughs for classic contextual multi-armed bandits using a general value function class with much stronger representation power that enables fruitful results in both theory and practice~\citep{agarwal2014taming, foster2020beyond,xu2020upper,foster2021efficient,simchi2021bypassing}.

\paragraph{Contributions.} 
Motivated by this gap,  we propose a contextual MNL bandit model with a general value function class that contains the ground truth (a standard realizability assumption), and develop a suite of algorithms for different settings and with different computation-regret trade-off.

More specifically, in \pref{sec:MNL_sto}, we first consider a stochastic setting where the context-reward pairs are i.i.d. samples of an unknown distribution. 
Following the work by~\citet{simchi2021bypassing} for contextual bandits, we reduce the problem to an easier offline log loss regression problem and propose two strategies using an offline regression oracle: one with simple and efficient uniform exploration, and another with more adaptive exploration (and hence improved regret) induced by a novel log-barrier regularized strategy.
Our results rely on several new technical findings, including a fast rate regression result (\pref{lem:stoOracle}), a ``reverse Lipschitzness'' for the MNL model (\pref{lem:square2log}), and a certain ``low-regret-high-dispersion'' property of the log-barrier regularized strategy (\pref{lem:lowRegretLowVar}).

Next, in \pref{sec:MNL_adv}, we switch to the more challenging adversarial setting where the context-reward pairs can be arbitrarily chosen.
We start by following the idea of~\citet{foster2020beyond,foster2021efficient} for contextual bandits and reducing our problem to online log loss regression, and show that it suffices find a strategy with a small Decision-Estimation Coefficient (DEC)~\citep{foster2020beyond, foster2021statistical}.
We then show that, somewhat surprisingly, the same log-barrier regularized strategy we developed for the stochastic setting leads to a small DEC, despite the fact that it is not the exact DEC minimizer (unlike its counterpart for contextual bandits~\citep{foster2020adapting}).
We prove this by using the same aforementioned low-regret-high-dispersion property, which to our knowledge is a new way to bound DEC and reveals why log-barrier regularized strategies work in different settings and for different problems.
Finally, we also extend the idea of Feel-Good Thompson Sampling~\citep{zhang2022feel} and propose a variant for our problem that leads to the best regret bounds in some cases, despite its lack of computational efficiency.

Throughout the paper, we use two running examples to illustrate the concrete regret bounds our different algorithms get: the finite class and the linear class.
In particular, for the linear class, this leads to five new results, summarized in \pref{tab:res} together with previous results.
These results all have their own advantages and disadvantages, but we highlight the following:
\begin{itemize}
\item While all previous regret bounds depend on a problem-dependent constant $\kappa$ that can be exponentially large in the norm of the weight vector $B$, \emph{none of our results depends on $\kappa$}. 
In fact, our best results (\pref{cor:fgts}) even has only logarithmic dependence on $B$, a potential \emph{doubly-exponential improvement} compared to prior works.\footnote{One caveat is that, following~\citet{agrawal2019mnl,dong2020multinomial,han2021adversarial}, we assume that no-purchase is always the most likely outcome by normalizing the range of the value function class to $[0,1]$, making it only a subclass of the one considered in previous works~\citep{oh2019thompson,chen2020dynamic,oh2021multinomial}. However, we emphasize that the bounds presented in \pref{tab:res} have been translated accordingly to fit our setting.}

\item The regret bounds of our two algorithms that reduce contextual MNL bandits to online regression are \emph{dimension-free}, despite not having the optimal $\sqrt{T}$-dependence (\pref{cor:corEpsAdv} and \pref{cor:dec_linear}).

\item Our results are the first to handle completely adversarial context-reward pairs.\footnote{\citet{agrawal2023tractable} also considered adversarial contexts and rewards, but there is a technical issue in their analysis as pointed out by the authors. 
Moreover, even if correct, their results still have $\kappa$ dependency while ours do not.}
\end{itemize}

\renewcommand{\arraystretch}{1.4}
\begin{table}[t]
\centering
\caption{Comparisons of results for contextual MNL bandits with $T$ rounds, $N$ items, size-$K$ assortments, and a $d$-dimensional linear value function class with norm bounded by $B$.
All previous results depend on a problem-dependent constant $\kappa$ that is $\exp(2B)$ in the worst case, while ours (in gray) do not. The notation $\otil(\cdot)$ hides logarithmic dependency on all parameters. In the last column, $\checkmark$ means polynomial runtime in all parameters; \halftick\xspace means polynomial only when $K$ is a constant; and \xmark~means not polynomial even for a small $K$.}
\begin{tabular}{|c|c|c|}
\hline
Context $x_t$ \& reward $r_t$                    & Regret               & Efficient?  \\ \hline
\multirow{2}{*}{\makecell{Stochastic $(x_t, r_t)$}}  & \fillcell $\otil((dBNK)^{\nicefrac{1}{3}}T^{\nicefrac{2}{3}})$~(\pref{cor:stoGreedyCor}) & \checkmark  \\ \cline{2-3} 
                             & \fillcell $\otil(K^2\sqrt{dBNT})$~(\pref{cor:stoOptCor})  & \checkmark\kern-1.1ex\raisebox{.7ex}{\rotatebox[origin=c]{125}{--}}   \\ \hline
 Adversarial $x_t$, $r_t \equiv \mathbf{1} $& $\otil(dK\sqrt{T/\kappa}+d^2K^4\kappa)$~\citep{perivier2022dynamic} & \xmark   \\ \hline
                             \multirow{3}{*}{\makecell{Stochastic $x_t$ \\ Adversarial $r_t$}} & $\otil(d\sqrt{T}+d^2K^2\kappa^4)$~\citep{chen2020dynamic} & \checkmark\kern-1.1ex\raisebox{.7ex}{\rotatebox[origin=c]{125}{--}}   \\ \cline{2-3} 
                             & $\otil(\kappa\sqrt{dT}+\kappa^4)$~\citep{oh2021multinomial}       & \xmark 
                             \\ \cline{2-3} 
                             & $\otil(d\sqrt{\kappa T}+\kappa^2)$~\citep{oh2021multinomial}        & \checkmark \\ \hline 
                             \multirow{3}{*}{Adversarial $(x_t, r_t)$} & \fillcell$\order((NKB)^{\nicefrac{1}{3}}T^{\nicefrac{5}{6}})$~(\pref{cor:corEpsAdv}) & \checkmark   \\ \cline{2-3}
                             & 
                             \fillcell$\order(K^2\sqrt{NB}T^{\nicefrac{3}{4}})$~(\pref{cor:dec_linear}) & \checkmark\kern-1.1ex\raisebox{.7ex}{\rotatebox[origin=c]{125}{--}} \\ \cline{2-3}  
                             & \fillcell$\otil(K^{2.5}\sqrt{dNT})$~(\pref{cor:fgts})     & \xmark \\ 
                             \hline
\end{tabular}\label{tab:res}
\end{table}

\paragraph{Related works.} 

The (non-contextual) MNL model was initially studied in \citet{rusmevichientong2010dynamic}, followed by a line of improvements~\citep{agrawal2016near,agrawal2017thompson,chen2018note,agrawal2019mnl,peeters2022continuous}. Specifically, \citet{agrawal2016near,agrawal2019mnl} introduced a UCB-type algorithm achieving $\otil(\sqrt{NT})$ regret and proved a lower bound of $\Omega(\sqrt{NT/K})$. Subsequently, \citet{chen2018note} enhanced the lower bound to $\Omega(\sqrt{NT})$, matching the upper bound up to logarithmic factors.

\citet{cheung2017thompson} first extended MNL bandits to its contextual version and designed a Thompson sampling based algorithm. Follow-up works consider this problem under different settings, including stochastic context~\citep{chen2020dynamic,oh2019thompson,oh2021multinomial}, adversarial context~\citep{ou2018multinomial,agrawal2023tractable}, and uniform reward over items~\citep{perivier2022dynamic}.
However, as mentioned, all these works consider (generalized) linear value functions, and our work is the first to consider contextual MNL bandits under a general value function class. 

Our work is also closely related to the recent trend of designing contextual bandits algorithms for a general function class. 
Due to space limit, we defer the discussion to \pref{app:related_work}.

\section{Notations and Preliminary}\label{sec:notation}
\paragraph{Notations.} Throughout this paper, we denote the set $\{1,2,\dots,N\}$ for some positive integer $N$ by $[N]$ and $\{0, 1,2,\dots,N\}$ by \NZ.
For a vector $u\in \R^N$, we use  $u_i$ to denote its $i$-th coordinate, and for a matrix $W\in \R^{N\times M}$, we use $W_j$ to denote its $j$-th column. 
For a set $\calS$, we denote
by $\Delta(\calS)$ the set of distributions over $\calS$, and by $\conv(\calS)$ the convex hull of $\calS$.  
Finally, for a distribution $\mu \in \Delta([N]_0)$ and an outcome $i \in [N]_0$, the corresponding log loss is $\ell_{\log}(\mu,i) = -\log \mu_i$.

We consider the following contextual MNL bandit problem that proceeds for $T$ rounds. 
At each round $t$, the learner receives a context $x_t\in \calX$ for some arbitrary context space $\calX$ and a reward vector $r_t\in[0,1]^N$ which specifics the reward of $N$ items. 
Then, out of these $N$ items, the learner needs to recommend a subset $S_t \subseteq \calS$ to a customer, where $\calS \subseteq 2^{[N]}$ is the collection of all subsets of $[N]$ with cardinality at least $1$ and at most $K$ for some $K\leq N$. 
Finally, the learner observes the customer purchase decision $i_t\in S_t\cup\{0\}$, where $0$ denotes the no-purchase option, and receives reward $r_{t, i_t}$, where for notational convenience we define $r_{t,0} = 0$ for all $t$ (no reward if no purchase).
The customer decision $i_t$ is assumed to follow an MNL model:
\begin{align}\label{eqn:MNL_prob}
\Pr[i_t=i\;|\;S_t,x_t] = \begin{cases}
        \frac{f_i^\star(x_t)}{1+\sum_{j\in S_t}f_j^\star(x_t)} & \mbox{if $i\in S_t$}, \\
        \frac{1}{1+\sum_{j\in S_t}f_j^\star(x_t)} & \mbox{if $i=0$}, \\
        0 & \mbox{otherwise},
    \end{cases}
\end{align}
where $f^\star: \calX \rightarrow [0,1]^N$ is an unknown value function, specifying the costumer's value for each item under the given context.
The MNL model above implicitly assumes a value of $1$ for the no-purchase option, making it the most likely outcome.
This is a standard assumption that holds in many realistic settings~\citep{agrawal2019mnl,dong2020multinomial,han2021adversarial}.

To simplify notation, we define $\mu: \calS \times [0,1]^N \rightarrow \Delta(\NZ)$ such that $\mu_i(S,v) \propto v_i \mathbf{1}[i\in S\cup\{0\}]$ with the convention $v_0 = 1$.
The purchase decision $i_t$ is thus sampled from the distribution $\mu(S_t, f^\star(x_t))$.
In addition, given a reward vector $r \in [0,1]^N$ (again, with convention $r_0=0$), we further define the expected reward of choosing subset $S \in \calS$ under context $x \in \calX$ as 
\begin{align*}
R(S,v,r)=\E_{i\sim\mu(S,v)}\left[r_{i}\right]=\sum_{i\in S}\mu_i(S,v)r_i=\frac{\sum_{i\in S}r_{i}v_i}{1+\sum_{i\in S}v_i}.
\end{align*}
The goal of the learner is then to minimize her regret, defined as the expected gap between her total reward and that of the optimal strategy with the knowledge of $f^\star$:
\begin{align*}
    \RegMNL = \E\left[\sum_{t=1}^T\max_{S\in\calS}R(S,f^\star(x_t),r_t) - \sum_{t=1}^{T}R(S_t,f^\star(x_t),r_t)\right].
\end{align*}
To ensure that no-regret is possible, we make the following assumption, which is standard in the literature of contextual bandits.

\begin{assumption}\label{asm:realizability}
    The learner is given a function class $\calF=\{f:\calX\rightarrow[0,1]^{N}\}$ which contains $f^\star$.
\end{assumption}

Our hope is thus to design algorithms whose regret is sublinear in $T$ and polynomial in $N$ and some standard complexity measure of the function class $\calF$.
So far, we have not specified how the context $x_t$ and the reward $x_t$ are chosen.
In the next two sections, we will discuss both the easier stochastic case where $(x_t, r_t)$ is jointly drawn from some fixed and unknown distribution, and the harder adversarial case where $(x_t, r_t)$ can be arbitrarily chosen by an adversary.

\section{Contextual MNL Bandits with Stochastic Contexts and Rewards}\label{sec:MNL_sto}

\begin{algorithm}[t]
\caption{Contextual MNL Algorithms with an Offline Regression Oracle}\label{alg:falcon-mnl}

Input: an offline regression oracle \OffAlg satisfying \pref{asm:gen_error}

Define: epoch schedule $\tau_0=0$ and $\tau_m=2^{m-1}-1$ for all $m = 1, 2, \ldots$.

\For{epoch $m=1,2,\dots$}{
    Feed $\{x_t,S_t,i_t\}_{t=\tau_{m-1}+1}^{\tau_{m}}$ to \OffAlg and obtain $f_{m}$.

    Define a stochastic policy $q_m:\calX\times[0,1]^N\rightarrow\Delta(\calS)$ via either \pref{eqn:qt_eps} or \pref{eqn:stoOptProb}.
    
    \For{$t=\tau_{m}+1,\cdots,\tau_{m+1}$}{
        Observe context $x_t \in \calX$ and reward vector $r_t\in[0,1]^N$.
        
        Sample $S_t\sim q_m(x_t, r_t)$ and recommend it to the customer.

        Observe customer's purchase decision $i_t\in S_t \cup\{0\}$, drawn according to \pref{eqn:MNL_prob}.
    }

}
\end{algorithm}

In this section, we consider contextual MNL bandits with stochastic contexts and rewards, where at each round $t\in[T]$, $x_t$ and $r_t$ are jointly drawn from a fixed and unknown distribution $\calD$.
Following the literature of contextual bandits, we aim to reduce the problem to an easier and better-studied offline regression problem and only access the function class $\calF$ through some offline regression oracle. Specifically, an offline regression oracle \OffAlg takes as input  a set of i.i.d. context-subset-purchase tuples and outputs a predictor from $\calF$ with low generalization error in terms of log loss, formally defined as follows.

\begin{assumption}\label{asm:gen_error}
Given $n$ samples $D=\{(x_k,S_k,i_k)\}_{k=1}^n$ where each $(x_k,S_k,i_k) \in \calX \times \calS \times [N]_0$ is an i.i.d. sample of some unknown distribution $\calH$ and the conditional distribution of $i_k$ is $\mu(S_k, f^\star(x_k))$, with probability at least $1-\delta$ the offline regression oracle \OffAlg outputs a function $\wh{f}_D\in\calF$ such that:
\begin{align}\label{eqn:gen_error}
    \E_{(x,S,i)\sim \calH} \left[\ell_{\log}(\mu(S,\wh{f}_D(x)), i) - \ell_{\log}(\mu(S,f^\star(x)), i)\right]\leq\ErrLog(n,\delta, \calF),
\end{align}
for some function $\ErrLog(n,\delta, \calF)$ that is non-increasing in $n$.
\end{assumption}

Given the similarity between MNL and multi-class logistic regression, assuming such a log loss regression oracle is more than natural.
Indeed, in the following lemma, we prove that for both the finite class and a certain linear function class, the empirical risk minimizer (ERM) not only satisfies this assumption, but also enjoys a fast $1/n$ rate. The proof is based on a simple observation that our loss function $\ell_{\log}(\mu(S, f(x)), i)$, when seen as a function of $f$, satisfies the so-called
strong $1$-central condition~\citep[Definition~7]{grunwald2020fast}, which might be of independent interest;
see \pref{app:offline_oracle} for details. 
\begin{restatable}{lemma}{stoOracle}\label{lem:stoOracle}
The ERM strategy $\wh{f}_D=\argmin_{f\in\calF}\sum_{(x,S,i)\in D}\ell_{\log}(\mu(S, f(x)), i)$ satisfies \pref{asm:gen_error} for the following two cases:
\begin{itemize}
\item (Finite class) $\calF$ is a finite class of functions with image $[\beta, 1]^N$ for some $\beta \in (0,1)$ and 
$\ErrLog(n,\delta, \calF)=\order\left(\frac{\log\frac{K}{\beta}\log\frac{|\calF|}{\delta}}{n}\right)$.

\item (Linear class) $\calX \subseteq \{x \in \mathbb{R}^{d \times N} \;|\; \|x_i\|_2\leq 1, \;\forall i\in[N]\}$, $\calF=\{f_{\theta,i}(x)=e^{\theta^\top x_i-B} \;|\; \|\theta\|_2\leq B\}$, and $\ErrLog(n,\delta, \calF)=\order\big(\frac{dB\log K\log B\log\frac{1}{\delta}}{n}\big)$, for some $B>0$.\footnote{We call this a linear class (even though it is technically log-linear) because, when combined with the MNL model \pref{eqn:MNL_prob}, it becomes the standard softmax model with linear policies. Also note that the bias term $-B$ in the exponent makes sure $f_\theta(x) \in [0,1]^N$.}
\end{itemize}    
\end{restatable}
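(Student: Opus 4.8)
The plan is to observe that, under realizability, the log loss satisfies the \emph{strong $1$-central condition}, and then convert this into a fast $1/n$ rate through a likelihood-ratio (Hellinger) argument: the finite class is handled by a union bound, and the linear class by covering the parameter ball and exploiting Lipschitzness of the loss in $\theta$.

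First I would establish the central condition. Fix a context-subset pair $(x,S)$ and abbreviate $p=\mu(S,f^\star(x))$ and $q=\mu(S,f(x))$, two distributions supported on $S\cup\{0\}$. A one-line computation gives $\E_{i\sim p}[q_i/p_i]=\sum_i q_i=1$, which is exactly the strong $1$-central condition for $\ell_{\log}(\mu(S,\cdot(x)),i)$ seen as a function of $f$; taking a square root instead yields the Hellinger version $\E_{i\sim p}[\sqrt{q_i/p_i}]=\sum_i\sqrt{p_iq_i}=1-\tfrac12 H^2(p,q)\le e^{-H^2(p,q)/2}$, where $H^2$ denotes the squared Hellinger distance. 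I would also record that the quantity to bound is precisely an expected KL divergence, $\E_{(x,S,i)\sim\calH}[\ell_{\log}(\mu(S,f(x)),i)-\ell_{\log}(\mu(S,f^\star(x)),i)]=\E_{x,S}[\KL(\mu(S,f^\star(x))\,\|\,\mu(S,f(x)))]$.

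For the finite class, set $W_k=\sqrt{\mu_{i_k}(S_k,f(x_k))/\mu_{i_k}(S_k,f^\star(x_k))}$. By the Hellinger identity and independence, $\E[\prod_k W_k]\le e^{-n h_f/2}$ with $h_f:=\E_{x,S}[H^2(\mu(S,f^\star),\mu(S,f))]$; Markov's inequality together with a union bound over $\calF$ then gives, with probability $1-\delta$ and simultaneously for every $f$, $\sum_k\log W_k\le -\tfrac{n}{2}h_f+\log\frac{|\calF|}{\delta}$. Since $\sum_k\log W_k=-\tfrac12\sum_k(\ell_{\log}(\mu(S_k,f(x_k)),i_k)-\ell_{\log}(\mu(S_k,f^\star(x_k)),i_k))$ and the ERM makes this empirical excess loss nonpositive, substituting $f=\wh{f}_D$ yields $h_{\wh f_D}\le \frac{2\log(|\calF|/\delta)}{n}$. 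I would then convert Hellinger back to KL: because every $f\in\calF$ has image in $[\beta,1]^N$, the likelihood ratios $\mu_i(S,f^\star(x))/\mu_i(S,f(x))$ are bounded by $O(K/\beta)$, and the elementary inequality $r\log r\le O(\log\tfrac{K}{\beta})(\sqrt r-1)^2$ over this bounded range gives $\KL(p\|q)\le O(\log\tfrac{K}{\beta})\,H^2(p,q)$ pointwise, hence $\E[\KL]\le O(\tfrac{\log\frac{K}{\beta}\log(|\calF|/\delta)}{n})$, matching the claim. For the linear class I would reduce to this finite case by covering: here $f_{\theta,i}(x)=e^{\theta^\top x_i-B}$ has image in $[e^{-2B},1]$, so $\beta=e^{-2B}$ and the range factor is $O(B)$; the loss $-\log\mu_i(S,f_\theta(x))=-\theta^\top x_i+B+\log(1+\sum_{j\in S}f_{\theta,j}(x))$ is $2$-Lipschitz in $\theta$ (its gradient $-x_i+\sum_j\mu_j(S,f_\theta)x_j$ has norm at most $2$ since $\|x_i\|_2\le 1$), so an $\epsilon$-net of $\{\theta:\|\theta\|_2\le B\}$ of size $(3B/\epsilon)^d$ approximates both empirical and expected losses up to $O(\epsilon)$; applying the finite-class bound to the net with $\epsilon=1/n$ to absorb the approximation error produces the parametric rate $O(\tfrac{dB\,\mathrm{polylog}(B,K,n)\log(1/\delta)}{n})$, i.e.\ the stated $\ErrLog$.

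I expect the main obstacle to be the Hellinger-to-KL conversion and the associated control of the range dependence. The naive route through $\chi^2$ inflates the range to $O(K/\beta)$, which for the linear class is $e^{2B}$ and would ruin the bound; it is essential instead to use the sharper pointwise estimate of $r\log r/(\sqrt r-1)^2$, which only costs a logarithm of the likelihood-ratio range and hence keeps the dependence at $\log\tfrac{1}{\beta}=O(B)$ rather than $e^{2B}$. The secondary challenge is bookkeeping in the covering step: one must verify that it yields the claimed $\log B$ and $\log K$ polylogarithmic factors rather than a spurious $\log n$, which a chaining refinement (or a more careful choice of net scale) removes if needed.
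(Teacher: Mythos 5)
Your proposal is correct in substance and arrives at the stated bounds, but it takes a more self-contained route than the paper. The paper's proof has exactly two ingredients: the same verification of the strong $1$-central condition that you give (via $\E_{i\sim\mu(S,f^\star(x))}\left[\mu_i(S,f(x))/\mu_i(S,f^\star(x))\right]=1$), and a bound on the range of the loss ($\log\frac{K+1}{\beta}$ for the finite class; $2B+2\log K$ plus a gradient bound of $2$, hence a covering number $(16B/\epsilon)^d$, for the linear class). It then invokes Theorems 7.6 and 7.7 of van Erven et al.\ (2015) as black boxes to convert ``central condition $+$ bounded range / bounded covering number'' into the $1/n$ ERM rate. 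You instead unpack that black box: the Hellinger moment bound $\E_{i\sim p}[\sqrt{q_i/p_i}]\le e^{-H^2(p,q)/2}$, Markov plus a union bound over the class (or the net), nonpositivity of the ERM's empirical excess loss, and the comparison $\KL(p\Vert q)\le O(\log\lVert p/q\rVert_\infty)\,H^2(p,q)$. This is essentially the argument underlying the cited theorems, so the two proofs buy the same thing; yours is longer but elementary and makes explicit where the $\log\frac{K}{\beta}$ factor (hence the crucial polynomial-in-$B$ rather than $e^{B}$ dependence) enters, which the paper leaves hidden inside the citation.

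Two repairable slips. First, the inequality $r\log r\le O(\log R)(\sqrt r-1)^2$ is false pointwise near $r=1$ (the left side is first order in $r-1$, the right side second order); you must apply it to $\phi(r)=r\log r-r+1\ge 0$, which is legitimate since $\sum_i q_i(r_i-1)=0$ implies $\KL(p\Vert q)=\sum_i q_i\phi(r_i)$, and $\phi(r)\le(2+\log R)(\sqrt r-1)^2$ on $[0,R]$ does hold. Second, in the covering step the ERM's empirical excess loss is only $\le 2n\epsilon$ (not $\le 0$) after rounding to the net, and the choice $\epsilon=1/n$ that absorbs this leaves a $\log(Bn)$ in place of the paper's $\log B$; as you note, this is cosmetic and is swallowed by the $\log T$ factors in the downstream corollaries.
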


Given \OffAlg, we outline a natural algorithm framework that proceeds in epochs with exponentially increasing length (see \pref{alg:falcon-mnl}):
At the beginning of each epoch $m$, the algorithm feeds all the context-subset-purchase tuples from the last epoch to the offline regression oracle \OffAlg and obtains a value predictor $f_m$.
Then, it decides in some way using $f_m$ a stochastic policy $q_m$, which maps a context $x$ and a reward vector $r\in[0,1]^N$ to a distribution over $\calS$. 
With such a policy in hand, for every round $t$ within this epoch, the algorithm simply samples a subset $S_t$ according to $q_m(x_t, r_t)$ and recommend it to the customer.

We will specify two concrete stochastic policies $q_m$ in the next two subsections.
Before doing so, we highlight some key parts of the analysis that shed light on how to design a ``good'' $q_m$.
The first step is an adaptation of~\citet[Lemma~7]{simchi2021bypassing}, which quantifies the expected reward difference of any policy under the ground-truth value function $f^\star$ versus the estimated value function $f_m$.
Specifically, for a deterministic policy $\pi: \calX \times [0,1]^N \rightarrow \calS$ mapping from a context-reward pair to a subset,
we define its true expected reward and its expected reward under $f_m$ respectively as (overloading the notation $R$):
\begin{align}\label{eqn:expected_reward}
    R(\pi) = \E_{(x,r)\sim \calD}\left[R(\pi(x,r),f^\star(x),r)\right], ~~~ R_m(\pi) = \E_{(x,r)\sim\calD}\left[R(\pi(x,r),f_m(x),r)\right].
\end{align}
Moreover, for any $\rho\in\Delta(\calS)$, define $w(\rho)\in [0,1]^N$ such that $w_i(\rho)=\sum_{S\in\calS: i\in S}\rho(S)$ is the probability of item $i$ being selected under distribution $\rho$,
and for any stochastic policy $q$, further define a dispersion measure for a deterministic policy $\pi$ as 
$
V(q,\pi)=\E_{(x,r)\sim \calD}\left[\sum_{i\in \pi(x,r)}\frac{1}{w_i(q(x,r))}\right]
$ (the smaller $V(q, \pi)$ is, the more disperse the distribution induced by $q$ is).
Using the Lipschitzness (in $v$) of the reward function $R(S,v,r)$ (\pref{lem:Lipschitz}), we prove the following.%
\begin{restatable}{lemma}{varianceBound}\label{lem:variance_bound}
    For any deterministic policy $\pi: \calX \times [0,1]^N \rightarrow \calS$ and any epoch $m\geq 2$, we have
    \begin{align*}
        \left|R_m(\pi)-R(\pi)\right| \leq \sqrt{V(q_{m-1},\pi)} \cdot\sqrt{\E_{(x,r)\sim\calD, S\sim q_{m-1}(x,r)}  \left[\sum_{i\in S}\left(f_{m,i}(x) - f_i^\star(x)\right)^2\right]}.
    \end{align*}
\end{restatable}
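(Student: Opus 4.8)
The plan is to reduce everything to a two-step Cauchy--Schwarz argument after peeling off the Lipschitzness of the reward function. First I would write $R_m(\pi)-R(\pi)=\E_{(x,r)\sim\calD}\left[R(\pi(x,r),f_m(x),r)-R(\pi(x,r),f^\star(x),r)\right]$ and pass the absolute value inside by the triangle inequality (Jensen). For each fixed $(x,r)$ the integrand is the gap of $R(S,\cdot,r)$ evaluated at the same subset $S=\pi(x,r)$ but at the two value vectors $f_m(x)$ and $f^\star(x)$, so the Lipschitz bound \pref{lem:Lipschitz} should control it by $\sum_{i\in\pi(x,r)}\abs{f_{m,i}(x)-f^\star_i(x)}$. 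One can check directly that the per-coordinate derivative of $R(S,v,r)=\frac{\sum_{i\in S}r_i v_i}{1+\sum_{i\in S}v_i}$ has magnitude at most $1$ (since $r_i,v_i\in[0,1]$ and the denominator is at least $1$), which is exactly why no spurious constant appears in front of the final bound.

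Next I would insert the dispersion weights. Rewriting each summand as $\frac{1}{\sqrt{w_i(q_{m-1}(x,r))}}\cdot\sqrt{w_i(q_{m-1}(x,r))}\,\abs{f_{m,i}(x)-f^\star_i(x)}$ and applying Cauchy--Schwarz over the coordinates $i\in\pi(x,r)$ yields, for each $(x,r)$, the product $\sqrt{\sum_{i\in\pi(x,r)}\tfrac{1}{w_i(q_{m-1}(x,r))}}\cdot\sqrt{\sum_{i\in\pi(x,r)}w_i(q_{m-1}(x,r))\,(f_{m,i}(x)-f^\star_i(x))^2}$. Taking $\E_{(x,r)\sim\calD}$ and applying Cauchy--Schwarz a second time, now over the distribution $\calD$, separates the two factors into $\sqrt{V(q_{m-1},\pi)}$ (which is precisely the definition of $V$) times the square root of $\E_{(x,r)\sim\calD}\left[\sum_{i\in\pi(x,r)}w_i(q_{m-1}(x,r))\,(f_{m,i}(x)-f^\star_i(x))^2\right]$.

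The final step is to recognize the weighted coordinate sum as a sampling expectation. Since $w_i(q_{m-1}(x,r))=\Pr_{S\sim q_{m-1}(x,r)}[i\in S]$, for any nonnegative $g_i$ we have the identity $\sum_{i\in[N]}w_i(q_{m-1}(x,r))\,g_i=\E_{S\sim q_{m-1}(x,r)}\left[\sum_{i\in S}g_i\right]$. Enlarging the restricted sum over $i\in\pi(x,r)$ to all of $[N]$ (which only increases it, as every term is nonnegative) and applying this identity with $g_i=(f_{m,i}(x)-f^\star_i(x))^2$ turns the second factor into $\E_{(x,r)\sim\calD,\,S\sim q_{m-1}(x,r)}\left[\sum_{i\in S}(f_{m,i}(x)-f^\star_i(x))^2\right]$, which is exactly the quantity in the statement. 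The hypothesis $m\ge2$ is used only to guarantee that the previous-epoch policy $q_{m-1}$ is defined.

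The argument is mostly bookkeeping, so the points to get right are the two Cauchy--Schwarz applications in the correct order---the inner one over the coordinates $i\in\pi(x,r)$ and the outer one over $\calD$---together with the conversion of the $w_i$-weighted coordinate sum into an expectation over $S\sim q_{m-1}$. The one genuinely quantitative place is verifying that the reward function is $1$-Lipschitz per coordinate, since a looser Lipschitz constant would propagate all the way into the final bound; this is where I expect I would need to be most careful.
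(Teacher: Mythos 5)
Your proposal matches the paper's proof essentially step for step: the same Lipschitz bound from \pref{lem:Lipschitz}, the same inner Cauchy--Schwarz over coordinates with the $1/\sqrt{w_i}\cdot\sqrt{w_i}$ split, the same outer Cauchy--Schwarz over $\calD$, and the same identity converting the $w_i$-weighted sum into an expectation over $S\sim q_{m-1}(x,r)$. The only cosmetic difference is that the paper carries an indicator $\mathbbm{1}\{i\in\pi(x,r)\}$ inside a sum over all of $[N]$ where you restrict the sum and enlarge it at the end; the argument is identical.
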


If the learner could observe the true value of each item in the selected subset (or its noisy version), then doing squared loss regression on these values would make the squared loss term in \pref{lem:variance_bound} small; this is essentially the case in the contextual bandit problem studied by~\citet{simchi2021bypassing}.
However, in our problem, only the purchase decisions are observed but not the true values that define the MNL model.
Nevertheless, one of our key technical contributions is to show that the offline log-loss regression, which only relies on observing the purchase decisions, in fact also makes sure that the squared loss above is small.
\begin{restatable}{lemma}{squareTwoLog}\label{lem:square2log}
    For any $S\in\calS$ and $v, v^\star \in [0,1]^N$, we have
\begin{align*}
\frac{1}{2(K+1)^4}\sum_{i\in S} (v_i-v^\star_i)^2 &\leq 
\|\mu(S, v) - \mu(S, v^\star) \|_2^2 \\
&\leq 2 \E_{i \sim \mu(S, v^\star)}\left[\ell_{\log}(\mu(S,v), i) - \ell_{\log}(\mu(S,v^\star), i)\right].
\end{align*}
\end{restatable}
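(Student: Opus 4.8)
The statement packages two inequalities, and I would handle them separately. For the \textbf{upper bound}, the key observation is that the right-hand side is exactly a KL divergence: writing out $\ell_{\log}(\mu,i) = -\log\mu_i$ gives
\[
\E_{i\sim\mu(S,v^\star)}\!\left[\ell_{\log}(\mu(S,v),i)-\ell_{\log}(\mu(S,v^\star),i)\right] = \KL\!\left(\mu(S,v^\star)\,\|\,\mu(S,v)\right).
\]
Since the two distributions are supported on $S\cup\{0\}$, I would then simply chain $\norm{\mu(S,v)-\mu(S,v^\star)}_2^2 \le \norm{\mu(S,v)-\mu(S,v^\star)}_1^2$ (using $\norm{\cdot}_2\le\norm{\cdot}_1$) with Pinsker's inequality $\norm{P-Q}_1^2 \le 2\,\KL(P\|Q)$, which yields the claim immediately. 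This half is routine.

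The \textbf{lower bound} is the substantive ``reverse Lipschitzness'' and is where the real work lies. The difficulty is that the normalizer $Z(v)=1+\sum_{j\in S}v_j$ couples all coordinates, so a forward coordinatewise Lipschitz argument does not directly invert. My plan is to invert the MNL map explicitly: since $\mu_0(S,v)=1/Z(v)>0$ and $\mu_i(S,v)=v_i/Z(v)$, we recover $v_i=\mu_i(S,v)/\mu_0(S,v)$ for each $i\in S$. Abbreviating $\mu_i=\mu_i(S,v)$ and $\mu_i^\star=\mu_i(S,v^\star)$, I would write the parameter difference as a single ratio and then split its numerator:
\[
v_i-v_i^\star=\frac{\mu_i\mu_0^\star-\mu_i^\star\mu_0}{\mu_0\mu_0^\star}=\frac{\mu_i}{\mu_0}\cdot\frac{\mu_0^\star-\mu_0}{\mu_0^\star}+\frac{\mu_i-\mu_i^\star}{\mu_0^\star}.
\]
The point of this decomposition is that every factor is now uniformly controlled: $\mu_i/\mu_0=v_i\le 1$, and crucially $1/\mu_0^\star=Z(v^\star)\le K+1$ because each $v_j^\star\le 1$ and $|S|\le K$. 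This uniform lower bound $\mu_0^\star\ge 1/(K+1)$ is precisely where the modeling assumption that the no-purchase value equals $1$ (so that $Z$ never exceeds $K+1$) enters the argument.

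From here the rest is bookkeeping. The display above gives $|v_i-v_i^\star|\le (K+1)\big(|\mu_0-\mu_0^\star|+|\mu_i-\mu_i^\star|\big)$; squaring, using $(a+b)^2\le 2a^2+2b^2$, then summing over the at most $K$ indices $i\in S$ and bounding both $\sum_{i\in S}(\mu_i-\mu_i^\star)^2$ and $(\mu_0-\mu_0^\star)^2$ by $\norm{\mu(S,v)-\mu(S,v^\star)}_2^2$, yields $\sum_{i\in S}(v_i-v_i^\star)^2\le 2(K+1)^3\,\norm{\mu(S,v)-\mu(S,v^\star)}_2^2$, which is even slightly stronger than the claimed $(K+1)^4$ factor and hence implies it. I expect the main obstacle to be conceptual rather than computational: recognizing that one should invert the map and isolate $\mu_0^\star$ as the quantity to lower bound, after which the uniform bound coming from the normalization makes the estimate fall out cleanly.
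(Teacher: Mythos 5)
Your proof is correct, and for the substantive half it takes a genuinely different route from the paper. The upper bound is handled identically in both: rewrite the log-loss gap as $\KL(\mu(S,v^\star)\,\|\,\mu(S,v))$, apply Pinsker, and use $\norm{\cdot}_2\le\norm{\cdot}_1$. For the lower bound, however, the paper works with the forward map $h(a)=a/(1+\one^\top a)$: it writes $h(a)-h(b)$ via the Jacobian $H(z)$, computes $H(z)^{-1}$ by the Sherman--Morrison formula, and bounds the eigenvalues of $H(z)^{-1}H(z)^{-\top}$ to lower-bound the minimum singular value of $H(z)$ by $\frac{1}{\sqrt{2}(d+1)^2}$, yielding the $2(d+1)^4$ constant. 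You instead invert the map explicitly via $v_i=\mu_i/\mu_0$, split the resulting ratio into two terms each controlled by the uniform bounds $v_i\le 1$ and $1/\mu_0^\star\le K+1$, and conclude by Cauchy--Schwarz-type bookkeeping. Your identity and all the estimates check out (in particular $\norm{\mu(S,v)-\mu(S,v^\star)}_2^2=(\mu_0-\mu_0^\star)^2+\sum_{i\in S}(\mu_i-\mu_i^\star)^2$ since both distributions are supported on $S\cup\{0\}$), and you indeed obtain the slightly sharper constant $2(K+1)^3$ in place of $2(K+1)^4$. Your argument is more elementary --- no linear algebra, no matrix inversion, and no appeal to a mean-value-type representation of a vector-valued map --- and it makes transparent exactly where the normalization $v_0=1$, hence $\mu_0^\star\ge 1/(K+1)$, enters; the paper's singular-value approach is arguably more systematic and would extend to link functions whose inverse is not available in closed form, but for this particular map your explicit inversion is the cleaner path.
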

The first equality establishes certain ``reverse Lipschitzness'' of $\mu$ and is proven by providing a universal lower bound on the minimum singular value of its Jacobian matrix, which is new to our knowledge.
It implies that if two value vectors induce a pair of close distributions, then they must be reasonably close as well.
The second equality, proven using known facts, further states that to control the distance between two distributions, it suffices to control their log loss difference, which is exactly the job of the offline regression oracle.

Therefore, combining \pref{lem:variance_bound} and \pref{lem:square2log}, we see that to design a good algorithm, it suffices to find a stochastic policy that ``mostly'' follows $\argmax_S R(S, f_m(x_t), r_t)$, the best decision according to the oracle's prediction,
and at the same time ensures high dispersion for all $\pi$ such that the oracle's predicted reward for any policy is close to its true reward.
The design of our two algorithms in the remaining of this section follows exactly this principle.
\subsection{A Simple and Efficient Algorithm via Uniform Exploration}\label{sec:stoGreedy}

As a warm-up, we first introduce a simple but efficient $\epsilon$-greedy-type algorithm that ensures reasonable dispersion by uniformly exploring all the singleton sets.
Specifically, at epoch $m$, given the value predictor $f_m$ from \OffAlg, $q_m(x,r)\in \Delta(\calS)$ is defined as follows for some $\epsilon_m>0$:
\begin{align}\label{eqn:qt_eps}
    q_m(S|x,r) = (1-\epsilon_m)\mathbbm{1}\left[S=\argmax_{S^\star\in\calS}R(S^\star,f_m(x),r)\right]+\frac{\epsilon_m}{N}\sum_{i=1}^N\mathbbm{1}\left[S=\{i\}\right].
\end{align}
In other words, with probability $1-\epsilon$, the learner picks the subset achieving the maximum reward based on the reward vector $r$ and the predicted value $f_m(x)$; with the remaining $\epsilon$ probability, the learner selects a uniformly random item $i\in [N]$ and recommend only this item, which clearly ensures $V(q_m, \pi) \leq \frac{KN}{\epsilon_m}$ for any $\pi$.
Based on our previous analysis, it is straightforward to prove the following regret guarantee.
\begin{restatable}{theorem}{epsGreedySto}\label{thm:epsGreedySto}
    Under \pref{asm:realizability} and \pref{asm:gen_error}, \pref{alg:falcon-mnl} with $q_m$ defined in \pref{eqn:qt_eps} and 
    the optimal choice of $\epsilon_m$ ensures 
$\RegMNL=\sum_{m=1}^{\lceil\log_2T\rceil}\order\left(2^{m}(NK\ErrLog(2^{m-1},1/T^2,\calF))^{\frac{1}{3}}\right)$.
\end{restatable}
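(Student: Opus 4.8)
The plan is to follow the epoch-based regression reduction of \citet{simchi2021bypassing}, now instantiated with the two MNL-specific lemmas already established. Since $(x_t,r_t)$ are i.i.d.\ from $\calD$ and $q_m$ is fixed throughout epoch $m$, the expected regret decomposes (by the tower rule, conditioning on the history at the start of each epoch) into a sum over epochs, and within epoch $m$ every round contributes the same conditional per-round regret $R(\pi^\star)-\E_{(x,r)\sim\calD,\,S\sim q_m(x,r)}[R(S,f^\star(x),r)]$, where $\pi^\star(x,r)=\argmax_S R(S,f^\star(x),r)$ and I adopt the notation of \pref{eqn:expected_reward}. Let $\pi_m(x,r)=\argmax_S R(S,f_m(x),r)$ be the greedy policy under the oracle's prediction. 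First I would separate exploration cost from exploitation error: because $q_m$ plays $\pi_m$ with probability $1-\epsilon_m$ and $R\in[0,1]$, the per-round regret is at most $\big(R(\pi^\star)-R(\pi_m)\big)+\epsilon_m$. To control the first term I would insert $R_m$ and use that $\pi_m$ maximizes $R_m$ pointwise, so $R_m(\pi^\star)-R_m(\pi_m)\le 0$ and hence $R(\pi^\star)-R(\pi_m)\le |R_m(\pi^\star)-R(\pi^\star)|+|R_m(\pi_m)-R(\pi_m)|$.

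Next I would bound each ``prediction-vs-truth'' gap by chaining \pref{lem:variance_bound} with \pref{lem:square2log}. For $\pi\in\{\pi^\star,\pi_m\}$, \pref{lem:variance_bound} gives $|R_m(\pi)-R(\pi)|\le\sqrt{V(q_{m-1},\pi)}\cdot\sqrt{\E_{(x,r),\,S\sim q_{m-1}}\big[\sum_{i\in S}(f_{m,i}(x)-f^\star_i(x))^2\big]}$, while the uniform-exploration design guarantees $V(q_{m-1},\pi)\le KN/\epsilon_{m-1}$. The key reverse-Lipschitz step is \pref{lem:square2log}, which upper bounds the squared-error integrand by $4(K+1)^4$ times the per-sample log-loss excess of $f_m$ over $f^\star$. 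The crucial observation is that, conditioned on the history at the start of epoch $m$, the tuples collected during epoch $m-1$ are i.i.d.\ from exactly the distribution $(x,r)\sim\calD,\ S\sim q_{m-1}(x,r),\ i\sim\mu(S,f^\star(x))$ that defines $f_m$'s training set; hence \pref{asm:gen_error} applies \emph{conditionally} and bounds the expected log-loss excess by $\ErrLog(n_{m-1},\delta,\calF)$ with $n_{m-1}$ the length of epoch $m-1$. Combining these, on the corresponding good event $|R_m(\pi)-R(\pi)|\le 2(K+1)^2\sqrt{KN\,\ErrLog(n_{m-1},\delta,\calF)/\epsilon_{m-1}}$.

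Finally I would sum over epochs and optimize the exploration schedule. Each $\epsilon_m$ enters the bound in exactly two places: the exploration cost $\order(\epsilon_m)$ paid in all $2^{m-1}$ rounds of epoch $m$, and the estimation error of the \emph{next} epoch, which through \pref{lem:variance_bound} scales as $\sqrt{V(q_m,\cdot)}\lesssim\sqrt{KN/\epsilon_m}$ and uses $f_{m+1}$ trained on the $n_m=2^{m-1}$ samples epoch $m$ produces. Collecting the terms that depend on $\epsilon_m$ gives, up to constants, $2^{m-1}\big(\epsilon_m+(K+1)^2\sqrt{KN\,\ErrLog(2^{m-1},\delta,\calF)}\,\epsilon_m^{-1/2}\big)$; minimizing over $\epsilon_m$ balances the two terms and yields a per-epoch bound of order $2^{m-1}\big(NK\,\ErrLog(2^{m-1},\delta,\calF)\big)^{1/3}$ (the reverse-Lipschitz factor $(K+1)^{4/3}$ from \pref{lem:square2log} being subsumed by $\order$). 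Summing over $m=1,\dots,\lceil\log_2 T\rceil$ reproduces the claimed expression, with the single-round first epoch absorbed into $\order(\cdot)$. To pass from the high-probability oracle guarantee to the stated \emph{expected} regret, I would set $\delta=1/T^2$, union-bound over the $\le\lceil\log_2 T\rceil+1$ epochs, and note that on the failure event the regret is trivially at most $T$, contributing only $T\cdot(\lceil\log_2 T\rceil+1)/T^2=o(1)$.

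I expect the only real obstacle to be bookkeeping rather than any new idea: correctly aligning the epoch indices (the dispersion used in epoch $m$ comes from $q_{m-1}$, whereas $f_m$ is trained on epoch $m-1$'s data), and carefully justifying that \pref{asm:gen_error} may be invoked conditionally on each epoch's starting history so that the i.i.d.\ requirement on the oracle's input is met. Once the two technical lemmas are in place, the remaining steps are the mechanical decomposition of the previous paragraph, the one-line optimization of $\epsilon_m$, and the standard high-probability-to-expectation conversion.
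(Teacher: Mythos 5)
Your overall architecture matches the paper's: epoch-wise decomposition, separating the $\epsilon_m$ exploration cost from the exploitation error, inserting $R_m$ and using $R_m(\pi_{f_m})\ge R_m(\pi_{f^\star})$, invoking the oracle guarantee conditionally on each epoch's starting history with $\delta=1/T^2$ and a union bound over epochs, and balancing $\epsilon_m$ against the resulting estimation error. The epoch-index bookkeeping you flag is indeed the only bookkeeping issue, and your handling of it is fine.

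There is, however, one genuine gap: your bound loses a factor of $(K+1)^{4/3}$ relative to the stated theorem, and your claim that this factor is ``subsumed by $\order(\cdot)$'' is not tenable, since the theorem (and \pref{cor:stoGreedyCor}, and \pref{tab:res}) explicitly tracks the $K$-dependence as $K^{1/3}$. Concretely, you chain \pref{lem:variance_bound} with the general reverse-Lipschitz inequality of \pref{lem:square2log}, which converts the log-loss excess into squared value error at the cost of the constant $2(K+1)^4$; after optimizing $\epsilon_m + (K+1)^2\sqrt{NK\,\ErrLog}\,\epsilon_m^{-1/2}$ this becomes $(K+1)^{4/3}(NK\,\ErrLog)^{1/3}$ per round, i.e.\ a $K^{5/3}$ dependence overall rather than $K^{1/3}$. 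The paper avoids this through the refined \pref{lem:variance_bound_eps}: because the uniform-exploration mass of $q_{m-1}$ sits entirely on \emph{singleton} sets, one lower-bounds $\E_{S\sim q_{m-1}}\left[\norm{\mu(S,f_m(x))-\mu(S,f^\star(x))}_2^2\right]$ by keeping only the singleton contributions, each carrying mass $\epsilon_{m-1}/N$, and then applies \pref{lem:MNL_square_error} with $d=1$, whose reverse-Lipschitz constant is the absolute constant $32$ rather than $2(K+1)^4$. This yields $\abs{R_m(\pi)-R(\pi)}\le 8\sqrt{NK\,\ErrLog/\epsilon_{m-1}}$ with no $K$-dependent prefactor, and the optimization then gives exactly $\order\bigl(2^m(NK\,\ErrLog)^{1/3}\bigr)$. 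Your argument as written proves only the weaker bound with the extra $K^{4/3}$; to recover the stated result you need to exploit the singleton structure of the exploration distribution in this way.
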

To better interpret this regret bound, we consider the finite class and the linear class discussed in \pref{lem:stoOracle}. Combining it with \pref{thm:epsGreedySto}, we immediately obtain the following corollary:
\begin{corollary}\label{cor:stoGreedyCor}
Under \pref{asm:realizability}, \pref{alg:falcon-mnl} with $q_m$ defined in \pref{eqn:qt_eps}, 
the optimal choice of $\epsilon_m$,
and ERM as \OffAlg ensures the following regret bounds for the finite class and the linear class discussed in \pref{lem:stoOracle}:
\begin{itemize}
    \item (Finite class) $\RegMNL=\order\left((NK\log\frac{K}{\beta}\log(|\calF|T))^{\frac{1}{3}}T^{\frac{2}{3}}\right)$;
    \item (Linear class) $\RegMNL=\order\left((dBNK\log K)^{\frac{1}{3}}T^{\frac{2}{3}}\log B\log T\right)$.
\end{itemize}
\end{corollary}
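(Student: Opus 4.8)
The plan is to directly substitute the two expressions for $\ErrLog$ from \pref{lem:stoOracle} into the per-epoch regret bound of \pref{thm:epsGreedySto}, namely $\RegMNL=\sum_{m=1}^{\lceil\log_2T\rceil}\order\big(2^{m}(NK\,\ErrLog(2^{m-1},1/T^2,\calF))^{1/3}\big)$, and then evaluate the resulting geometric sum. The key observation throughout is that, since $\ErrLog(n,\cdot,\cdot)$ scales like $1/n$ in both cases, the term $\ErrLog(2^{m-1},\cdot,\cdot)$ contributes a factor $2^{-(m-1)/3}$, so each summand grows like $2^{m}\cdot 2^{-(m-1)/3}=2^{(2m+1)/3}$. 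Because the common ratio $2^{2/3}>1$, the sum is dominated (up to a constant factor) by its last term at $m=\lceil\log_2 T\rceil$, and using $2^{\lceil\log_2 T\rceil}\le 2T$ converts this into the claimed $T^{2/3}$ dependence.

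For the finite class, I would plug $n=2^{m-1}$ and $\delta=1/T^2$ into $\ErrLog(n,\delta,\calF)=\order\big(\tfrac{\log(K/\beta)\log(|\calF|/\delta)}{n}\big)$, giving $\ErrLog(2^{m-1},1/T^2,\calF)=\order\big(\tfrac{\log(K/\beta)\log(|\calF|T^2)}{2^{m-1}}\big)$; absorbing $\log(|\calF|T^2)=\order(\log(|\calF|T))$ and pulling the polylogarithmic factor out of the cube root, the $m$-th summand becomes $\order\big(2^{(2m+1)/3}\,(NK\log(K/\beta)\log(|\calF|T))^{1/3}\big)$. Summing the geometric series then yields $\order\big((NK\log(K/\beta)\log(|\calF|T))^{1/3}T^{2/3}\big)$ as stated.

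For the linear class, the only change is the form of $\ErrLog$: substituting $n=2^{m-1}$ and $\delta=1/T^2$ into $\order\big(\tfrac{dB\log K\log B\log(1/\delta)}{n}\big)$ and using $\log(1/\delta)=2\log T$ gives $\ErrLog(2^{m-1},1/T^2,\calF)=\order\big(\tfrac{dB\log K\log B\log T}{2^{m-1}}\big)$. The identical geometric-sum argument then produces $\order\big((dBNK\log K\log B\log T)^{1/3}T^{2/3}\big)$, and the stated form $\order\big((dBNK\log K)^{1/3}T^{2/3}\log B\log T\big)$ follows immediately since $(\log B\log T)^{1/3}\le \log B\log T$ for $B,T\ge 2$.

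Since \pref{thm:epsGreedySto} already folds the high-probability failure of the oracle (at confidence $1-1/T^2$ per epoch) into its statement, no additional union bound or failure-event bookkeeping is needed here. Consequently there is no genuinely hard step: the entire argument is the substitution plus the observation that a geometric series with ratio $2^{2/3}$ is controlled by its final term. The only point requiring a little care is to confirm that the growth really is geometric with ratio exceeding one --- i.e. that the $2^{m}$ from the epoch length dominates the $2^{-(m-1)/3}$ decay of $\ErrLog$ --- so that the sum concentrates on the largest epoch rather than spreading across all epochs.
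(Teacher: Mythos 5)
Your proposal is correct and matches the paper's (implicit) argument exactly: the paper gives no separate proof of this corollary, stating only that it follows by combining \pref{lem:stoOracle} with \pref{thm:epsGreedySto}, which is precisely your substitution of the two $\ErrLog$ rates followed by summing the geometric series with ratio $2^{2/3}$ dominated by the final epoch. Your observation that the stated linear-class bound is a slight loosening of the computed $(dBNK\log K\log B\log T)^{1/3}T^{2/3}$ is also accurate.
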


While these $\otil(T^{\nicefrac{2}{3}})$ regret bounds are suboptimal, \pref{thm:epsGreedySto} provides the first computationally efficient  algorithms for contextual MNL bandits with an offline regression oracle for a general function class.
Indeed, computing $\argmax_{S^\star\in\calS}R(S^\star,f_m(x),r)$ can be efficiently
done in $\order(N^2)$ time  according to \citet{rusmevichientong2010dynamic}.
Moreover, for the linear case, the ERM oracle can indeed be efficiently (and approximately) implemented because it is a convex optimization problem over a simple ball constraint.
Importantly, previous regret bounds for the linear case all depend on a problem-dependent constant 
$\kappa=\max_{\|\theta\|\leq B, S\in\calS, i\in S,  t\in[T]}\frac{1}{\mu_i(S,f_{\theta}(x_t))\mu_0(S,f_{\theta}(x_t))}$, 
which is $\exp(2B)$ in the worst case~\citep{chen2020dynamic, oh2021multinomial, perivier2022dynamic}, but ours only has polynomial dependence on $B$.

\subsection{Better Exploration Leads to Better Regret}\label{sec:falcon}
Next, we show that a more sophisticated construction of $q_m$ in \pref{alg:falcon-mnl} leads to better exploration and consequently improved regret bounds.
Specifically, $q_m$ is defined as (for some $\gamma_m >0$):
\begin{align}\label{eqn:stoOptProb}
    q_m(x,r)=\argmax_{\rho\in\Delta(\calS)} \E_{S\sim \rho}\left[ R(S,f_m(x),r)\right] - \frac{(K+1)^4}{\gamma_m}\sum_{i=1}^N\log\frac{1}{w_i(\rho)}.
\end{align}
The first term of the optimization objective above is the expected reward when one picks a subset according to $\rho$ and the value function is $f_m$,
while the second term is a certain log-barrier regularizer applied to $\rho$, penalizing it for putting too little mass on any single item.
This specific form of regularization ensures that $q_m$ enjoys a low-regret-high-dispersion guarantee, as shown below.
\begin{restatable}{lemma}{lowRegretLowVar}\label{lem:lowRegretLowVar}
For any $x\in\calX$ and $r\in[0,1]^N$, the distribution $q_m(x,r)$ defined in \pref{eqn:stoOptProb} satisfies: 
    \begin{align}
    &\max_{S^\star\in\calS}R(S^\star,f_m(x),r)-\E_{S\sim q_m(x,r)}\left[ R(S,f_m(x),r)\right] \leq \frac{N(K+1)^4}{\gamma_m},\label{eqn:low_regret_main}\\
    \forall S\in \calS, ~~~&\sum_{i\in S}\frac{1}{w_i(q_m(x,r))} \leq N+\frac{\gamma_m}{(K+1)^4}\left(\max_{S^\star\in\calS}R(S^\star,f_m(x),r) - R(S,f_m(x),r)\right). \label{eqn:low_variance_main}
    \end{align}
\end{restatable}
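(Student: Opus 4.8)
The plan is to exploit the first-order (KKT) optimality conditions of the concave program that defines $q_m(x,r)$. Throughout I would fix $x$ and $r$, abbreviate $g(S) = R(S, f_m(x), r)$ and $c = (K+1)^4/\gamma_m$, and let $\rho^\star = q_m(x,r)$ be the maximizer of
\[
F(\rho) = \sum_{S\in\calS}\rho(S)\,g(S) + c\sum_{i=1}^N\log w_i(\rho),
\]
where $w_i(\rho) = \sum_{S\in\calS:\, i\in S}\rho(S)$ is linear in $\rho$. Since $\log$ is concave and $w_i$ is linear, $F$ is concave on $\Delta(\calS)$. I would first argue that the maximizer is attained with $w_i(\rho^\star)>0$ for every $i\in[N]$: as any $w_i\to 0$ the barrier drives $F\to-\infty$, whereas the uniform distribution over singletons (under which every $w_i = 1/N$) attains a finite value, so the supremum is achieved on a region where all logarithms are finite and $F$ is differentiable. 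Writing $w_i = w_i(\rho^\star)$ for brevity, the relevant partial derivative is $\partial F/\partial\rho(S) = g(S) + c\sum_{i\in S}1/w_i$, using $\partial w_i/\partial\rho(S) = \mathbbm{1}[i\in S]$.

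The crux is to pin down the Lagrange multiplier. By the KKT conditions for maximizing $F$ over the simplex, there is a scalar $\lambda$ with $g(S) + c\sum_{i\in S}1/w_i = \lambda$ for every $S$ in the support of $\rho^\star$, and $g(S) + c\sum_{i\in S}1/w_i \leq \lambda$ for every $S\in\calS$. Taking the $\rho^\star$-weighted combination of the support equalities and swapping the order of summation yields the key identity
\[
\sum_{S\in\calS}\rho^\star(S)\sum_{i\in S}\frac{1}{w_i} = \sum_{i=1}^N\frac{1}{w_i}\sum_{S:\, i\in S}\rho^\star(S) = \sum_{i=1}^N\frac{w_i}{w_i} = N,
\]
from which $\lambda = \E_{S\sim\rho^\star}[g(S)] + cN$. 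This is precisely where the factor $cN = N(K+1)^4/\gamma_m$ enters, and where positivity of every $w_i$ is used.

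With $\lambda$ identified, both claims follow in a line apiece. For \pref{eqn:low_regret_main}, I would apply the KKT inequality at $S^\star = \argmax_{S\in\calS}g(S)$ and drop the nonnegative term $c\sum_{i\in S^\star}1/w_i$, giving $\max_{S}g(S) = g(S^\star)\leq\lambda = \E_{S\sim\rho^\star}[g(S)] + cN$, which is the stated low-regret bound. For \pref{eqn:low_variance_main}, I would apply the KKT inequality at an arbitrary $S\in\calS$, rearrange to $c\sum_{i\in S}1/w_i \leq \lambda - g(S) = \E_{S'\sim\rho^\star}[g(S')] + cN - g(S)$, and then bound $\E_{S'\sim\rho^\star}[g(S')]\leq\max_{S^\star\in\calS}g(S^\star)$; dividing by $c$ produces the high-dispersion bound.

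I expect the only genuine subtlety to be the careful justification of the KKT step: establishing that the maximizer exists and lies in the interior region $\{w_i>0\ \forall i\}$ where $F$ is smooth, so that stationarity with a single equality multiplier (and a nonpositive derivative along the inactive coordinates) is valid even though some $\rho^\star(S)$ may vanish. Everything after that is bookkeeping, the one nontrivial algebraic fact being the summation swap above.
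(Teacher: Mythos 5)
Your proposal is correct and follows essentially the same route as the paper's proof: both arguments rest on the KKT conditions of the concave program, the summation swap $\sum_{S}\rho^\star(S)\sum_{i\in S}1/w_i = N$ to pin down the multiplier, and then read off \pref{eqn:low_regret_main} and \pref{eqn:low_variance_main} from the stationarity (in)equalities. The only cosmetic difference is that the paper relaxes $\Delta(\calS)$ to $\{\rho : \sum_S\rho(S)\le 1\}$ to get a signed multiplier $\lambda\ge 0$, whereas you keep the equality constraint and identify $\lambda=\E_{S\sim\rho^\star}[g(S)]+cN$ exactly; both yield the same bounds.
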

\pref{eqn:low_regret_main} states that following $q_m(x,r)$ does not incur too much regret compared to the best subset predicted by the oracle,
and \pref{eqn:low_variance_main} states that the dispersion of $q_m(x,r)$ on any subset is controlled by how bad this subset is compared to the best one in terms of their predicted reward --- a good subset has a large dispersion while a bad one can have a smaller dispersion since we do not care about estimating its true reward very accurately.
Such a refined dispersion guarantee intuitively provides a much more adaptive exploration scheme compared to uniform exploration.

This kind of low-regret-high-dispersion guarantees is in fact very similar to the ideas of~\citet{simchi2021bypassing} for contextual bandits (which itself is similar to an earlier work by~\citet{agarwal2014taming}).
While \citet{simchi2021bypassing} were able to provide a closed-form strategy with such a guarantee for contextual bandits, 
we do not find a similar closed-form for MNL bandits and instead provide the strategy as the solution of an optimization problem \pref{eqn:stoOptProb}.
Unfortunately, we are not aware of an efficient way to solve \pref{eqn:stoOptProb} with polynomial time complexity, but one can clearly solve it in $\text{poly}(|\calS|) = \text{poly}(N^K)$ time since it is a concave problem over $\Delta(\calS)$.
Thus, the algorithm is efficient when $K$ is small, which we believe is the case for most real-world applications.

Combining \pref{lem:variance_bound} and \pref{lem:lowRegretLowVar}, we prove the following regret guarantee, which improves the $\ErrLog^{1/3}$ term in \pref{thm:epsGreedySto} to $\ErrLog^{1/2}$ (proofs deferred to \pref{app:mnl_sto}).
\begin{restatable}{theorem}{stoOptReg}\label{thm:stoOptReg}
Under \pref{asm:realizability} and \pref{asm:gen_error}, \pref{alg:falcon-mnl} with $q_m$ defined in \pref{eqn:stoOptProb} and 
    the optimal choice of $\gamma_m$ ensures 
    $
        \RegMNL = \order\left(\sum_{m=1}^{\lceil\log_2 T\rceil}2^mK^2\sqrt{N\ErrLog(2^{m-1},1/T^2,\calF)}\right)
    $.
\end{restatable}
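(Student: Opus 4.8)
The plan is to reduce the cumulative regret to a sum of per-epoch contributions and to bound the expected per-round regret within epoch $m$, namely $\mathrm{Reg}_m := R(\pi^\star) - R(q_m)$, where $\pi^\star(x,r) := \argmax_{S} R(S, f^\star(x), r)$ and $R(q_m), R_m(q_m)$ denote the true and $f_m$-predicted rewards of the stochastic policy $q_m$ (extending the notation of \pref{eqn:expected_reward} to stochastic policies by an extra expectation over $S \sim q_m$). The starting point is the three-term decomposition
\begin{equation*}
\mathrm{Reg}_m = \underbrace{\big(R(\pi^\star) - R_m(\pi^\star)\big)}_{(a)} + \underbrace{\big(R_m(\pi^\star) - R_m(q_m)\big)}_{(b)} + \underbrace{\big(R_m(q_m) - R(q_m)\big)}_{(c)}.
\end{equation*}
Term $(b)$ is the easiest: since $\argmax_S R(S, f_m(x), r)$ is pointwise optimal under $f_m$, we have $R_m(\pi^\star) \le \E_{(x,r)}[\max_{S^\star} R(S^\star, f_m(x), r)]$, so $(b)$ is at most the predicted regret of $q_m$ relative to the oracle's best subset, which \pref{eqn:low_regret_main} of \pref{lem:lowRegretLowVar} bounds by $N(K+1)^4/\gamma_m$.

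The remaining terms $(a)$ and $(c)$ measure how far the $f_m$-predicted reward of a fixed policy (namely $\pi^\star$ and $q_m$, respectively) is from its true reward, which is exactly what \pref{lem:variance_bound} controls (its statement is for deterministic policies, but it extends to $q_m$ by averaging the bound over $S \sim q_m$ and applying Jensen's inequality). This yields $|(a)|, |(c)| \le \sqrt{V(q_{m-1}, \cdot)}\cdot\sqrt{\mathrm{Err}_m}$, where $\mathrm{Err}_m := \E_{(x,r)\sim\calD,\, S\sim q_{m-1}(x,r)}[\sum_{i\in S}(f_{m,i}(x) - f^\star_i(x))^2]$. To bound $\mathrm{Err}_m$ I would chain the two inequalities of \pref{lem:square2log}: the first converts the squared value gap into $\norm{\mu(S,f_m(x)) - \mu(S,f^\star(x))}_2^2$ at the cost of a factor $2(K+1)^4$, and the second converts that distance into a log-loss excess risk. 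The resulting quantity is precisely the generalization error of $f_m$ under the epoch-$(m-1)$ data distribution (contexts from $\calD$, subsets from $q_{m-1}$, purchases from the MNL model with $f^\star$), so \pref{asm:gen_error} gives $\mathrm{Err}_m \le 4(K+1)^4\,\ErrLog(2^{m-2}, 1/T^2, \calF)$ with probability $1-1/T^2$, using that epoch $m-1$ contains $2^{m-2}$ samples.

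The heart of the proof is controlling the dispersion factor $V(q_{m-1}, \cdot)$, and I expect this to be the main obstacle. Here I would invoke \pref{eqn:low_variance_main} of \pref{lem:lowRegretLowVar} for epoch $m-1$, which gives $V(q_{m-1}, \pi) \le N + \frac{\gamma_{m-1}}{(K+1)^4}\big(R_{m-1}(\hat\pi_{m-1}) - R_{m-1}(\pi)\big)$ for any $\pi$: the dispersion is $O(N)$ plus a term proportional to the \emph{$f_{m-1}$-predicted} regret of $\pi$. The difficulty is that neither $\pi^\star$ nor $q_m$ is tailored to $f_{m-1}$, so their predicted regret under $f_{m-1}$ is not directly controlled by \pref{eqn:low_regret_main}. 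I would bound it by inserting true rewards and re-applying \pref{lem:variance_bound} for epoch $m-1$, which expresses the predicted regret of $\pi^\star$ (resp. $q_m$) in terms of the epoch-$(m-1)$ estimation errors and, in the case of $q_m$, of $\mathrm{Reg}_m$ itself. This produces a coupled, self-referential system in which $\mathrm{Reg}_m$ is bounded through $V(q_{m-1}, \cdot)$, which in turn depends on earlier regrets and on $\mathrm{Reg}_m$. The plan is to resolve it by strong induction over epochs --- assuming the target bound for all $m' < m$ forces $V(q_{m-1}, \cdot) = O(N)$ --- together with an AM-GM step $\sqrt{ab} \le \tfrac{1}{2}(a/c + bc)$ to absorb the self-reference to $\mathrm{Reg}_m$ on the right-hand side and then rearrange.

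Once $V(q_{m-1}, \cdot) = O(N)$ is established, terms $(a)$ and $(c)$ are each $O\big((K+1)^2\sqrt{N\,\ErrLog(2^{m-2},1/T^2,\calF)}\big)$, so combining with the bound on $(b)$ gives $\mathrm{Reg}_m = O\big(\tfrac{N(K+1)^4}{\gamma_m} + (K+1)^2\sqrt{N\,\ErrLog(2^{m-2},1/T^2,\calF)}\big)$. Choosing $\gamma_m \asymp (K+1)^2\sqrt{N/\ErrLog(2^{m-2},1/T^2,\calF)}$ balances the two terms, and one checks that this is exactly the scale of $\gamma_{m-1}$ that keeps the extra dispersion term at $O(N)$, making the induction consistent; it leaves $\mathrm{Reg}_m = O\big((K+1)^2\sqrt{N\,\ErrLog(2^{m-2},1/T^2,\calF)}\big)$. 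Multiplying by the epoch length $2^{m-1}$, treating epoch $m=1$ (one round, $O(1)$ regret) separately, and accounting for the $O(\log_2 T)$ oracle-failure events by a union bound (each contributing at most $T\cdot T^{-2}$ to the expected regret) yields $\sum_m 2^{m-1}(K+1)^2\sqrt{N\,\ErrLog(2^{m-2},1/T^2,\calF)}$; after reindexing $m \mapsto m+1$ and using that $\ErrLog$ is non-increasing in its first argument, this is exactly the claimed $\order\big(\sum_{m=1}^{\lceil \log_2 T\rceil} 2^m K^2 \sqrt{N\,\ErrLog(2^{m-1}, 1/T^2, \calF)}\big)$.
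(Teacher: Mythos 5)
Your proposal follows essentially the same route as the paper: the same per-epoch decomposition into a predicted-regret term controlled by \pref{eqn:low_regret_main} and two deviation terms controlled by \pref{lem:variance_bound} chained with \pref{lem:square2log} and \pref{asm:gen_error}, the same use of \pref{eqn:low_variance_main} to trade dispersion for predicted regret, the same induction-over-epochs-plus-AM-GM absorption of the self-reference, and the same choice $\gamma_m \asymp (K+1)^2\sqrt{N/\ErrLog(2^{m-2},1/T^2,\calF)}$. The one imprecision is your stated induction hypothesis: the scalar per-epoch regret bound for $m'<m$ does not by itself force $V(q_{m-1},\pi)=O(N)$, both because the dispersion of an arbitrary policy retains a term proportional to $\gamma_{m-1}\Reg_{m-1}(\pi)$ that must be absorbed rather than eliminated, and because the policies whose dispersion you need at epoch $m$ (those in the support of the randomized policy induced by $q_m$, which depends on $f_m$) are not determined by earlier epochs; the paper instead carries through the induction the two-sided relation $\Reg(\pi)\le 2\Reg_{m'}(\pi)+33N(K+1)^4/\gamma_{m'}$ and $\Reg_{m'}(\pi)\le 2\Reg(\pi)+33N(K+1)^4/\gamma_{m'}$ holding uniformly over all deterministic policies $\pi$ (\pref{lem:twoTimesAway}), which is precisely the strengthening your ``coupled, self-referential system'' needs in order to close.
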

Similar to \pref{sec:stoGreedy}, we instantiate \pref{thm:stoOptReg} using the following two concrete classes:
\begin{corollary}\label{cor:stoOptCor}
     Under \pref{asm:realizability}, \pref{alg:falcon-mnl} with $q_m$ calculated via \pref{eqn:stoOptProb}, the optimal choice of $\gamma_m$, and ERM as \OffAlg ensures the following regret bounds for the finite class and the linear class discussed in \pref{lem:stoOracle}:
     \begin{itemize}
        \item (Finite class) $\RegMNL=\order\left(K^2\sqrt{T\log\frac{K}{\beta}\log(|\calF|T)}\right)$;
        \item (Linear class) $\RegMNL=\order\left(K^2\sqrt{dBNT\log B\log T}\right)$.
    \end{itemize}
\end{corollary}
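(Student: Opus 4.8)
The plan is to obtain \pref{cor:stoOptCor} as a direct consequence of the abstract bound in \pref{thm:stoOptReg}, by substituting the two concrete expressions for $\ErrLog$ supplied by \pref{lem:stoOracle} and then evaluating the resulting sum over epochs. Since \pref{thm:stoOptReg} already guarantees
\[
\RegMNL = \order\!\left(\sum_{m=1}^{\lceil\log_2 T\rceil}2^mK^2\sqrt{N\,\ErrLog(2^{m-1},1/T^2,\calF)}\right),
\]
the only remaining work is bookkeeping: instantiate $\ErrLog$ for each class and collapse the epoch sum. This is a mechanical calculation rather than a new argument.

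First I would set $\delta=1/T^2$ and $n=2^{m-1}$ in each bound of \pref{lem:stoOracle}. For the finite class this yields $\ErrLog(2^{m-1},1/T^2,\calF)=\order\!\big(2^{-(m-1)}\log\tfrac{K}{\beta}\log(|\calF|T)\big)$, using $\log(|\calF|/\delta)=\log(|\calF|T^2)=\order(\log(|\calF|T))$; for the linear class it yields $\ErrLog(2^{m-1},1/T^2,\calF)=\order\!\big(2^{-(m-1)}\,dB\log K\log B\log T\big)$, using $\log(1/\delta)=2\log T$. The structural feature to exploit is that in both cases $\ErrLog(2^{m-1},\cdot)$ decays like $2^{-(m-1)}$, i.e.\ the fast $1/n$ rate of \pref{lem:stoOracle}. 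Pulling the $m$-independent logarithmic factors out of the square root, each summand then takes the form $2^m\sqrt{c/2^{m-1}}=\sqrt{2c}\cdot 2^{m/2}$, where $c$ collects $N$, $K^2$, and the relevant log factors. The geometric series $\sum_{m=1}^{\lceil\log_2 T\rceil}2^{m/2}$ is dominated by its last term, which is $\order(\sqrt{T})$, and this is exactly what manufactures the $\sqrt{T}$ dependence. Substituting $c$ back recovers the two claimed bounds, e.g.\ $\order(K^2\sqrt{dBNT\log B\log T})$ for the linear class (absorbing the subleading $\log K$ factor), and the analogous expression for the finite class.

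There is no genuine obstacle here, as the argument is a substitution followed by a one-line geometric summation. The single point worth emphasizing is that the fast $1/n$ rate of \pref{lem:stoOracle} is load-bearing: it is precisely what makes each epoch term scale as $2^{m/2}$ so that the sum telescopes to $\order(\sqrt{T})$. Had the oracle delivered only the slow $1/\sqrt{n}$ rate typical of generic log-loss regression, the same computation would give summands scaling like $2^{3m/4}$ and hence the inferior $\order(T^{3/4})$ bound, so it is the fast-rate regression result—rather than anything in the summation itself—that underpins the $\sqrt{T}$ guarantee advertised in the corollary.
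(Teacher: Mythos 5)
Your proposal is correct and follows essentially the same route as the paper, which gives no separate proof of this corollary precisely because it is the mechanical instantiation you describe: substitute the fast-rate bounds of \pref{lem:stoOracle} (with $\delta=1/T^2$, $n=2^{m-1}$) into \pref{thm:stoOptReg} and collapse the geometric sum $\sum_{m}2^{m/2}=\order(\sqrt{T})$. One small remark: your computation for the finite class carries the $\sqrt{N}$ factor from the theorem and yields $\order\big(K^2\sqrt{NT\log\frac{K}{\beta}\log(|\calF|T)}\big)$, whereas the corollary as printed omits the $N$; this appears to be an inconsistency in the stated corollary rather than a flaw in your argument.
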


The dependence on $T$ in these $\order(\sqrt{T})$ regret bounds is known to be optimal~\citep{chen2018note, chen2020dynamic}.
Once again, in the linear case, we have no exponential dependence on $B$, unlike previous results.
\section{Contextual MNL Bandits with Adversarial Contexts and Rewards}\label{sec:MNL_adv}
In this section, we move on to consider the more challenging case where the context $x_t$ and the reward vector $r_t$ can both be arbitrarily chosen by an adversary. 
We propose two different approaches leading to three different algorithms, each with its own pros and cons.
\subsection{First Approach: Reduction to Online Regression}\label{sec:squareCB}

In the first approach, we follow a recent trend of studies that reduces contextual bandits to online regression and only accesses $\calF$ through an online regression oracle~\citep{foster2020beyond,foster2021efficient,foster2022complexity,zhu2022contextual,zhang2023practical}. 
More specifically, we assume access to an online regression oracle \AlgLog that follows the protocol below: at each round $t\in[T]$, $\AlgLog$ outputs a value predictor $f_t\in\conv(\calF)$; then, it receives a context $x_t$, a subset $S_t$, and a purchase decision $i_t \in S_t\cup\{0\}$, all chosen arbitrarily, and suffers log loss $\ell_{\log}(\mu(S_t,f_t(x_t)), i_t)$.\footnote{In fact, for our purpose, $i_t$ is always sampled from $\mu(S_t, f^\star(x_t))$, instead of being chosen arbitrarily, but the concrete oracle examples we provide in \pref{lem:advOracle} indeed work for arbitrary $i_t$.}
The oracle is assumed to enjoy the following regret guarantee.

\begin{assumption}\label{asm:regression_oracle}
    The predictions made by the online regression oracle $\AlgLog$ ensure:
\begin{align*}
    \E\left[\sum_{t=1}^T\ell_{\log}(\mu(S_t,f_t(x_t)), i_t) - \sum_{t=1}^T\ell_{\log}(\mu(S_t,f^\star(x_t)), i_t)\right] \leq\RegLog(T, \calF),
\end{align*}
for any $f^\star \in \calF$ and some regret bound $\RegLog(T, \calF)$ that is non-decreasing in $T$.
\end{assumption}

While most previous works on contextual bandits assume a squared loss online oracle, log loss is more than natural for our MNL model (it was also used by~\citet{foster2021efficient} to achieve first-order regret guarantees for contextual bandits).
The following lemma shows that \pref{asm:regression_oracle} again holds for the finite class and the linear class.

\begin{restatable}{lemma}{advOracle}\label{lem:advOracle}
For the finite class and the linear class discussed in \pref{lem:stoOracle}, the following concrete oracles satisfy \pref{asm:regression_oracle}: 
\begin{itemize}
\item (Finite class) Hedge~\citep{freund1997decision} with $\RegLog(T, \calF)=\order(\sqrt{T\log|\calF|}\log\frac{K}{\beta})$;

\item (Linear class) Online Gradient Descent~\citep{zinkevich2003online} with $\RegLog(T, \calF)=\order(B\sqrt{T})$.
\end{itemize}    
\end{restatable}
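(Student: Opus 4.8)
The plan is to treat each case as a direct instantiation of a classical online learning guarantee, after rewriting the per-round MNL log loss as a function of the hypothesis being tuned. Fix $f^\star \in \calF$. At round $t$ the oracle's instantaneous loss is $\ell_{\log}(\mu(S_t, f(x_t)), i_t) = -\log \mu_{i_t}(S_t, f(x_t))$, and the goal is to control $\sum_t \ell_{\log}(\mu(S_t, f_t(x_t)), i_t) - \sum_t \ell_{\log}(\mu(S_t, f^\star(x_t)), i_t)$ in expectation. Both bounds then follow once I (i) identify the parametrization in which this loss admits a standard regret analysis and (ii) bound the relevant problem constants (loss range for Hedge; gradient norm and domain diameter for OGD). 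Since Hedge and OGD are adversarial-regret guarantees, the resulting bounds hold for arbitrary purchase sequences $i_t$, consistent with the footnote.

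For the finite class, I would run exponential weights (Hedge) over the $|\calF|$ experts, treating each $f \in \calF$ as an expert whose loss at round $t$ is $-\log\mu_{i_t}(S_t, f(x_t))$. The first step is to bound the range of this loss: since $f$ takes values in $[\beta,1]^N$ and $|S_t| \le K$, for $i \in S_t$ we have $\mu_i(S_t, f(x_t)) = f_i(x_t)/(1+\sum_{j\in S_t}f_j(x_t)) \ge \beta/(K+1)$, and for $i = 0$ we have $\mu_0 \ge 1/(K+1)$; hence the loss lies in $[0, \log\tfrac{K+1}{\beta}]$. Feeding losses of range $O(\log\tfrac{K}{\beta})$ into the standard Hedge bound with $|\calF|$ experts yields expected regret $O(\log\tfrac{K}{\beta}\sqrt{T\log|\calF|})$, matching the claim.

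The subtle point here, which I expect to be the main obstacle in making the argument clean, is that $\ell_{\log}(\mu(S,\cdot),i)$ is \emph{not} convex in the value vector (the normalization/no-purchase term $\log(1+\sum_{j\in S}v_j)$ is concave), so one cannot simply output the mean prediction $\sum_f p_t(f) f \in \conv(\calF)$ and appeal to Jensen. I would resolve this by letting the oracle \emph{sample} $f_t$ from the current Hedge distribution $p_t$ (so $f_t \in \calF \subseteq \conv(\calF)$) and invoking the expectation already present in \pref{asm:regression_oracle}: the expected loss $\E_{f_t\sim p_t}[\ell_{\log}(\mu(S_t, f_t(x_t)),i_t)] = \sum_f p_t(f)\,\ell_{\log}(\mu(S_t,f(x_t)),i_t)$ is exactly the quantity exponential weights controls, so no convexity in $f$ is needed.

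For the linear class, the key observation is that with $z_i \defeq \theta^\top x_i - B$ for $i\in S_t$ and the convention $z_0 = 0$, the MNL distribution is the softmax $\mu_i(S_t, f_\theta(x_t)) = e^{z_i}/\sum_{j\in S_t\cup\{0\}} e^{z_j}$, so the per-round loss $-z_{i_t} + \log\sum_{j\in S_t\cup\{0\}} e^{z_j}$ is precisely the multinomial logistic loss, which is convex in the logits $z$ and hence convex in $\theta$ since $z_i$ is affine in $\theta$. Running Online Gradient Descent on $\theta$ over the ball $\{\|\theta\|_2\le B\}$ therefore enjoys the standard OGD guarantee, and it remains only to bound the two constants: the domain diameter is $2B$, and the gradient $\nabla_\theta \ell = -\mathbf{1}[i_t\neq 0]\,x_{t,i_t} + \sum_{j\in S_t}\mu_j(S_t,f_\theta(x_t))\,x_{t,j}$ has norm at most $2$ using $\|x_{t,i}\|_2\le 1$ and $\sum_{j}\mu_j \le 1$. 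Plugging diameter $2B$ and gradient bound $2$ into the $O(\mathrm{diam}\cdot G\sqrt{T})$ OGD regret bound gives $O(B\sqrt{T})$, as claimed; since OGD outputs $\theta_t$ in the ball, $f_{\theta_t}\in\calF\subseteq\conv(\calF)$ as required.
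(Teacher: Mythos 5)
Your proposal is correct and follows essentially the same route as the paper's proof: bound the per-round log-loss range by $\log\frac{K+1}{\beta}$ and invoke Hedge for the finite class, and observe that the loss is the (convex) multinomial logistic loss in $\theta$ with gradient norm at most $2$ and invoke Online Gradient Descent over the radius-$B$ ball for the linear class. Your additional remark that the finite-class oracle must \emph{sample} $f_t \sim p_t$ rather than output the mixture predictor (since $\ell_{\log}(\mu(S,\cdot),i)$ is not convex in the value vector, so Jensen is unavailable) is a correct and worthwhile clarification that the paper's own proof leaves implicit.
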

Unfortunately, unlike the offline oracle, we are not able to provide a ``fast rate'' (that is, $\otil(1)$ regret) for these two cases,
because our loss function does not appear to satisfy the standard Vovk's mixability condition or any other sufficient conditions discussed in~\citet{van2015fast}.
This is in sharp contrast to the standard multi-class logistic loss~\citep{foster2018logistic}, despite the similarity between these two models.
We leave as an open problem whether fast rates exist for these two classes, which would have immediate consequences to our final MNL regret bounds below.
 
With this online regression oracle, a natural algorithm framework works as follows: 
at each round $t$, the learner first obtains a value predictor $f_t\in\conv(\calF)$ from the regression oracle \AlgLog;
then, upon seeing context $x_t$ and reward vector $r_t$,
the learner decides in some way a distribution $q_t \in \Delta(\calS)$ based on $f_t(x_t)$ and $r_t$, and samples $S_t$ from $q_t$; finally, the learner observes the purchase decision $i_t$ and feeds the tuple $(x_t, S_t, i_t)$ to the oracle \AlgLog (see \pref{alg:squareCB_framework}).
To shed light on how to design a good sampling distribution $q_t$, we first show a general lemma that holds for any $q_t$.

\begin{algorithm*}[t]
\caption{Contextual MNL Algorithms via an Online Regression Oracle}\label{alg:squareCB_framework}

Input: an online regression oracle \AlgLog satisfying \pref{asm:regression_oracle}.

\For{$t=1,2,\dots,T$}{
    Obtain value predictor $f_t$ from oracle $\AlgLog$.
    
    Receive context $x_t \in \calX$ and reward vector $r_t\in [0,1]^N$.  
    
    Calculate $q_t\in\Delta(\calS)$ based on $f(x_t)$ and $r_t$, via either \pref{eqn:qt_eps_adv} or \pref{eqn:advOptProb}.

    Sample $S_t\sim q_t$ and receive purchase decision $i_t \in S_t\cup\{0\}$ drawn according \pref{eqn:MNL_prob}.
    
    Feed the tuple $(x_t,S_t,i_t)$ to the oracle $\AlgLog$.
}
\end{algorithm*}

\begin{restatable}{lemma}{decMain}\label{lem:dec}
    Under \pref{asm:realizability} and \pref{asm:regression_oracle}, \pref{alg:squareCB_framework} (with any $q_t$) ensures
    \begin{align*}
        \RegMNL \leq \E\left[\sum_{t=1}^T\dec(q_t;f_t(x_t),r_t)\right] + 2\gamma \RegLog(T, \calF)
    \end{align*}
    for any $\gamma >0$,
    where $\dec(q;v,r)$ is the \emph{Decision-Estimation Coefficient (DEC)}  defined as
    \begin{align}\label{eqn:DEC}
    &\max_{v^\star\in [0,1]^N}\max_{S^\star\in \calS}\left\{R(S^\star,v^\star,r) - \E_{S\sim q}\left[R(S,v^\star,r)\right]-\gamma\E_{S\sim q}\left[\left\|\mu(S,v) - \mu(S,v^\star)\right\|_2^2\right]\right\}.
\end{align}
\end{restatable}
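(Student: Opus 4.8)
The plan is to carry out the standard DEC-based regret decomposition, instantiating the abstract coefficient against the \emph{true} value function and then converting the squared-distance penalty in \pref{eqn:DEC} into log loss via the second inequality of \pref{lem:square2log}. I would work round by round, keeping track of the conditioning structure of \pref{alg:squareCB_framework}: at round $t$ the oracle emits $f_t$ before $x_t,r_t$ are revealed, then $S_t\sim q_t$, and finally $i_t\sim\mu(S_t,f^\star(x_t))$. The per-round contribution to $\RegMNL$ is $\max_{S\in\calS}R(S,f^\star(x_t),r_t)-\E_{S\sim q_t}\left[R(S,f^\star(x_t),r_t)\right]$, and the goal is to upper bound this by the DEC plus a distribution-distance term.

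The crucial step is to observe that the DEC in \pref{eqn:DEC}, being a maximum over all $v^\star\in[0,1]^N$ and $S^\star\in\calS$, dominates the value attained at the specific choice $v^\star=f^\star(x_t)$ and $S^\star=\argmax_{S}R(S,f^\star(x_t),r_t)$. With $v=f_t(x_t)$ this rearranges to
\[
\max_{S\in\calS} R(S, f^\star(x_t), r_t) - \E_{S\sim q_t}\left[R(S, f^\star(x_t), r_t)\right] \leq \dec(q_t; f_t(x_t), r_t) + \gamma\, \E_{S\sim q_t}\left[\left\|\mu(S, f_t(x_t)) - \mu(S, f^\star(x_t))\right\|_2^2\right].
\]
Here I would note that $f_t\in\conv(\calF)$ still has range in the convex set $[0,1]^N$, so the DEC and \pref{lem:square2log} are both applicable with $v=f_t(x_t)$.

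Next I would convert the squared-distance term into log loss. Applying the second inequality of \pref{lem:square2log} with $v=f_t(x_t)$ and $v^\star=f^\star(x_t)$, and taking the conditional expectation over $i_t\sim\mu(S_t,f^\star(x_t))$ together with $S_t\sim q_t$, gives
\[
\E_{S\sim q_t}\left[\left\|\mu(S, f_t(x_t)) - \mu(S, f^\star(x_t))\right\|_2^2\right] \leq 2\, \E_{S_t\sim q_t}\E_{i_t\sim\mu(S_t, f^\star(x_t))}\left[\ell_{\log}(\mu(S_t, f_t(x_t)), i_t) - \ell_{\log}(\mu(S_t, f^\star(x_t)), i_t)\right].
\]
The point that makes this go through is precisely that $i_t$ is genuinely drawn from $\mu(S_t,f^\star(x_t))$, so the expectation over $i_t$ matches the right-hand side of \pref{lem:square2log} exactly. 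Summing over $t$, taking the full expectation, and invoking \pref{asm:regression_oracle} (whose regret is stated against $f^\star\in\calF$ on the realized sequence $(x_t,S_t,i_t)$) bounds the accumulated log-loss difference by $\RegLog(T,\calF)$, and combining all pieces yields
\[
\RegMNL \leq \E\left[\sum_{t=1}^T \dec(q_t; f_t(x_t), r_t)\right] + 2\gamma\, \RegLog(T, \calF).
\]

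I expect the main obstacle to be not any single computation but the careful bookkeeping of the nested conditional expectations: ensuring the expectation over $i_t$ drawn from the true MNL model aligns with the log-loss expectation on the right-hand side of \pref{lem:square2log}, and that the oracle regret in \pref{asm:regression_oracle} applies to the realized (rather than hypothetical) action-observation sequence. A secondary subtlety, already flagged above, is verifying that $f_t\in\conv(\calF)$ keeps its range in $[0,1]^N$ so that both \pref{eqn:DEC} and \pref{lem:square2log} are valid with $v=f_t(x_t)$; everything else is a direct chaining of the two inequalities and the oracle guarantee.
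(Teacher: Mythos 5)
Your proposal is correct and follows essentially the same route as the paper: instantiating the maximum in the DEC at $v^\star=f^\star(x_t)$ and $S^\star=\argmax_S R(S,f^\star(x_t),r_t)$ and rearranging is just the "add-and-subtract the $\gamma$-penalty" decomposition the paper uses, and the conversion of the squared-distance term to log-loss regret via the second inequality of \pref{lem:square2log} together with \pref{asm:regression_oracle} matches the paper's Eq.~(\ref{eqn:dec_p2}). Your attention to the conditioning of $i_t\sim\mu(S_t,f^\star(x_t))$ and to the range of $f_t\in\conv(\calF)$ is exactly the right bookkeeping.
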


Our DEC adopts the idea of~\citet{foster2021statistical} for general decision making problems:
the term $R(S^\star,v^\star,r) - \E_{S\sim q}\left[R(S,v^\star,r)\right]$ represents the instantaneous regret of strategy $q$ against the best subset $S^\star$ with respect to reward vector $r$ and the worst-case value vector $v^\star$, and the term $\E_{S\sim q}[\left\|\mu(S,v) - \mu(S,v^\star)\right\|_2^2]$ is the expected squared distance between two distributions induced by $v$ and $v^\star$, which, in light of the second inequality of \pref{lem:square2log}, lower bounds the instantaneous log loss regret of the online oracle.
Therefore, a small DEC makes sure that the learner's MNL regret is somewhat close to the oracle's log loss regret $\RegLog$, formally quantified by \pref{lem:dec}.
With the goal of ensuring a small DEC, we again propose two strategies similar to \pref{sec:MNL_sto}.

\paragraph{Uniform Exploration.}
We start with a simple uniform exploration approach that is basically the same as \pref{eqn:qt_eps}:
\begin{align}\label{eqn:qt_eps_adv}
    q_t(S) = (1-\epsilon)\mathbbm{1}\left[S=\argmax_{S^\star\in\calS}R(S^\star,f_t(x_t),r_t)\right]+\frac{\epsilon}{N}\sum_{i=1}^N\mathbbm{1}\left[S=\{i\}\right].
\end{align}
where $\epsilon>0$ is a parameter specifying the probability of uniformly exploring the singleton sets. 
We prove the following results for this simple algorithm.
\begin{restatable}{theorem}{epsAdv}\label{thm:epsAdv}
    The strategy defined in \pref{eqn:qt_eps_adv} guarantees $\dec(q_t;f_t(x_t),r_t)=\order(\frac{NK}{\gamma\epsilon}+\epsilon)$.
    Consequently, under \pref{asm:realizability} and \pref{asm:regression_oracle}, \pref{alg:squareCB_framework} with $q_t$ calculated via \pref{eqn:qt_eps_adv} and the optimal choice of $\epsilon$ and $\gamma$ ensures $\RegMNL=\order\Big((NK\RegLog(T,\calF))^{\frac{1}{3}}T^{\frac{2}{3}}\Big)$.
\end{restatable}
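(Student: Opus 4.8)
The plan is to first establish the per-round bound $\dec(q_t;f_t(x_t),r_t)=\order(\frac{NK}{\gamma\epsilon}+\epsilon)$ and then obtain the regret bound by plugging into \pref{lem:dec} and tuning $\epsilon,\gamma$. Throughout write $v=f_t(x_t)$, $r=r_t$, $q=q_t$, and let $\hat S=\argmax_{S\in\calS}R(S,v,r)$ be the greedy subset under the prediction. Fix any competitor $v^\star$ and $S^\star$ appearing in the DEC definition \pref{eqn:DEC}. The first step is to split the instantaneous regret according to the two components of \pref{eqn:qt_eps_adv}: since $q$ plays $\hat S$ with probability $1-\epsilon$ and each singleton with probability $\epsilon/N$, and all rewards lie in $[0,1]$, dropping the nonnegative singleton contribution and using $R(\hat S,v^\star,r)\le1$ gives
\begin{align*}
R(S^\star,v^\star,r)-\E_{S\sim q}[R(S,v^\star,r)]\;\le\;\big(R(S^\star,v^\star,r)-R(\hat S,v^\star,r)\big)+\epsilon.
\end{align*}

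Second, I would bound the reward gap $R(S^\star,v^\star,r)-R(\hat S,v^\star,r)$ by the distance between $v$ and $v^\star$. Adding and subtracting the rewards measured under $v$, the greedy optimality $R(\hat S,v,r)\ge R(S^\star,v,r)$ cancels the middle term, leaving at most $|R(S^\star,v^\star,r)-R(S^\star,v,r)|+|R(\hat S,v,r)-R(\hat S,v^\star,r)|$. Each difference is controlled by the Lipschitzness of $R$ in its value argument (\pref{lem:Lipschitz}): since $\frac{\partial}{\partial v_k}R(S,v,r)=\frac{r_k-R(S,v,r)}{1+\sum_{i\in S}v_i}$ has absolute value at most $1$, a coordinate-wise mean value bound followed by Cauchy--Schwarz gives each difference is at most $\sqrt K\,\sqrt{\sum_{i}(v_i-v_i^\star)^2}$, so the reward gap is at most $2\sqrt K\,\sqrt{\sum_{i=1}^N(v_i-v_i^\star)^2}$.

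Third --- the crux --- I would relate $\sum_i(v_i-v_i^\star)^2$ to the DEC penalty $\Phi:=\E_{S\sim q}[\|\mu(S,v)-\mu(S,v^\star)\|_2^2]$ through the uniform exploration. Because $q$ places mass $\epsilon/N$ on each $\{i\}$, we have $\Phi\ge\frac{\epsilon}{N}\sum_{i=1}^N\|\mu(\{i\},v)-\mu(\{i\},v^\star)\|_2^2$. For a singleton the induced distribution is supported on $\{0,i\}$, and a direct computation (the singleton analogue of the reverse-Lipschitz bound in \pref{lem:square2log}, but with a $K$-free constant, using $\tfrac{1}{(1+v_i)(1+v_i^\star)}\ge\tfrac14$) yields $\|\mu(\{i\},v)-\mu(\{i\},v^\star)\|_2^2\ge\tfrac18(v_i-v_i^\star)^2$. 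Combining gives $\sum_i(v_i-v_i^\star)^2\le\tfrac{8N}{\epsilon}\Phi$, so the reward gap is at most $2\sqrt{8}\,\sqrt{\tfrac{KN}{\epsilon}}\,\sqrt{\Phi}$. Substituting back and applying AM--GM in the form $c\sqrt\Phi-\gamma\Phi\le\frac{c^2}{4\gamma}$ with $c=2\sqrt8\sqrt{KN/\epsilon}$ gives
\begin{align*}
\dec(q_t;f_t(x_t),r_t)\;\le\;\frac{8NK}{\gamma\epsilon}+\epsilon\;=\;\order\!\left(\frac{NK}{\gamma\epsilon}+\epsilon\right).
\end{align*}

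Finally, plugging this uniform per-round bound into \pref{lem:dec} gives $\RegMNL\le T\big(\tfrac{8NK}{\gamma\epsilon}+\epsilon\big)+2\gamma\RegLog(T,\calF)$, and I would tune in two stages: optimizing $\gamma$ balances $\tfrac{8NKT}{\gamma\epsilon}$ against $2\gamma\RegLog(T,\calF)$ to give $\order\big(\sqrt{NKT\RegLog(T,\calF)/\epsilon}+\epsilon T\big)$, and then optimizing $\epsilon$ balances $\sqrt{NKT\RegLog(T,\calF)}\cdot\epsilon^{-1/2}$ against $\epsilon T$, yielding the claimed $\RegMNL=\order\big((NK\RegLog(T,\calF))^{1/3}T^{2/3}\big)$. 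I expect the main obstacle to be the third step: the reward gap involves the arbitrary subsets $S^\star$ and $\hat S$, whereas the penalty is an expectation over $S\sim q$, so the argument only closes because uniform singleton exploration together with a $K$-free reverse-Lipschitz bound for two-point distributions ensures every coordinate $(v_i-v_i^\star)^2$ is simultaneously detected by $\Phi$. Securing the $K$-free constant here (rather than the $(K+1)^4$ factor of the general \pref{lem:square2log}) is precisely what keeps the final dependence on $K$ linear.
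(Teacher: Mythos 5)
Your proof is correct and follows essentially the same route as the paper's: Lipschitzness of $R$ in $v$ plus greedy optimality under $f_t(x_t)$ to control the reward gap, the $\epsilon/N$ singleton mass plus the two-point reverse-Lipschitz bound to lower-bound the DEC penalty by $\tfrac{\epsilon}{8N}\|v-v^\star\|_2^2$, and AM--GM to close, followed by the same two-stage tuning of $\gamma$ and $\epsilon$ in \pref{lem:dec}. The only differences are cosmetic (you reduce $\E_{S\sim q}$ to the greedy set before invoking Lipschitzness, whereas the paper converts both reward terms to $f_t$-values first and absorbs the resulting $\E_{S\sim q}$-error into the full penalty at cost $O(K/\gamma)$), and both yield $O\!\left(\tfrac{NK}{\gamma\epsilon}+\epsilon\right)$.
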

Combining this with \pref{lem:advOracle}, we immediately obtain the following corollary.

\begin{restatable}{corollary}{corEpsAdv}\label{cor:corEpsAdv}
    Under \pref{asm:realizability}, \pref{alg:squareCB_framework} with $q_t$ defined in \pref{eqn:qt_eps_adv} and the optimal choice of $\epsilon$ and $\gamma$ ensures the following regret bounds for the finite class (with Hedge as \AlgLog) and the linear class (with Online Gradient Descent as \AlgLog) discussed in \pref{lem:advOracle}:
    \begin{itemize}
        \item (Finite class) $\RegMNL=\order\Big((NK\log\frac{N}{\beta})^{\frac{1}{3}}T^{\frac{5}{6}}\Big)$; 
        \item (Linear class) $\RegMNL=\order\left((NKB)^{\frac{1}{3}}T^{\frac{5}{6}}\right)$.
    \end{itemize}
\end{restatable}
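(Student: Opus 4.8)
The plan is to derive this corollary as an immediate consequence of \pref{thm:epsAdv}, whose conclusion $\RegMNL=\order((NK\RegLog(T,\calF))^{\nicefrac{1}{3}}T^{\nicefrac{2}{3}})$ already isolates all the problem-specific dependence into the single black-box quantity $\RegLog(T,\calF)$. The only work left is to substitute the two concrete oracle regret bounds established in \pref{lem:advOracle} and simplify, so I would organize the argument as two parallel instantiations, one per function class.

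First I would treat the linear class, where \pref{lem:advOracle} runs Online Gradient Descent as \AlgLog and guarantees $\RegLog(T,\calF)=\order(B\sqrt{T})$. Feeding this into \pref{thm:epsAdv} gives $\order\big((NK\cdot B\sqrt{T})^{\nicefrac{1}{3}}T^{\nicefrac{2}{3}}\big)$; distributing the cube root and combining the two powers of $T$ via $T^{\nicefrac{1}{6}}\cdot T^{\nicefrac{2}{3}}=T^{\nicefrac{5}{6}}$ yields exactly $\order((NKB)^{\nicefrac{1}{3}}T^{\nicefrac{5}{6}})$, the stated bound.

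Next I would handle the finite class, where \pref{lem:advOracle} runs Hedge and gives $\RegLog(T,\calF)=\order(\sqrt{T\log|\calF|}\log\frac{K}{\beta})$. The same substitution produces $\order\big((NK\log\frac{K}{\beta})^{\nicefrac{1}{3}}(\log|\calF|)^{\nicefrac{1}{6}}T^{\nicefrac{1}{6}}T^{\nicefrac{2}{3}}\big)$; after again collecting $T^{\nicefrac{1}{6}}\cdot T^{\nicefrac{2}{3}}=T^{\nicefrac{5}{6}}$ and folding the remaining logarithmic complexity factors together (using $K\le N$ to replace $\log\frac{K}{\beta}$ by $\log\frac{N}{\beta}$), this reduces to the claimed finite-class bound.

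Since every ingredient is already in place, I expect no substantive mathematical obstacle here: the entire proof is an algebraic instantiation of two earlier results. The only point demanding minor care is the bookkeeping of the fractional exponents — specifically, ensuring that the $\sqrt{T}$ factor buried inside $\RegLog$, once raised to the $\nicefrac{1}{3}$ power, recombines correctly with the external $T^{\nicefrac{2}{3}}$ to produce the final $T^{\nicefrac{5}{6}}$ rate — and, for the finite class, confirming that the collected logarithmic terms indeed match the stated form.
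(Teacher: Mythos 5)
Your proposal is correct and takes exactly the route the paper intends: \pref{cor:corEpsAdv} is obtained by plugging the oracle regret bounds of \pref{lem:advOracle} into the bound $\order\big((NK\RegLog(T,\calF))^{\nicefrac{1}{3}}T^{\nicefrac{2}{3}}\big)$ of \pref{thm:epsAdv} and recombining $T^{\nicefrac{1}{6}}\cdot T^{\nicefrac{2}{3}}=T^{\nicefrac{5}{6}}$. One minor remark: for the finite class your (honest) substitution yields an additional $(\log|\calF|)^{\nicefrac{1}{6}}$ factor that the corollary's stated bound suppresses, so the discrepancy you flag at the end lies in the statement rather than in your derivation.
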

While these regret bounds have a large dependence on $T$, the advantage of this algorithm is its computational efficiency as discussed before.

\paragraph{Better Exploration.}
Can we improve the algorithm via a strategy with an even smaller DEC?
In particular, what happens if we take the extreme and let $q_t$ be the minimizer of $\dec(q;f_t(x_t),r_t)$?
Indeed, this is exactly the approach in several prior works that adopt the DEC framework~\citep{foster2020adapting, zhang2023practical}, where the exact minimizer for DEC is characterized and shown to achieve a small DEC value.

On the other hand, for our problem, it appears quite difficult to analyze the exact DEC minimizer.
Somewhat surprisingly, however, we show that the same construction in \pref{eqn:stoOptProb} for the stochastic environment in fact also achieves a reasonably small DEC for the adversarial case:
\begin{restatable}{theorem}{decBound}\label{thm:dec_bound}
The following distribution 
\begin{align}\label{eqn:advOptProb}
    q_t=\argmax_{q\in\Delta(\calS)} \E_{S\sim q}\left[ R(S,f_t(x_t),r_t)\right] - \frac{(K+1)^4}{\gamma}\sum_{i=1}^N\log\frac{1}{w_i(q)},
\end{align}
satisfies $\dec(q_t,f_t(x_t),r_t)\leq \order\left(\frac{NK^4}{\gamma}\right)$.
\end{restatable}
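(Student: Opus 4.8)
The plan is to fix the prediction $v := f_t(x_t)$ and bound, for an \emph{arbitrary} adversarial pair $v^\star \in [0,1]^N$, $S^\star \in \calS$, the expression inside the max in \pref{eqn:DEC}; since the resulting bound will be independent of $v^\star,S^\star$, it upper bounds $\dec(q_t,v,r)$. Write $P := \E_{S\sim q_t}[\|\mu(S,v)-\mu(S,v^\star)\|_2^2]$ for the unscaled penalty. The engine is \pref{lem:lowRegretLowVar}, which holds verbatim for the strategy \pref{eqn:advOptProb} (it is the same optimization problem with $\gamma_m$ replaced by $\gamma$), combined with the reverse-Lipschitz inequality in \pref{lem:square2log}. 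The pivotal observation is that the penalty converts, via the first inequality of \pref{lem:square2log} together with $\E_{S\sim q_t}[\sum_{i\in S}(\cdot)]=\sum_i w_i(q_t)(\cdot)$, into the lower bound $\sum_{i=1}^N w_i(q_t)(v_i-v_i^\star)^2 \le 2(K+1)^4 P$ on the $w$-weighted squared value error --- exactly the object that the Cauchy--Schwarz/dispersion machinery consumes.

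First I would decompose the instantaneous regret by inserting $v$ and the prediction-optimal value:
\begin{align*}
R(S^\star,v^\star,r) - \E_{S\sim q_t}[R(S,v^\star,r)]
&= \underbrace{\big(R(S^\star,v^\star,r)-R(S^\star,v,r)\big)}_{\mathrm{(A)}} - \Delta^\star \\
&\quad + \underbrace{\big(\textstyle\max_{S'\in\calS}R(S',v,r)-\E_{S\sim q_t}[R(S,v,r)]\big)}_{\mathrm{(B)}} + \underbrace{\E_{S\sim q_t}[R(S,v,r)-R(S,v^\star,r)]}_{\mathrm{(C)}},
\end{align*}
where $\Delta^\star := \max_{S'\in\calS}R(S',v,r)-R(S^\star,v,r)\ge 0$ is the prediction-space suboptimality of $S^\star$. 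Term (B) is bounded directly by $N(K+1)^4/\gamma$ through \pref{eqn:low_regret_main}. The structural point I would stress is that the \emph{explicit} $-\Delta^\star$ here is designed to cancel a $+\Delta^\star$ that unavoidably appears when controlling (A).

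Next I would bound the two estimation terms. For (C), since $S$ is drawn from $q_t$, I use $R(S,v,r)-R(S,v^\star,r)=\sum_{i\in S}(\mu_i(S,v)-\mu_i(S,v^\star))r_i$, Cauchy--Schwarz, and $|S|\le K$ to get $|R(S,v,r)-R(S,v^\star,r)|\le \sqrt{K}\,\|\mu(S,v)-\mu(S,v^\star)\|_2$, then Jensen gives $\mathrm{(C)}\le\sqrt{KP}$. For (A), $S^\star$ is \emph{not} covered by $q_t$, so I instead invoke the reward Lipschitzness (\pref{lem:Lipschitz}), $\mathrm{(A)}\le\sum_{i\in S^\star}|v_i-v_i^\star|$, and apply a weighted Cauchy--Schwarz with weights $w_i(q_t)$. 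This splits into $\sqrt{\sum_{i\in S^\star}1/w_i(q_t)}$, bounded by the dispersion guarantee \pref{eqn:low_variance_main} as $\le N+\tfrac{\gamma}{(K+1)^4}\Delta^\star$, times $\sqrt{\sum_{i\in S^\star}w_i(q_t)(v_i-v_i^\star)^2}\le\sqrt{2(K+1)^4 P}$; using $\sqrt{a+b}\le\sqrt a+\sqrt b$ yields $\mathrm{(A)}\le\sqrt{2(K+1)^4 N P}+\sqrt{2\gamma\Delta^\star P}$. Then AM--GM gives $\sqrt{2\gamma\Delta^\star P}\le \Delta^\star+\tfrac12\gamma P$ (the constant chosen so the coefficient on $\Delta^\star$ is exactly $1$), cancelling $-\Delta^\star$, and a second round of AM--GM absorbs $\sqrt{2(K+1)^4 N P}$ and $\sqrt{KP}$ against the leftover $-\tfrac12\gamma P$, producing $\order((K+1)^4 N/\gamma)$ and $\order(K/\gamma)$. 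Summing with (B) gives $\dec(q_t,v,r)=\order(NK^4/\gamma)$.

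The hard part will be term (A): because $S^\star$ is the adversary's optimal subset under $v^\star$ and may be essentially unexplored by $q_t$, the penalty $P$ --- an average of distribution distances over $S\sim q_t$ only --- cannot directly control the estimation error on $S^\star$. The dispersion bound \pref{eqn:low_variance_main} is what rescues this, but at the cost of injecting the factor $\tfrac{\gamma}{(K+1)^4}\Delta^\star$, carrying both a $\gamma$ and an uncontrolled $\Delta^\star$. The whole argument then hinges on the explicit $-\Delta^\star$ from the decomposition exactly offsetting the $+\Delta^\star$ extracted by AM--GM, which requires the reverse-Lipschitz constant in \pref{lem:square2log}, the log-barrier weight $(K+1)^4$ in \pref{eqn:advOptProb}, and the dispersion constant in \pref{eqn:low_variance_main} to be matched so that every $\Delta^\star$- and $P$-dependent contribution cancels with coefficient at most one. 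Verifying this bookkeeping, rather than any individual inequality, is the delicate step, and it is precisely here that the ``low-regret-high-dispersion'' property explains why this non-minimizing strategy still attains a small DEC.
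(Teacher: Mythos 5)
Your proposal is correct and follows essentially the same route as the paper's proof: invoke \pref{lem:lowRegretLowVar} verbatim for $q_t$, use \pref{lem:Lipschitz} and the reverse-Lipschitz inequality of \pref{lem:square2log} to convert the penalty into a $w$-weighted squared value error, and cancel the $\Delta^\star$ injected by the dispersion bound \pref{eqn:low_variance_main} against the explicit $-\Delta^\star$ from the decomposition via AM--GM. The only cosmetic differences are that you apply Cauchy--Schwarz at the level of sums before AM--GM (the paper does AM--GM termwise) and bound the $q_t$-averaged term (C) directly through $\|\mu(S,v)-\mu(S,v^\star)\|_2$ rather than through the value differences; the bookkeeping you flag as delicate indeed closes with the constants as stated.
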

A couple of remarks are in order.
First, while for some cases such as the contextual bandit problem studied by~\citet{foster2020adapting}, this kind of log-barrier regularized strategies is known to be the exact DEC minimizer, one can verify that this is not the case for our DEC.
Second, the fact that the same strategy works for both the stochastic and the adversarial environments is similar to the case for contextual bandits where the same inverse gap weighting strategy works for both cases~\citep{foster2020beyond, simchi2021bypassing}, but to our knowledge, the connection between these two cases is unclear since their analysis is quite different.
Finally, our proof (in \pref{app:squareCB}) in fact relies on the same low-regret-high-dispersion property of \pref{lem:lowRegretLowVar}, which is a new way to bound DEC as far as we know.
More importantly, this to some extent demystifies the last two points: the reason that such log-barrier regularized strategies work regardless whether they are the exact minimizer or not and regardless whether the environment is stochastic or adversarial is all due to their inherent low-regret-high-dispersion property.

Combining \pref{thm:dec_bound} with \pref{lem:dec}, we obtain the following improved regret.
\begin{restatable}{theorem}{decFinal}\label{thm:dec_final}
        Under \pref{asm:realizability} and \pref{asm:regression_oracle}, \pref{alg:squareCB_framework} with $q_t$ calculated via \pref{eqn:advOptProb} and the optimal choice of $\gamma$
        ensures $\RegMNL=\order\left(K^2\sqrt{NT\RegLog(T,\calF)}\right)$.
\end{restatable}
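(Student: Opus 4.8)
The plan is to simply combine the reduction in \pref{lem:dec} with the DEC bound of \pref{thm:dec_bound} and then tune the single free parameter $\gamma$. First I would invoke \pref{lem:dec}, which applies to \pref{alg:squareCB_framework} run with \emph{any} sampling distribution and in particular with $q_t$ given by \pref{eqn:advOptProb}, to obtain
\begin{align*}
    \RegMNL \leq \E\left[\sum_{t=1}^T\dec(q_t;f_t(x_t),r_t)\right] + 2\gamma\RegLog(T,\calF).
\end{align*}
The crucial observation here is that the parameter $\gamma$ appearing in the DEC definition \pref{eqn:DEC}, in the log-barrier strategy \pref{eqn:advOptProb}, and in the second term $2\gamma\RegLog(T,\calF)$ is one and the same, so the whole right-hand side depends on a single tunable scalar $\gamma$ that I am free to optimize at the end.

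Next I would apply \pref{thm:dec_bound}, which guarantees that for every round $t$ the strategy \pref{eqn:advOptProb} achieves $\dec(q_t;f_t(x_t),r_t)\leq \order(NK^4/\gamma)$. Since this bound is deterministic --- it holds pointwise for every realization of the adversarial pair $(x_t,r_t)$ and of the oracle prediction $f_t$, and it does not depend on $t$ --- the expectation in the displayed inequality passes through unchanged and summing over the $T$ rounds merely multiplies by $T$. This yields
\begin{align*}
    \RegMNL \leq \order\left(\frac{TNK^4}{\gamma}\right) + 2\gamma\RegLog(T,\calF).
\end{align*}

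Finally I would optimize over $\gamma>0$. The right-hand side is the sum of a term decreasing in $\gamma$ and a term increasing in $\gamma$, so it is minimized by balancing the two; choosing $\gamma = \Theta\big(\sqrt{TNK^4/\RegLog(T,\calF)}\big)$ makes both terms of order $\sqrt{TNK^4\,\RegLog(T,\calF)}$ and gives
\begin{align*}
    \RegMNL = \order\left(\sqrt{TNK^4\,\RegLog(T,\calF)}\right) = \order\left(K^2\sqrt{NT\,\RegLog(T,\calF)}\right),
\end{align*}
which is exactly the claimed bound. Since both \pref{lem:dec} and \pref{thm:dec_bound} are already established, there is essentially no remaining obstacle in this final step; the only points deserving care are to confirm that the per-round DEC bound is genuinely uniform over the adversarially chosen $(x_t,r_t)$ and over $t$ (so that the sum is just $T$ identical copies), and that the shared parameter $\gamma$ is used consistently across the strategy, the DEC, and the reduction lemma, so that a single scalar optimization closes the argument.
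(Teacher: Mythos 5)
Your proposal is correct and follows exactly the paper's own argument: combine \pref{lem:dec} with the per-round DEC bound of \pref{thm:dec_bound}, sum over $T$ rounds, and balance the two terms by choosing $\gamma = \Theta\big(K^2\sqrt{NT/\RegLog(T,\calF)}\big)$. Your intermediate display $\order(TNK^4/\gamma) + 2\gamma\RegLog(T,\calF)$ even corrects a small typo in the paper's proof, which omits the factor $T$ in the first term despite using the same final choice of $\gamma$.
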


\begin{corollary}\label{cor:dec_linear} 
    Under \pref{asm:realizability}, \pref{alg:squareCB_framework} with $q_t$ defined in \pref{eqn:advOptProb} and the optimal choice of  $\gamma$ ensures the following regret bounds for the finite class (with Hedge as \AlgLog) and the linear class (with Online Gradient Descent as \AlgLog) discussed in \pref{lem:advOracle}:    
    \begin{itemize}
        \item (Finite class) $\RegMNL=\order\Big(K^2\sqrt{N\log\frac{K}{\beta}}T^{\frac{3}{4}}(\log|\calF|)^{\frac{1}{4}}\Big)$;
        \item (Linear class) $\RegMNL=\order\left(K^2\sqrt{NB}T^{\frac{3}{4}}\right)$.
    \end{itemize}
\end{corollary}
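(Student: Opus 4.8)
The plan is to combine the master regret bound of \pref{thm:dec_final} with the concrete online-regression-oracle guarantees of \pref{lem:advOracle}, since this corollary is simply an instantiation of the former for the two running examples. First I would invoke \pref{thm:dec_final}, which states that running \pref{alg:squareCB_framework} with the log-barrier strategy \pref{eqn:advOptProb} and the optimal $\gamma$ yields $\RegMNL=\order(K^2\sqrt{NT\RegLog(T,\calF)})$, so that the only remaining task is to substitute the appropriate $\RegLog(T,\calF)$ for each class.

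For the finite class, I would plug in the Hedge guarantee $\RegLog(T,\calF)=\order(\sqrt{T\log|\calF|}\log\frac{K}{\beta})$ from \pref{lem:advOracle}. Substituting gives $K^2\sqrt{NT\cdot\sqrt{T\log|\calF|}\log\frac{K}{\beta}}$; pulling the $\sqrt{T}$ coming from the $T$ factor together with the $T^{1/4}$ coming from $\sqrt{\sqrt{T}}$ yields the $T^{3/4}$ rate, and collecting the logarithmic factors produces $\order(K^2\sqrt{N\log\frac{K}{\beta}}\,T^{3/4}(\log|\calF|)^{1/4})$. For the linear class, I would instead plug in the Online Gradient Descent guarantee $\RegLog(T,\calF)=\order(B\sqrt{T})$, so that the bound becomes $K^2\sqrt{NT\cdot B\sqrt{T}}=\order(K^2\sqrt{NB}\,T^{3/4})$ after the same exponent bookkeeping.

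This argument is essentially routine algebraic substitution, so I do not anticipate a genuine obstacle; the only points requiring care are (i) confirming that Hedge and Online Gradient Descent are admissible instances of \AlgLog for the respective classes---which is exactly the content of \pref{lem:advOracle}---and (ii) tracking the exponents correctly when the inner $\sqrt{T}$ hidden inside $\RegLog(T,\calF)$ is composed with the outer square root in \pref{thm:dec_final}. It is worth emphasizing that this composition is precisely what degrades the otherwise optimal $\sqrt{T}$ rate of the master bound into the $T^{3/4}$ rate seen here: were a fast $\otil(1)$ rate available for $\RegLog$ (an open problem flagged after \pref{lem:advOracle}), the same substitution would immediately recover a $\sqrt{T}$ bound.
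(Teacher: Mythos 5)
Your proposal is correct and matches the paper's (implicit) argument exactly: the corollary is obtained by substituting the Hedge and Online Gradient Descent regret bounds from \pref{lem:advOracle} into the master bound $\order(K^2\sqrt{NT\RegLog(T,\calF)})$ of \pref{thm:dec_final}, and your exponent bookkeeping ($\sqrt{T\cdot\sqrt{T}}=T^{3/4}$) is right for both classes. Nothing further is needed.
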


We remark that if the ``fast rate'' discussed after \pref{lem:advOracle} exists, we would have obtained the optimal $\sqrt{T}$-regret here.
Despite having worse dependence on $T$, however, our result for the linear case enjoys three advantages compared to prior work~\citep{chen2020dynamic, oh2021multinomial, perivier2022dynamic}: 1) no exponential dependence on $B$ (as in all our other results), 2) no dependence at all on the dimension $d$, and 3) valid even when contexts and rewards are completely adversarial. We refer the reader to \pref{tab:res} again for detailed comparisons.
\subsection{Second Approach: Feel-Good Thompson Sampling}\label{sec:FGTS}
The second approach we take to derive an algorithm for adversarial contextual MNL bandits is inspired by the Feel-Good Thompson Sampling algorithm of~\citet{zhang2022feel} for contextual bandits. Specifically, the algorithm maintains a distribution $p_t$ over the value function class $\calF$, and at each round $t$, it samples $f_t$ from $p_t$ and selects the subset $S_t$ that maximizes the expected reward with respect to the value function $f_t$ and the reward vector $r_t$. After receiving the purchase decision $i_t$, the algorithm constructs a loss estimator $\ellhat_{t,f}$ for each $f\in\calF$ as defined in \pref{eqn:fgts}, and updates the distribution $p_t$ using a standard multiplicative update with learning rate $\eta$.
Due to space limit, the pseudocode is deferred to the appendix; see \pref{alg:FGTS} in \pref{app:FGTS}.

The idea of the loss estimator \pref{eqn:fgts} is as follows. The first term measures how accurate $f$ is via the squared distance between the multinomial distribution induced by $f$ and the true outcome.
The second term, which is the highest expected reward one could get if the value function was $f$, is subtracted from the first term to serve as a form of optimism (the ``feel-good'' part), encouraging exploration for those $f$'s that promise a high reward.

We extend the analysis of~\citet{zhang2022feel} and combine it with our technical lemmas (such as \pref{lem:square2log} and \pref{lem:Lipschitz}) to prove the following regret guarantee (see \pref{app:FGTS} for the proof).
\begin{restatable}{theorem}{fgtsMNL}\label{thm:FGTS_MNL}
        Under \pref{asm:realizability}, \pref{alg:FGTS} with learning rate $\eta\leq 1$ ensures
    $
        \RegMNL \leq 12\eta NK(K+1)^4T +4\eta T + \frac{Z_T}{\eta}
    $,
    where $Z_T=-\E[\log\E_{f\sim p_1}[\exp(-\eta\sum_{t=1}^T(\ellhat_{t,f}-\ellhat_{t,f^\star}))]]$.
\end{restatable}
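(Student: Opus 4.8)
The plan is to follow the Feel-Good Thompson Sampling template of \citet{zhang2022feel}, but to replace its squared-loss/reward machinery with the MNL-specific bridges provided by \pref{lem:square2log} and \pref{lem:Lipschitz}. The backbone is the standard free-energy (exponential-weights) identity for the multiplicative update $p_{t+1}(f)\propto p_t(f)\exp(-\eta\ellhat_{t,f})$: centering every loss at $\ellhat_{t,f^\star}$ and telescoping the per-round log-partition functions gives, pathwise,
\[
\sum_{t=1}^T\left(-\tfrac{1}{\eta}\log\E_{f\sim p_t}\big[e^{-\eta(\ellhat_{t,f}-\ellhat_{t,f^\star})}\big]\right) = -\tfrac{1}{\eta}\log\E_{f\sim p_1}\big[e^{-\eta\sum_{t}(\ellhat_{t,f}-\ellhat_{t,f^\star})}\big],
\]
whose expectation is exactly $Z_T/\eta$. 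Using $e^{-u}\le 1-u+u^2$ for $u\ge -1$ together with $\log(1+z)\le z$, and the boundedness of $\eta(\ellhat_{t,f}-\ellhat_{t,f^\star})$ for $\eta\le 1$, each per-round term is lower bounded by $\E_{f\sim p_t}[\ellhat_{t,f}-\ellhat_{t,f^\star}]-\eta\,\E_{f\sim p_t}[(\ellhat_{t,f}-\ellhat_{t,f^\star})^2]$. Hence it suffices to (i) upper bound the per-round MNL regret by $\E_{f\sim p_t}[\ellhat_{t,f}-\ellhat_{t,f^\star}]$ up to an $O(\eta)$ slack, and (ii) control the second-order term $\sum_t\E\E_{f\sim p_t}[(\ellhat_{t,f}-\ellhat_{t,f^\star})^2]$.

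For (i), I would argue \emph{pointwise} in the sampled $f_t$ before averaging, which is what sidesteps the coupling between the decision and the posterior average. Writing $b_f(x,r)=\max_S R(S,f(x),r)$ and $S_t=\argmax_S R(S,f_t(x_t),r_t)$, the instantaneous regret splits as
\[
\max_S R(S,f^\star(x_t),r_t)-R(S_t,f^\star(x_t),r_t) = \underbrace{\big(b_{f^\star}(x_t,r_t)-b_{f_t}(x_t,r_t)\big)}_{\text{optimism}} + \underbrace{\big(R(S_t,f_t(x_t),r_t)-R(S_t,f^\star(x_t),r_t)\big)}_{\text{estimation}}.
\]
The estimation gap is at the \emph{realized} subset $S_t$, so \pref{lem:Lipschitz} bounds it by the reward Lipschitz constant times $\sqrt{\sum_{i\in S_t}(f_{t,i}(x_t)-f^\star_i(x_t))^2}$, and the left inequality of \pref{lem:square2log} turns this into a constant multiple of $(K+1)^2\|\mu(S_t,f_t(x_t))-\mu(S_t,f^\star(x_t))\|_2$. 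The decisive step is the bias--variance identity $\E_{i_t}[\|\mu(S_t,f_t(x_t))-\e_{i_t}\|_2^2-\|\mu(S_t,f^\star(x_t))-\e_{i_t}\|_2^2]=\|\mu(S_t,f_t(x_t))-\mu(S_t,f^\star(x_t))\|_2^2$ (where $\e_{i_t}$ is the basis vector of the realized outcome and $i_t\sim\mu(S_t,f^\star(x_t))$): together with an AM--GM step whose quadratic coefficient is matched to the estimation coefficient of $\ellhat_{t,f}$, this rewrites the estimation gap as $\E_{i_t}[\ellhat_{t,f_t}-\ellhat_{t,f^\star}]+(\text{feel-good terms})+O(\eta)$, and the feel-good terms recovered here \emph{cancel} the optimism gap. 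Averaging the resulting inequality over $f_t\sim p_t$ (only now, after the cancellation) yields $\E[\text{reg}_t]\le\E_{f\sim p_t}\E_{i_t}[\ellhat_{t,f}-\ellhat_{t,f^\star}]+O(\eta)$.

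For (ii), I would bound $(\ellhat_{t,f}-\ellhat_{t,f^\star})^2$ using that the squared-distance estimation term lies in $[0,2]$ and the feel-good reward in $[0,1]$, so the per-round variance is controlled by the squared estimation coefficient; this coefficient is fixed to balance the AM--GM slack from step (i), and tracking it through \pref{lem:square2log} (the $(K+1)^4$ factor) and the reward Lipschitz/second-moment bounds (the remaining $NK$) yields the $NK(K+1)^4$ constant, while the feel-good part contributes the separate $O(\eta T)$ piece. Summing (i), invoking the free-energy identity for the $Z_T/\eta$ term, and adding (ii) then gives the claimed $12\eta NK(K+1)^4T+4\eta T+Z_T/\eta$.

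I expect the main obstacle to be exactly the pointwise cancellation in step (i): the sampled $f_t$ plays two roles — it fixes the played subset $S_t$ and it is a draw from $p_t$ — and the argument only goes through if the estimation gap (evaluated at the very subset $f_t$ itself chooses) is converted, via the bias--variance identity at that \emph{same} realized $S_t$, into precisely the $\ellhat_{t,f_t}$ whose feel-good part annihilates the optimism term, \emph{before} any averaging over the posterior. Getting the AM--GM coefficient to line up with the estimation coefficient baked into $\ellhat_{t,f}$ (so that the spurious squared-distance terms cancel rather than merely being bounded) is the delicate accounting, and it is also where the precise $N$, $K$, and $(K+1)^4$ factors and the requirement $\eta\le 1$ (needed to keep the quadratic remainder of $e^{-\eta X}$ valid) enter.
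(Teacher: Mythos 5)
Your overall architecture (regret $=$ estimation gap $-$ feel-good gap, bias--variance identity for the multinomial log-partition, reverse Lipschitzness from \pref{lem:square2log}, free-energy telescoping to $Z_T/\eta$) matches the paper's, but there is a genuine gap in your step (i), and it sits exactly at the point you identify as the "decisive step." The multiplicative-weights potential controls $\E\bigl[\E_{f\sim p_t}[\ellhat_{t,f}-\ellhat_{t,f^\star}]\bigr]$ where $\ellhat_{t,f}$ is evaluated at the \emph{realized} $S_t=S(f_t(x_t),r_t)$ and $f$ is an \emph{independent} fresh draw from $p_t$ --- a decoupled (off-diagonal) average. Your plan of proving a pointwise inequality in the sampled $f_t$ and only then averaging over $f_t\sim p_t$ produces the \emph{coupled} (diagonal) quantity $\E_{f_t\sim p_t}\bigl[\E_{i_t}[\ellhat_{t,f_t}-\ellhat_{t,f^\star}]\bigr]$ with the same $f_t$ appearing in both the subset and the loss; writing this as $\E_{f\sim p_t}\E_{i_t}[\ellhat_{t,f}-\ellhat_{t,f^\star}]$ hides the dependence but does not remove it. Neither of the two quantities dominates the other, so your chain does not connect to what the exponential-weights bound actually gives you. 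This is not a presentational issue: the paper's proof spends its entire Eq.~(28)--(29) block (the AM--GM with weights $w_{t,i}=\Pr_{f\sim p_t}[i\in S(f(x_t),r_t)]$, i.e.\ $|f_i-f^\star_i|\ind\{i\in S(f)\}\le \frac{\ind\{i\in S(f)\}}{4\mu w_{t,i}}+\mu w_{t,i}(f_i-f^\star_i)^2$, summed over all $N$ items) precisely to convert the diagonal into the decoupled form, and the price of that conversion, $NT/(4\mu)$ with $\mu=\frac{1}{48\eta K(K+1)^4}$, is the \emph{sole} source of the $12\eta NK(K+1)^4T$ term. Your accounting places the $N$ in the variance term of step (ii), which is a symptom of the missing decoupling: nothing in your step (i) ever sums over $N$ items, and if your pointwise argument were valid it would yield a bound with no $N$ at all, which should have raised a flag.

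A secondary issue: expanding the full exponent via $e^{-u}\le 1-u+u^2$ requires $|\eta(\ellhat_{t,f}-\ellhat_{t,f^\star})|$ to be small, but the squared-distance part of $\ellhat_{t,f}$ is scaled by $\frac{1}{8\eta K}$ and so the centered loss is of magnitude $\Theta(1/\eta)$, not $O(1)$; moreover its naive second moment contributes $\eta\cdot O(1/\eta^2)=O(1/\eta)$ per round, which diverges. You need the self-bounding property (the conditional variance of the least-squares increment is dominated by a constant times its conditional mean $\|\mu(S_t,f)-\mu(S_t,f^\star)\|_2^2$) so that the second-order term is absorbed into the first-order term rather than added to the regret. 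The paper handles both points via the Cauchy--Schwarz split of the log-moment-generating function in \pref{lem:LS_FG_sub}, treating the $O(1)$-exponent least-squares factor and the $O(\eta)$-exponent feel-good factor separately; some such device is needed in your route as well.
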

The term $Z_T$ should be interpreted as a certain complexity measure for the class $\calF$. 
To better understand this regret bound, we again instantiate it for the two classes below.
\begin{restatable}{corollary}{corFGTS}\label{cor:fgts}
    Under \pref{asm:realizability}, \pref{alg:FGTS} with the optimal choice of $\eta$ ensures the following regret bounds for finite class and the linear class:
    \begin{itemize}
    \item (Finite class) $\RegMNL= \order\left(K^{2.5}\sqrt{NT\log |\calF|}\right)$;
    \item (Linear class) $\RegMNL=\order\left(K^{2.5}\sqrt{dNT\log (BT)}\right)$.
\end{itemize}
\end{restatable}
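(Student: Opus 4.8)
The plan is to derive both bounds directly from \pref{thm:FGTS_MNL} by first tuning $\eta$ and then bounding the complexity term $Z_T$ separately for each class. Observe that in the bound $12\eta NK(K+1)^4 T + 4\eta T + Z_T/\eta$, the first term dominates the second (since $N,K\ge 1$), so the guarantee simplifies to $\order(\eta N K^5 T + Z_T/\eta)$. Choosing $\eta = \Theta\big(\sqrt{Z_T/(NK^5 T)}\big)$ --- which respects the constraint $\eta\le 1$ whenever $T$ is large enough, and can otherwise be capped at $\eta=1$ --- balances the two terms and yields $\RegMNL = \order\big(K^{2.5}\sqrt{NT\, Z_T}\big)$. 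It then remains only to show $Z_T = \order(\log|\calF|)$ for the finite class and $Z_T = \order(d\log(BT))$ for the linear class.

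For the finite class I would take the prior $p_1$ to be uniform over $\calF$. Since the exponent $-\eta\sum_{t}(\ellhat_{t,f}-\ellhat_{t,f^\star})$ vanishes identically at $f=f^\star$, the inner expectation is lower bounded by the single term $p_1(f^\star)\cdot 1 = 1/|\calF|$, whence $Z_T = -\E[\log(\cdots)] \le \log|\calF|$. Substituting this into the balanced bound above immediately gives the finite-class result.

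The linear case is the crux and proceeds by a small-ball argument around the true parameter $\theta^\star$. I would take $p_1$ uniform over the feasible ball $\{\theta:\norm{\theta}_2\le B\}$ and restrict the inner expectation to a Euclidean ball $\calB(\theta^\star,\epsilon)$ of radius $\epsilon$. The key technical step is to establish that $\theta \mapsto \ellhat_{t,f_\theta}$ is $L$-Lipschitz with $L=\poly(K)$: each coordinate $f_{\theta,i}(x)=e^{\theta^\top x_i - B}$ has gradient of norm at most $\norm{x_i}_2\le 1$, so $\theta\mapsto f_\theta(x_t)$ is $1$-Lipschitz; composing with the Lipschitzness of $\mu(S,\cdot)$ and of $R(S,\cdot,r)$ (uniform over $S$, via \pref{lem:square2log} and \pref{lem:Lipschitz}), both the squared-distance term and the feel-good max-reward term of \pref{eqn:fgts} are Lipschitz in $\theta$. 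Consequently $\big|\sum_{t}(\ellhat_{t,f_\theta}-\ellhat_{t,f^\star})\big|\le LT\epsilon$ uniformly over $\calB(\theta^\star,\epsilon)$, which lower bounds the inner expectation by $p_1(\calB(\theta^\star,\epsilon))\,\exp(-\eta LT\epsilon) \gtrsim (\epsilon/B)^d \exp(-\eta LT\epsilon)$. Taking $\epsilon=1/(LT)$ makes the exponential $\Omega(1)$ and yields $Z_T \le d\log(BLT)+\order(1)=\order(d\log(BT))$, absorbing the $\poly(K)$ factor inside the logarithm.

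The main obstacle I anticipate is the Lipschitz-constant bookkeeping in the linear case: one must track the $\poly(K)$ dependence through the full composition (value map, $\mu$, reward, and the argmax inside the feel-good term) while verifying that the bias $-B$ in the exponent keeps $f_\theta(x)\in[0,1]^N$ so that all the relevant derivatives stay bounded. A secondary technical point is the boundary effect --- if $\theta^\star$ lies near the boundary of the ball, only a fraction of $\calB(\theta^\star,\epsilon)$ is feasible --- but this costs at most a $2^{-d}$ factor in the volume ratio, contributing only an additional $\order(d)$ to $Z_T$ and leaving the final bound unchanged.
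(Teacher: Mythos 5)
Your proposal is correct and follows essentially the same route as the paper: both bounds are obtained by balancing $\eta$ in \pref{thm:FGTS_MNL}, bounding $Z_T$ for the finite class via the atom at $f^\star$ (where the exponent vanishes), and for the linear class via a small-ball argument of radius $\Theta(1/T)$ around $\theta^\star$ using the Lipschitzness of $\theta\mapsto\ellhat_{t,f_\theta}$, with the volume ratio contributing $d\log(BT)$. The only nitpick is that the \emph{forward} Lipschitzness of $\mu(S,\cdot)$ you need is not what \pref{lem:square2log} provides (that lemma gives the reverse direction); it must be verified directly, as the paper implicitly does.
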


In terms of the dependence on $T$, \pref{alg:FGTS} achieves the best (and in fact optimal) regret bounds among all our results. 
For the linear case, it even has only logarithmic dependence on $B$, a potential doubly-exponential improvement compared to prior works.
The caveat is that there is no efficient way to implement the algorithm even for the linear case and even when $K$ is a constant (unlike all our other algorithms), because sampling from $p_t$, a distribution that does not enjoy log-concavity or other nice properties, is generally hard.
We leave the question of whether there exists a computationally efficient algorithm (even only for small $K$) with a $\sqrt{T}$-regret bound that has no exponential dependence on $B$ as a key future direction.

\bibliography{ref}
\newpage

\appendix
\section{Additional Related Works}\label{app:related_work}
As mentioned, our work is closely related to the recent trend of designing contextual bandits algorithms for a general function class. 
Specifically, under stochastic context, \citet{xu2020upper, simchi2021bypassing} designed algorithms based on an offline squared loss regression oracle and achieved optimal regret guarantees. Under adversarial context, there are two lines of works. The first one reduces the contextual bandit problem to online regression~\citep{foster2020beyond,foster2021efficient,foster2021statistical,zhu2022contextual,zhang2023practical}, while the second one is based on the ability to sample from a certain distribution over the function class using Markov chain Monte Carlo methods~\citep{zhang2022feel,zhang2023online2}. We follow and greatly extend the ideas of all these approaches to design algorithms for contextual MNL bandits.

\section{Omitted Details in \pref{sec:MNL_sto}}\label{app:mnl_sto}

\subsection{Offline Regression Oracle}\label{app:offline_oracle}
We start by proving \pref{lem:stoOracle}, which shows that ERM strategy satisfies \pref{asm:gen_error} for the finite class and the linear function class. \\

\begin{proof}[of \pref{lem:stoOracle}]
~We first show that our log loss function $\ell_{\log}(\mu(S,f(x)),i)$ satisfies the so-called strong $1$-central condition (Definition 7 of~\citet{grunwald2020fast}), which states that there exists $f_0\in\calF$, such that for any $f\in\calF$, $$\E_{(x,S,i)\sim \calH}\left[\exp(\ell_{\log}(\mu(S,f(x)),i)-\ell_{\log}(\mu(S,f_0(x)),i))\right]\leq 1.$$
    Indeed, by picking $f_0=f^\star$, we know that
    \begin{align*}
    &\E_{(x,S,i)\sim \calH}\left[\exp(\ell_{\log}(\mu(S,f(x),i))-\ell_{\log}(S,f^\star(x),i))\right]\\
    &=\E_{(x,S)}\E_{i\sim \mu(S,f^\star(x))}\left[\frac{\mu_i(S,f)}{\mu_i(S,f^\star)}\right] \\
    &=\E_{(x,S)}\left[\sum_{i\in S\cup\{0\}}\mu_i(S,f)\right] = 1,
\end{align*}
certifying the strong 1-central condition.

Now, we first consider the case where $\calF$ is finite. Since $f_i(x)\geq \beta$ for all $x\in\calX$ and $i\in[N]$, we know that for any $i\in\NZ$, we have (defining $f_0(x)=1$)
\begin{align*}
    \ell_{\log}(\mu(S,f(x)),i) = \log\frac{1+\sum_{j\in S}f_j(x)}{f_i(x)} \leq \log\frac{K+1}{\beta}.
\end{align*}
Therefore, according to Theorem 7.6 of~\citep{van2015fast}, we know that given $n$ i.i.d samples $D=\{(x_k,S_k,i_k)\}_{k\in [n]}$, ERM predictor $\wh{f}_D$ guarantees that with probability $1-\delta$:
\begin{align*}
    \E_{(x,S,i)\sim \calD}\left[\ell_{\log}(\mu(S,\wh{f}_D(x)),i)\right] \leq \E_{(x,S,i)\sim \calD}\left[\ell_{\log}(\mu(S,f^\star(x)),i)\right] + \order\left(\frac{\log\frac{K}{\beta}\log\frac{|\calF|}{\delta}}{n}\right).
\end{align*}
Next, we consider the linear function class. In this case, we know that $x_i^\top \theta-B\in [-2B, 0]$ for all $x_i$. Therefore, $\ell_{\log}(\mu(S,f(x)), i)$ is bounded by $2B+2\ln N$ for all $x\in\calX$, $f\in\calF$, $S\in\calS$ and $i\in[N]$ since
\begin{align*}
    \ell_{\log}(\mu(S,f(x)),i) =\log \frac{1+\sum_{j\in S}\exp(x_{j}^\top \theta)}{\exp(x_{i}^\top \theta)} \leq \log\frac{1+K}{e^{-2B}} \leq 2B+2\log K,
\end{align*}
and the same bound clearly holds as well for $i=0$.
Moreover, since
\begin{align*}
    \left\|\nabla_\theta\log\frac{1+\sum_{j\in S}\exp(x_j^\top\theta)}{\exp(x_i^\top \theta)}\right\|_2 = \left\|\frac{\sum_{j\in S}\exp(\theta^\top x_j)x_j}{1+\sum_{j\in S}\exp(\theta^\top x_j)}-x_i\right\|_2 \leq 2,
\end{align*}
we know that the $\epsilon$-covering number of $\ell_{\log}\circ \calF$ is bounded by $\left(\frac{16B}{\epsilon}\right)^d$.
Therefore, according to Theorem 7.7 of~\citep{van2015fast}, we know that given $n$ i.i.d samples $D=\{(x_k,S_k,i_k)\}_{k\in [n]}$, ERM predictor $\wh{f}_D$ guarantees that with probability $1-\delta$:
\begin{align*}
    \E_{(x,S,i)\sim \calD}\left[\ell_{\log}(\mu(S,\wh{f}_D(x)),i)\right] \leq \E_{(x,S,i)\sim \calD}\left[\ell_{\log}(\mu(S,f^\star(x)),i)\right] + \order\left(\frac{dB\log K\log B\log\frac{1}{\delta}}{n}\right).
\end{align*}
\end{proof}

\subsection{Analysis of \pref{alg:falcon-mnl}}\label{app:analysis_falcon}
We first prove the following lemma, which shows that the expected reward function $R(S,v,r)$ is $1$-Lipschitz in the value vector $v$.
\begin{lemma}\label{lem:Lipschitz}
    Given $r\in [0,1]^N$ and $S\subseteq [N]$, function $R(S,v,r)=\frac{\sum_{i\in S}r_iv_i}{1+\sum_{i\in S}v_i}$ satisfies that for any $v',v\in[0,\infty)^N$, $\left|R(S,v,r)-R(S,v',r)\right|\leq \sum_{i\in S}|v_i-v_i'|$.
\end{lemma}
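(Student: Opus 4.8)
The plan is to control the $\ell_\infty$ norm of the gradient of the map $v \mapsto R(S,v,r)$ and then integrate along the line segment joining $v'$ and $v$. First I would observe that since every $v_i \geq 0$, the denominator $D(v) := 1 + \sum_{i\in S} v_i$ is at least $1$ and hence bounded away from zero on the convex domain $[0,\infty)^N$; therefore $R(S,\cdot,r)$ is continuously differentiable there. Moreover it depends only on the coordinates indexed by $S$, so $\partial R/\partial v_j = 0$ whenever $j \notin S$, and only these coordinates will contribute on the right-hand side.

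The crux is a coordinatewise derivative bound. Writing $A(v) := \sum_{i\in S} r_i v_i$, a direct computation gives, for $j \in S$,
\[
\frac{\partial R}{\partial v_j}(S,v,r) = \frac{r_j D(v) - A(v)}{D(v)^2}.
\]
I would then show this lies in $[-1,1]$. Since $r_j \in [0,1]$ and $A(v) \geq 0$, the numerator is at most $D(v)$, so $\partial R/\partial v_j \leq 1/D(v) \leq 1$. In the other direction, since $r_i \leq 1$ forces $A(v) = \sum_{i\in S} r_i v_i \leq \sum_{i\in S} v_i = D(v)-1$, and $r_j D(v) \geq 0$, the numerator is at least $-(D(v)-1)$, giving $\partial R/\partial v_j \geq -(D(v)-1)/D(v)^2 \geq -1/D(v) \geq -1$. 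Hence $|\partial R/\partial v_j| \leq 1$ for every $j \in S$.

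Finally, I would apply the fundamental theorem of calculus along $t \mapsto v' + t(v - v')$, which remains in $[0,\infty)^N$ by convexity:
\[
R(S,v,r) - R(S,v',r) = \int_0^1 \sum_{j\in S} \frac{\partial R}{\partial v_j}\big(S, v' + t(v-v'), r\big)\,(v_j - v_j')\,\diff t.
\]
Bounding each partial derivative in absolute value by $1$ then yields $|R(S,v,r) - R(S,v',r)| \leq \sum_{j\in S} |v_j - v_j'|$, as claimed. I do not anticipate a genuine obstacle; the only points needing care are the sign analysis in the derivative bound (which relies precisely on $r \in [0,1]^N$ so that $A(v) \leq D(v)-1$) and the observation that the gradient is supported on $S$, so only coordinates in $S$ appear in the final sum.
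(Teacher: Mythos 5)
Your proposal is correct and follows essentially the same route as the paper: bound each partial derivative $\partial R/\partial v_j$ for $j\in S$ by $1$ in absolute value (using $r\in[0,1]^N$ to control both signs of the numerator), then integrate along the segment from $v'$ to $v$. Your write-up is somewhat more explicit about the sign analysis and the final integration step, but there is no substantive difference.
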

\begin{proof}
    ~Taking derivative with respect to $v_j$ for $j\in S$, we know that
    \begin{align*}
        \left|\nabla_{v_j}R(S,v,r)\right| = \left|\frac{r_j(1+\sum_{i\in S}v_i)-\sum_{j\in S}r_jv_j}{(1+\sum_{i\in S}v_i)^2}\right| \leq \max\left\{\frac{r_j}{1+\sum_{i\in S}v_i},\frac{\sum_{i\in S}v_i}{(1+\sum_{i\in S}v_i)^2}\right\} \leq 1, 
    \end{align*}
    where both inequalities are because $r_j\in [0,1]$. This finishes the proof.
\end{proof}

Next, we restate and prove \pref{lem:variance_bound}.
\varianceBound*
\begin{proof} 
We proceed as:
    \begin{align}
        &\left|R_m(\pi)-R(\pi)\right| \nonumber\\
        &=\left|\E_{(x,r)\sim \calD}\left[R(\pi(x,r),f_m(x),r) - R(\pi(x,r),f^\star(x),r) \right]\right| \nonumber\\
        &\leq \E_{(x,r)\sim \calD}\left[\sum_{i=1}^N\mathbbm{1}\{i\in \pi(x,r)\}|f_{m,i}(x) - f_i^\star(x)| \right] \label{eqn:re_use_eps}\\
        &\leq \E_{(x,r)\sim \calD}\left[\sqrt{\sum_{i=1}^N\frac{\mathbbm{1}\{i\in \pi(x,r)\}}{w_i(q_{m-1}|x,r)}\sum_{i=1}^Nw_i(q_{m-1}|x,r)\left(f_{m,i}(x) - f_i^\star(x)\right)^2} \right] \tag{Cauchy–Schwarz inequality}\\
        &\leq \sqrt{\E_{(x,r)\sim \calD}\left[\sum_{i=1}^N\frac{\mathbbm{1}\{i\in \pi(x,r)\}}{w_i(q_{m-1}|x,r)}\right]}\cdot\sqrt{\E_{(x,r)\sim \calD}\left[\sum_{i=1}^Nw_i(q_{m-1}|x,r)\left(f_{m,i}(x) - f_i^\star(x)\right)^2\right]}  \tag{Cauchy–Schwarz inequality}\nonumber\\
        &= \sqrt{V(q_{m-1},\pi)}\cdot\sqrt{\E_{(x,r)\sim \calD}\left[\sum_{i=1}^Nw_i(q_{m-1}|x,r)\left(f_{m,i}(x) - f_i^\star(x)\right)^2\right]} \nonumber\\
        &= \sqrt{V(q_{m-1},\pi)}\cdot\sqrt{\E_{(x,r)\sim \calD, S\sim q_{m-1}(x,r)}\left[\sum_{i=1}^N\left(f_{m,i}(x) - f_i^\star(x)\right)^2\right]},\label{eqn:reward_gap} 
    \end{align}
    where the first inequality uses the convexity of the absolute value function and \pref{lem:Lipschitz}.
\end{proof}

Next, to prove \pref{lem:square2log}, we first prove the following key technical lemma (where $\one$ denotes the all-one vector). 
\begin{lemma}\label{lem:MNL_square_error}
    Let $h(a) = \frac{a}{1+\one^\top a}$ for $a\in [0,1]^d$. Then, for any $a,b\in [0,1]^d$, we have
    \begin{align*}
        \frac{1}{2(d+1)^4}\|a-b\|_2^2 \leq \|h(a)-h(b)\|_2^2.
    \end{align*}
\end{lemma}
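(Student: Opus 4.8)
\textbf{Proof plan for \pref{lem:MNL_square_error}.}
The statement asks for a universal lower bound on how much the map $h(a) = a/(1+\one^\top a)$ contracts squared distances. The natural strategy is to bound the minimum singular value of the Jacobian $Dh(a)$ uniformly over the domain $[0,1]^d$, and then integrate along the segment from $b$ to $a$. First I would compute $Dh(a)$ explicitly. Writing $s = 1 + \one^\top a$, a direct differentiation gives
\begin{align*}
    Dh(a) = \frac{1}{s}\,I - \frac{1}{s^2}\,a\,\one^\top,
\end{align*}
since $\partial h_i/\partial a_j = \frac{\delta_{ij}}{s} - \frac{a_i}{s^2}$. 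This is the identity scaled by $1/s$ minus a rank-one correction, which is convenient because the spectrum of a scaled identity plus a rank-one perturbation is essentially explicit.

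The key step is lower-bounding $\sigma_{\min}(Dh(a))$, equivalently lower-bounding $\|Dh(a)\,u\|_2$ over unit vectors $u$. Using the explicit form,
\begin{align*}
    Dh(a)\,u = \frac{1}{s}\,u - \frac{\one^\top u}{s^2}\,a,
\end{align*}
so I must show $\|Dh(a)u\|_2 \geq c$ for all unit $u$, with $c \geq \frac{1}{\sqrt{2}(d+1)^2}$ to match the stated constant. The cleanest route is to control the smallest singular value via $\sigma_{\min}(Dh(a)) \geq \frac{1}{s} - \frac{1}{s^2}\|a\one^\top\|_{\mathrm{op}}$ when this is positive, and otherwise handle the regime directly; note $\|a\one^\top\|_{\mathrm{op}} = \|a\|_2\|\one\|_2 = \|a\|_2\sqrt{d}$. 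Since $a \in [0,1]^d$, we have $\|a\|_2 \leq \sqrt{d}$ and $s \leq d+1$, and the subtracted rank-one term can be comparable to $1/s$, so a naive triangle-inequality bound may fail to stay positive. I expect this to be the main obstacle: the rank-one term is not negligible, so rather than bounding the operator norm of the correction, I would instead diagonalize. Because $Dh(a)$ maps the orthogonal complement of $\mathrm{span}\{\one, a\}$ isometrically (up to the scalar $1/s$), the only directions where contraction is severe lie in the two-dimensional span of $\one$ and $a$; restricting to that subspace reduces the problem to computing the smaller eigenvalue of an explicit $2\times 2$ matrix, whose entries depend only on $\|a\|_2^2$, $\one^\top a$, and $s$.

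Once $\sigma_{\min}(Dh(a)) \geq \frac{1}{\sqrt{2}(d+1)^2}$ is established uniformly over $a \in [0,1]^d$, the final step is a standard fundamental-theorem-of-calculus argument: writing $a(t) = b + t(a-b)$ for $t \in [0,1]$, the segment stays inside the convex set $[0,1]^d$, and
\begin{align*}
    h(a) - h(b) = \int_0^1 Dh(a(t))\,(a - b)\,dt,
\end{align*}
from which $\|h(a)-h(b)\|_2 \geq \sigma_{\min}\,\|a-b\|_2$ follows after taking norms and using the mean value inequality for the segment (more carefully, by integrating the quadratic form). Squaring yields $\|h(a)-h(b)\|_2^2 \geq \frac{1}{2(d+1)^4}\|a-b\|_2^2$, as claimed. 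The bulk of the difficulty is entirely in the uniform singular-value bound; the integration step is routine.
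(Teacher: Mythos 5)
Your plan is essentially the paper's proof: compute the Jacobian $Dh(a)=\frac{1}{s}I-\frac{1}{s^2}a\one^\top$, note that both $\mathrm{span}\{a,\one\}$ and its orthogonal complement are invariant and that $Dh$ acts as $\frac{1}{s}I$ on the complement, and reduce the uniform bound $\sigma_{\min}(Dh)\ge\frac{1}{\sqrt{2}(d+1)^2}$ to a two-dimensional computation. You are also right that the naive bound $\frac{1}{s}-\frac{\sqrt{d}\|a\|_2}{s^2}$ can go negative, so the rank-one term must be handled exactly. The paper executes the $2\times2$ step through the inverse: by Sherman--Morrison, $(Dh)^{-1}=s(I+a\one^\top)$ (up to a transpose typo in the paper), and it bounds $\|(Dh)^{-1}(Dh)^{-\top}\|$ by a trace/determinant argument on the two nontrivial eigenvalues. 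Your restriction-to-the-span variant works and even gives the constant more directly, e.g.\ $\sigma_{\min}(Dh)\ge 1/\|(Dh)^{-1}\|_{\mathrm{op}}\ge\bigl(s(1+\sqrt{d}\|a\|_2)\bigr)^{-1}\ge(d+1)^{-2}$; but note you leave this decisive computation, which is where the constant comes from, unexecuted.

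The one step that would fail as literally written is the last one. A pointwise bound $\sigma_{\min}(Dh(a(t)))\ge c$ along the segment does not imply $\bigl\|\int_0^1 Dh(a(t))(a-b)\,dt\bigr\|\ge c\|a-b\|$: an average of matrices each with least singular value $\ge c$ can be singular, and ``integrating the quadratic form'' hits the same obstruction since $\bar M^\top\bar M\neq\overline{M^\top M}$. The monotonicity fix (lower-bounding the symmetric part of $Dh$) also fails here, because $\tfrac12(Dh+Dh^\top)$ is not positive semidefinite on all of $[0,1]^d$ once $d\ge 8$. The clean repair uses exactly the structure you identified: $h$ has the explicit inverse $g(y)=y/(1-\one^\top y)$, the image $h([0,1]^d)=\{y\ge 0:\ y_i+\one^\top y\le 1\ \forall i\}$ is convex, and $\|Dg(y)\|_{\mathrm{op}}=\|s(I+a\one^\top)\|_{\mathrm{op}}\le(d+1)^2$ on it, so integrating $Dg$ along the segment joining $h(b)$ and $h(a)$ gives the honest upper bound $\|a-b\|\le(d+1)^2\|h(a)-h(b)\|$, which is the desired inequality. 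To be fair, the paper's own proof asserts an exact vector-valued mean value identity ($\|h(a)-h(b)\|=\|H(z)(a-b)\|$ for some $z$) that has the same defect, so this repair is needed in both arguments; yours is not worse, but you should not present the integration step as routine.
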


\begin{proof}
    ~The Jacobian matrix of $h$ is
    $$H(a) = \frac{1}{1+\one ^\top a} \mathbf{I} - \frac{\one a^\top}{(1+\one^\top a)^2}.$$
Therefore, there exists $z\in \conv(\{a,b\})$ such that $\|h(a) - h(b)\|_2 = \|H(z)(a-b)\|_2$. It thus remains to figure out the minimum singular value of $H(z)$, which is equal to the reciprocal of the spectral norm of $H(z)^{-1}$. By Sherman-Morrison formula, we know that
\begin{align*}
    H(z)^{-1} = (1+\one^\top z)(\mathbf{I}+\one z^\top).
\end{align*}
Therefore, we have
\begin{align*}
    H(z)^{-1}H(z)^{-\top} &= (1+\one^\top z)^2(\mathbf{I}+\one z^\top)(\mathbf{I}+\one z^\top)^\top \\
    &= (1+\one^\top z)^2(\mathbf{I}+\one z^\top +z\one^\top + z^\top z\one\one^\top).
\end{align*}
Note that for any $u$ that is perpendicular to the subspace spanned by $\{z,\one\}$, we have $H(z)^{-1}H(z)^{-\top}u = (1+1^\top z)^2u$. Therefore, there are $d-2$ identical eigenvalues $1$ for the matrix $\frac{1}{(1+\one^\top z)^2}H(z)^{-1}H(z)^{-\top}$. Let the remaining two eigenvalues of $\frac{1}{(1+\one^\top z)^2}H(z)^{-1}H(z)^{-\top}$ be $\lambda_1$ and $\lambda_2$. Note that
\begin{align*}
\lambda_1\lambda_2&=\det\left((\mathbf{I}+\one z^\top)(\mathbf{I}+z\one^\top)\right)=(1+\one^\top z)^2, \\
\lambda_1+\lambda_2&=\text{Trace}(\mathbf{I}+\one z^\top + z\one^\top + z^\top z\one\one^\top)-(d-2) \\
&=2+2\one^\top z+z^\top z\one^\top \one \\
&=2+2\one^\top z+d\cdot z^\top z \\
&\leq 2+2d+d^2.
\end{align*}
Therefore, we know that $\max\{\lambda_1,\lambda_2\}\leq \lambda_1+\lambda_2\leq 2+2d+d^2$, meaning that
\begin{align*}
    \|H(z)^{-1}H(z)^{-\top}\|_2\leq 2(1+\one^\top z)^2(1+d+d^2)\leq 2(1+d)^2(d^2+d+1)\leq 2(d+1)^4.
\end{align*}

This further means that the minimum singular value of $H(z)$ is at least $\frac{1}{\sqrt{2}(d+1)^2}$. Therefore, we can conclude that
\begin{align*}
    \|h(a)-h(b)\|_2 \geq \frac{1}{\sqrt{2}(d+1)^2}\|a-b\|_2,
\end{align*}
leading to
\begin{align*}
    \|h(a)-h(b)\|_2^2 \geq \frac{1}{2(d+1)^4}\|a-b\|_2^2.
\end{align*}
\end{proof}
Next, we restate and prove \pref{lem:square2log}.
\squareTwoLog*
\begin{proof}
    ~The first inequality follows directly from \pref{lem:MNL_square_error} using the fact that $|S|\leq K$ for all $S\in\calS$. Consider the second inequality. For any $\mu,\mu'\in\Delta([K])$, by definition of $\ell_{\log}(\mu,i)$, we know that
\begin{align*}
    \E_{i\sim \mu}\left[\ell_{\log}(\mu',i)-\ell_{\log}(\mu,i)\right] = \E_{i\sim \mu}\left[\log\frac{\mu'_i}{\mu_i}\right] = \KL(\mu,\mu') \geq \frac{1}{2}\|\mu-\mu'\|_1^2 \geq \frac{1}{2}\|\mu-\mu'\|_2^2, 
\end{align*}
where the first inequality is due to Pinsker's inequality.
\end{proof}

\subsection{Omitted Details in \pref{sec:stoGreedy}}\label{app:stoGreedy}
In this section, we show omitted details in \pref{sec:stoGreedy}. For ease of presentation, we assume that the distribution over context-reward pair $\calD$ has finite support. All our results can be directly generalized to the case with infinite support following a similar argument in Appendix A.7 of \citep{simchi2021bypassing}.  Define $\Psi:\calX\times [0,1]^N\mapsto\calS$ as the set of all deterministic policy. Following Lemma 3 in~\citep{simchi2021bypassing}, we know that for any context $x\in\calX$ and reward vector $r\in[0,1]^N$, and any stochastic policy $q: \calX \times [0,1]^N \mapsto\Delta(\calS)$, there exists an equivalent randomized policy $Q\in\Delta(\Psi)$ such that for all $S\in\calS$, $x\in\calX$, and $r\in[0,1]^N$,
\begin{align*}
    q(S|x,r) = \sum_{\pi\in\Psi}\mathbbm{1}\{\pi(x,r)=S\}Q(\pi).
\end{align*}
Let $Q_m$ be the randomized policy induced by $q_m$. Define $\Reg(\pi)$ and $\Reg_m(\pi)$ as:
\begin{align}\label{eqn:expected_regret}
    \Reg(\pi) = R(\pi_{f^\star}) - R(\pi),~~~~\Reg_m(\pi)=R_m(\pi_{f_m})-R_m(\pi),
\end{align}
where $R(\pi)$ and $R_m(\pi)$ are defined in \pref{eqn:expected_reward} and $\pi_{f}$ is the policy that maps each $(x,r)$ to the one-hot distribution supported on $\argmax_{S\in\calS}R(S,f(x),r)$. 

Following the analysis in \citep{simchi2021bypassing}, we show that to analyze our algorithm’s
expected regret, we only need to analyze the induced randomized policies’ implicit regret.

\begin{lemma}\label{lem:relation_q_reg}
Fix any epoch $m$. For any round $t$ in this epoch, we have
$$
\E_{(x_t,r_t)\sim \calD, S_t \sim q_m(x_t, r_t)}\left[R(\pi_{f^\star}(x_t,r_t),f^\star(x),r_t) - R(S_t,f^\star(x),r_t)\right] =\sum_{\pi\in\Psi}Q_m(\pi)\Reg(\pi).
$$
\end{lemma}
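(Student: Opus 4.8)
The plan is to unfold both sides into expectations of the deterministic reward map $R(\cdot,f^\star(\cdot),\cdot)$ and match them, using the defining relationship between $q_m$ and its induced randomized policy $Q_m$ together with the fact that $Q_m\in\Delta(\Psi)$ is a probability distribution.

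First I would split the left-hand side by linearity of expectation into two pieces. The piece $\E_{(x_t,r_t)\sim\calD}[R(\pi_{f^\star}(x_t,r_t),f^\star(x_t),r_t)]$ is, by the definition of $R(\pi)$ in \pref{eqn:expected_reward}, exactly $R(\pi_{f^\star})$. The remaining piece is $\E_{(x_t,r_t)\sim\calD,\,S_t\sim q_m(x_t,r_t)}[R(S_t,f^\star(x_t),r_t)]$, which I would rewrite using the identity $q_m(S\mid x,r)=\sum_{\pi\in\Psi}\mathbbm{1}\{\pi(x,r)=S\}\,Q_m(\pi)$ supplied by the equivalence between $q_m$ and $Q_m$. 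Expanding the inner expectation over $S_t$ as a sum over $S\in\calS$ weighted by $q_m(S\mid x_t,r_t)$, substituting this identity, and exchanging the two finite sums, the indicator collapses the sum over $S$ to give $\sum_{\pi\in\Psi}Q_m(\pi)\,R(\pi(x_t,r_t),f^\star(x_t),r_t)$.

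Next I would take the outer expectation over $(x_t,r_t)$ and pull it inside the finite sum over $\pi$, yielding $\sum_{\pi\in\Psi}Q_m(\pi)\,R(\pi)$. Combining the two pieces, the left-hand side equals $R(\pi_{f^\star})-\sum_{\pi\in\Psi}Q_m(\pi)\,R(\pi)$. Since $\sum_{\pi\in\Psi}Q_m(\pi)=1$, I can write $R(\pi_{f^\star})=\sum_{\pi\in\Psi}Q_m(\pi)\,R(\pi_{f^\star})$, so the expression collapses to $\sum_{\pi\in\Psi}Q_m(\pi)\bigl(R(\pi_{f^\star})-R(\pi)\bigr)=\sum_{\pi\in\Psi}Q_m(\pi)\,\Reg(\pi)$ by the definition of $\Reg(\pi)$ in \pref{eqn:expected_regret}, which is the claim.

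The only point requiring care — the ``obstacle,'' though it is minor — is the conditioning that legitimizes exchanging the expectation over $(x_t,r_t)$ with the sum over $\pi$. I would interpret the expectation in the lemma as conditioned on the estimator $f_m$ (equivalently, on the history through the end of epoch $m-1$), so that $Q_m(\pi)$ is a fixed quantity that factors out of the fresh, independent draw $(x_t,r_t)\sim\calD$. This is precisely where the existence of the equivalent randomized policy $Q_m$ (via Lemma~3 of \citet{simchi2021bypassing}) enters, and it is also what makes the identity hold round-by-round within an epoch rather than only in aggregate.
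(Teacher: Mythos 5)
Your proof is correct and follows essentially the same route as the paper's: both rest on the identity $q_m(S\mid x,r)=\sum_{\pi\in\Psi}\mathbbm{1}\{\pi(x,r)=S\}Q_m(\pi)$, an exchange of the finite sum over $\pi$ with the expectation over $(x,r)\sim\calD$, and the normalization $\sum_{\pi}Q_m(\pi)=1$ to fold the benchmark term $R(\pi_{f^\star})$ into the sum. The only cosmetic difference is that you split the two terms and recombine them at the end, whereas the paper keeps the difference inside a single expectation throughout; your remark about conditioning on $f_m$ so that $Q_m$ is fixed for the fresh draw is a correct (and implicitly assumed) detail.
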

\begin{proof}~Direct calculation shows that
\begin{align*}
&\E_{(x_t,r_t)\sim\calD, S_t \sim q_m(x_t, r_t)}\left[R(\pi_{f^\star}(x_t,r_t),f^\star(x),r_t) - R(S_t,f^\star(x),r_t)\right]\\
&=\E_{(x_t,r_t)\sim\calD}\left[R(\pi_{f^\star}(x_t,r_t),f^\star(x),r_t) - \sum_{S\in\calS}q_m(S|x_t,r_t)R(S,f^\star(x),r_t)\right]\\
&=\E_{(x_t,r_t)\sim\calD}\left[R(\pi_{f^\star}(x_t,r_t),f^\star(x),r_t) - \sum_{S\in\calS}\sum_{\pi\in \Psi}\mathbbm{1}\{\pi(x_t,r_t)=S\}Q_m(\pi)R(S,f^\star(x),r_t)\right]\\
&=\E_{(x,r)\sim\calD}\left[\sum_{S\in\calS}\sum_{\pi\in\Psi}\mathbbm{1}\{\pi(x,r)=S\}Q_m(\pi)\left(R(\pi_{f^*}(x,r),f^\star(x),r)-R(S,f^\star(x),r)\right)\right]\\
&=\E_{(x,r)\sim\calD}\left[\sum_{\pi\in\Psi}Q_m(\pi)\left(R(\pi_{f^*}(x,r),f^\star(x),r)-R(\pi(x,r),f^\star(x),r)\right)\right]\\
&=\sum_{\pi\in\Psi}Q_m(\pi)\E_{(x,r)\sim \calD}\left[R(\pi_{f^*}(x,r),f^\star(x),r)-R(\pi(x,r),f^\star(x),r)\right]\\
&=\sum_{\pi\in\Psi}Q_m(\pi)\Reg(\pi),
\end{align*}
which finishes the proof.
\end{proof}

To prove our main results for \pref{alg:falcon-mnl}, we define the following good event:
\begin{event}\label{event:good}
    For all epoch $m\geq 2$, $f_m$ satisfies 
    \begin{align*}
    &\E_{(x,r)\sim\calD,S\sim q_{m-1}(x,r),i\sim\mu(S,f^\star(x))} \left[\ell_{\log}(\mu(S,f_m(x)),i) - \ell_{\log}(\mu(S,f^\star(x)),i)\right] \\
    &\leq\ErrLog(\tau_{m}-\tau_{m-1},1/T^2,\calF).
\end{align*}
\end{event}
According to \pref{asm:gen_error}, \pref{event:good} happens with probability at least $1-\frac{1}{T}$ since there are at most $T$ epochs.

Although now we have all ingredients to analyze our $\epsilon$-greedy-type algorithm defined \pref{eqn:qt_eps},
to get the exact result in \pref{thm:epsGreedySto}, we will in fact need a refined version of \pref{lem:variance_bound}, which eventually provides a tighter regret guarantee.

\begin{lemma}\label{lem:variance_bound_eps}
    Suppose that \pref{event:good} holds. \pref{alg:falcon-mnl} with $q_t$ defined in \pref{eqn:qt_eps} satisfies that for any deterministic policy $\pi\in\Psi$ and any epoch $m\geq 2$, we have
    \begin{align*}
        \left|R_m(\pi)-R(\pi)\right| \leq 8\sqrt{\frac{NK}{\epsilon_{m-1}}}\cdot\sqrt{\ErrLog(2^{m-2},1/T^2,\calF)}.
    \end{align*}
\end{lemma}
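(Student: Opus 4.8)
The plan is to avoid invoking \pref{lem:variance_bound} as a black box. Doing so would force us to control the squared error $\sum_{i\in S}(f_{m,i}(x)-f^\star_i(x))^2$ on the (possibly size-$K$) greedy subsets through the first inequality of \pref{lem:square2log}, whose ``reverse Lipschitz'' constant is $2(K+1)^4$, leaving an extra factor of $(K+1)^2$ in the final bound. Instead I would restart from the intermediate estimate $\pref{eqn:re_use_eps}$ established inside the proof of \pref{lem:variance_bound}, namely $|R_m(\pi)-R(\pi)|\le \E_{(x,r)\sim\calD}\big[\sum_{i\in\pi(x,r)}|f_{m,i}(x)-f^\star_i(x)|\big]$ (which follows from \pref{lem:Lipschitz} and convexity of $|\cdot|$), and then bound the per-item errors using \emph{only} the uniform singleton exploration, where the relevant reverse-Lipschitz constant is $O(1)$ rather than $O(K^4)$.

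The first key step is to note that under $\pref{eqn:qt_eps}$ every singleton is selected with probability $q_{m-1}(\{i\}\mid x,r)\ge \epsilon_{m-1}/N$. Applying \pref{lem:MNL_square_error} with $d=1$ to the singleton $\{i\}$ (rather than the size-$K$ statement of \pref{lem:square2log}), followed by the second inequality of \pref{lem:square2log}, gives $(f_{m,i}(x)-f^\star_i(x))^2 \le c\,\E_{j\sim\mu(\{i\},f^\star(x))}\big[\ell_{\log}(\mu(\{i\},f_m(x)),j)-\ell_{\log}(\mu(\{i\},f^\star(x)),j)\big]$ with an absolute constant (one can take $c=32$); write $g_m(x,\{i\})$ for the excess log loss on the right. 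Since each such excess log loss is a KL divergence and hence nonnegative, the $q_{m-1}$-average appearing in \pref{event:good} dominates its singleton contributions, so $\frac{\epsilon_{m-1}}{N}\,\E_{(x,r)\sim\calD}\big[\sum_{i=1}^N g_m(x,\{i\})\big]\le \E_{(x,r)\sim\calD}\big[\sum_i q_{m-1}(\{i\}\mid x,r)\,g_m(x,\{i\})\big]\le \ErrLog(2^{m-2},1/T^2,\calF)$, where I used $\tau_m-\tau_{m-1}=2^{m-2}$. Combining this with the per-item bound yields $\E_{(x,r)\sim\calD}\big[\sum_{i=1}^N (f_{m,i}(x)-f^\star_i(x))^2\big]\le c\,\frac{N}{\epsilon_{m-1}}\,\ErrLog(2^{m-2},1/T^2,\calF)$.

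The final step is two applications of Cauchy--Schwarz. Pointwise, since $|\pi(x,r)|\le K$, we have $\sum_{i\in\pi(x,r)}|f_{m,i}(x)-f^\star_i(x)|\le \sqrt{K}\,\big(\sum_{i=1}^N (f_{m,i}(x)-f^\star_i(x))^2\big)^{1/2}$; then moving the expectation inside the square root by Jensen and inserting the aggregated bound above produces $\sqrt{cK\cdot N/\epsilon_{m-1}}\cdot\sqrt{\ErrLog(2^{m-2},1/T^2,\calF)}$, which with $c=32$ is at most $8\sqrt{NK/\epsilon_{m-1}}\cdot\sqrt{\ErrLog(2^{m-2},1/T^2,\calF)}$, exactly the claimed bound.

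I expect the only genuine subtlety to be isolating the singleton contributions with the right constant: the entire improvement over the naive $(K+1)^2$-type estimate rests on (i) replacing the size-$K$ reverse-Lipschitz bound of \pref{lem:square2log} by its $d=1$ specialization via \pref{lem:MNL_square_error}, and (ii) using nonnegativity of the excess log loss to discard the large greedy-subset terms in \pref{event:good} while retaining only the cheaply explored singletons. Everything else is routine Cauchy--Schwarz bookkeeping.
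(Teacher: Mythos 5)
Your argument is correct and is essentially the paper's own proof of this lemma: both start from the intermediate Lipschitz estimate, both lower-bound the \pref{event:good} average by the singleton contributions (each carrying mass at least $\epsilon_{m-1}/N$) using nonnegativity of the excess log loss, and both convert log loss to squared value error via the second inequality of \pref{lem:square2log} together with \pref{lem:MNL_square_error} at $d=1$, finishing with Cauchy--Schwarz. The only quibble is the constant: chaining $\frac{1}{2(1+1)^4}$ from \pref{lem:MNL_square_error} with the factor $2$ from \pref{lem:square2log} gives $c=64$ rather than $32$, which yields exactly the claimed $8$ rather than $4\sqrt{2}$, so the stated bound holds either way.
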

\begin{proof}
~Following \pref{eqn:re_use_eps} in the proof of \pref{lem:variance_bound}, we know that
    \begin{align}
        &\left|R_m(\pi)-R(\pi)\right| \nonumber\\
        &\leq \E_{(x,r)\sim \calD}\left[\sum_{i=1}^N\mathbbm{1}\{i\in \pi(x,r)\}|f_{m,i}(x) - f_i^\star(x)| \right]\nonumber\\
        &\leq \E_{(x,r)\sim \calD}\left[\sqrt{\sum_{i=1}^N\frac{N\mathbbm{1}\{i\in \pi(x,r)\}}{\epsilon_{m-1}}\sum_{i=1}^N\frac{\epsilon_{m-1}}{N}\left(f_{m,i}(x) - f_i^\star(x)\right)^2} \right] \tag{Cauchy–Schwarz inequality}\\
        &\leq \sqrt{\E_{(x,r)\sim \calD}\left[\sum_{i=1}^N\frac{N\mathbbm{1}\{i\in \pi(x,r)\}}{\epsilon_{m-1}}\right]}\cdot\sqrt{\E_{(x,r)\sim \calD}\left[\sum_{i=1}^N\frac{\epsilon_{m-1}}{N}\left(f_{m,i}(x) - f_i^\star(x)\right)^2\right]} \tag{Cauchy–Schwarz inequality}\nonumber\\
        &\leq \sqrt{\frac{NK}{\epsilon_{m-1}}}\cdot\sqrt{\E_{(x,r)\sim \calD}\left[\sum_{i=1}^N\frac{\epsilon_{m-1}}{N}\left(f_{m,i}(x) - f_i^\star(x)\right)^2\right]}.\label{eqn:reward_gap_eps}
    \end{align}
    Since $f_m$ is the output of \OffAlg with i.i.d tuples $\{(x_t,S_t,i_t)\}_{t=\tau_{m-1}+1}^{\tau_m}$, according to \pref{lem:square2log} and \pref{event:good}, we know that
    \begin{align}\label{eqn:sto_sq_convert}
    &64\ErrLog(\tau_m-\tau_{m-1},1/T^2,\calF) \nonumber \\
    &\geq 32\E_{(x,r)\sim \calD, S\sim q_{m-1}(x,r)}\left[\|\mu(S,f_m(x))-\mu(S,f^\star(x))\|_2^2\right] \tag{\pref{lem:square2log}} \\
    &\geq \frac{32\epsilon_{m-1}}{N}\sum_{i=1}^N\E_{(x,r)\sim \calD}\left[\|\mu(\{i\},f_m(x))-\mu(\{i\},f^\star(x))\|_2^2\right]  \tag{according to \pref{eqn:qt_eps}}\\
     &\geq \frac{\epsilon_{m-1}}{N}\sum_{i=1}^N\E_{(x,r)\sim \calD}\left[\sum_{i=1}^N\left(f_{m,i}(x)-f_i^\star(x)\right)^2\right].  \tag{using \pref{lem:MNL_square_error} with $d=1$}
    \end{align}
    Plugging the above inequality back to \pref{eqn:reward_gap_eps} and noticing that $\tau_m=2^{m-1}-1$, we know that
    \begin{align*}
        \left|R_m(\pi)-R(\pi)\right|\leq 8\sqrt{\frac{NK}{\epsilon_{m-1}}\cdot \ErrLog(2^{m-2},1/T^2,\calF)}.
    \end{align*}
\end{proof}

Now we are ready to prove \pref{thm:epsGreedySto}
\epsGreedySto*
\begin{proof}
    ~Consider the regret within epoch $m\geq 2$. Under \pref{event:good}, we know that for any $\pi\in\Psi$,
    \begin{align}\label{eqn:per_policy_reg}
        \Reg(\pi) &= R(\pi_{f^\star}) - R(\pi) \nonumber\\
                  &= (R(\pi_{f^\star}) -R_m(\pi_{f_m})) - (R_m(\pi)-R_m(\pi_{f_m})) + (R_m(\pi)-R(\pi)) \nonumber\\
                  &\leq (R(\pi_{f^\star}) -R_m(\pi_{f^\star})) + (R_m(\pi_{f_m})-R_m(\pi)) + (R_m(\pi)-R(\pi)) \nonumber\\
                  &\leq (R_m(\pi_{f_m})-R_m(\pi)) + 16\sqrt{\frac{NK}{\epsilon_{m-1}}\cdot \ErrLog(2^{m-2},1/T^2, \calF)},
        \end{align}
        where the first inequality is because $R_m(\pi_{f_m})\geq R_m(\pi_{f^\star})$ by definition and the second inequality is due to \pref{lem:variance_bound_eps}. Taking summation over all rounds within epoch $m$ and picking $\epsilon_m=(NK)^{\frac{1}{3}}\ErrLog^{\frac{1}{3}}(2^{m-2},1/T^2,\calF)$, we know that
    \begin{align*}
        &\E\left[\sum_{t=\tau_{m}+1}^{\tau_{m+1}}\left(\max_{S\in\calS}R(S,x_t,f^\star(x_t))-R(S_t,x_t,f^\star(x_t))\right)\right] \\
        &=(\tau_{m+1}-\tau_{m})\E\left[\sum_{\pi\in\Psi}Q_m(\pi)\Reg(\pi)\right] \tag{\pref{lem:relation_q_reg}}\\
        &\stackrel{(i)}{\leq}2^{m-1}\cdot\E\left[\left((1-\epsilon_m)\Reg(\pi_{f_m})+\epsilon_m\right)\right] \\\
        &\stackrel{(ii)}{\leq}\frac{2^{m-1}}{T}+2^{m-1}\E\left[\left((1-\epsilon_m)\Reg(\pi_{f_m})+\epsilon_m\right)\Bigg\vert~\text{Event 1 holds}\right] \\
        &\stackrel{(iii)}{\leq} \frac{2^{m-1}}{T}+2^{m-1}\left(\epsilon_m + 16\sqrt{\frac{NK}{\epsilon_{m-1}}\cdot \ErrLog(2^{m-2},1/T^2, \calF)}\right) \\
        &\stackrel{(iv)}{=}\frac{2^{m-1}}{T}+\order\left(2^{m-1}\left(NK\ErrLog(2^{m-2},1/T^2,\calF)\right)^{\frac{1}{3}}\right),
    \end{align*}
    where $(i)$ is due to $\tau_m=2^{m-1}-1$ and the construction of $q_m(x,r)$ defined in \pref{eqn:qt_eps}; $(ii)$ is because \pref{event:good} holds with probability at least $1-\frac{1}{T}$; $(iii)$ uses \pref{eqn:per_policy_reg}; and $(iv)$ is due to the choice of $\epsilon_m$.
    Taking summation over all $m=2,3,\dots\lceil\log_2 T\rceil+1$ epochs, we can obtain that
    \begin{align*}
        \RegMNL = \sum_{m=1}^{\lceil\log_2T\rceil}\order\left(2^{m}\left(NK\ErrLog(2^{m-1},1/T^2,\calF)\right)^{\frac{1}{3}}\right).
    \end{align*}
\end{proof}

\subsection{Omitted Details in \pref{sec:falcon}}
First, we restate and prove \pref{lem:lowRegretLowVar}, which shows that $q_m$ defined in \pref{eqn:stoOptProb} enjoys a low-regret-high-dispersion guarantee.
\lowRegretLowVar*
\begin{proof}~ It is direct to see that solving \pref{eqn:stoOptProb} is equivalent to solving the following optimization problem:
\begin{align}\label{eqn:stoOptProbMod}
    \argmin_{\rho\in\Delta(\calS)} \E_{S\sim \rho}\left[ \max_{S^\star\in\calS}R(S^\star,f_m(x),r)-R(S,f_m(x),r)\right] + \frac{(K+1)^4}{\gamma_m}\sum_{i=1}^N\log\frac{1}{w_i(\rho)}.
\end{align}
Moreover, relaxing the constraint $\rho$ from $ \Delta(\calS)$ to $\left\{\rho\in [0,1]^{\calS}: \sum_{S\in\calS}\rho(S)\leq 1\right\}$ in \pref{eqn:stoOptProbMod} does not change the solution, since for any $\rho\in[0,1]^\calS$ such that $\sum_{S\in\calS}\rho(S)< 1$,  
putting the remaining $1-\sum_{S\in\calS}\rho(S)$ probability mass on $\argmax_{S^\star\in\calS}R(S^\star,f_m(x),r)$ can only make the objective smaller.

Now, consider the Lagrangian form of \pref{eqn:stoOptProbMod} over this relaxed constraint and set the derivative with respect to $\rho(S)$ to zero. We obtain
\begin{align}\label{eqn:gradient_lar}
   \max_{S^\star\in\calS}R(S^\star,f_m(x),r) - R(S,f_m(x),r) - \frac{(K+1)^4}{\gamma_m}\sum_{i: i\in S}\frac{1}{w_i(\rho)} - \lambda(S)+\lambda = 0,
\end{align}
where $\lambda \geq 0$ and $\lambda(S)\geq 0$, $S\in\calS$ are the Lagrangian multipliers. Let $\rho^\star\in\Delta(\calS)$ be the optimal solution of \pref{eqn:stoOptProbMod}. Replacing $\rho$ by $\rho^\star$ in \pref{eqn:gradient_lar}, multiplying \pref{eqn:gradient_lar} by $\rho^\star(S)$ for each $S\in \calS$, and taking the summation over $S\in\calS$, we know that
\begin{align*}
    &\sum_{S\in\calS}\rho^\star(S)\left(\max_{S^\star\in\calS}R(S^\star,f_m(x),r) - R(S,f_m(x),r)\right) \\
    &\qquad - \frac{(K+1)^4}{\gamma_m}\sum_{S\in\calS}\rho^\star(S)\sum_{i:i\in S}\frac{1}{w_i(\rho^\star)} - \sum_{S\in \calS}\rho^\star(S)\lambda(S) + \lambda = 0.
\end{align*}
Rearranging the terms, we know that
\begin{align*}
    & \sum_{S\in\calS}\rho^\star(S)\left(\max_{S^\star\in\calS}R(S^\star,f_m(x),r) - R(S,f_m(x),r)\right) \\
    &=  \frac{(K+1)^4}{\gamma_m}\sum_{S\in\calS}\rho^\star(S)\sum_{i:~i\in S}\frac{1}{w_i(\rho^\star)} + \sum_{S\in \calS}\rho^\star(S)\lambda(S) - \lambda \\
    &= \frac{(K+1)^4}{\gamma_m} \sum_{i=1}^N\frac{1}{w_i(\rho^\star)}\sum_{S\in\calS:~i\in S}\rho^\star(S) - \lambda \tag{complementary slackness} \\
    & = \frac{N(K+1)^4}{\gamma_m} - \lambda \leq \frac{N(K+1)^4}{\gamma_m},
\end{align*}
proving \pref{eqn:low_regret_main}.
The above also implies that $\lambda \leq \frac{N(K+1)^4}{\gamma_m}$ since $$\sum_{S\in\calS}\rho^\star(S)\left(\max_{S^\star\in\calS}R(S^\star,f_m(x),r) - R(S,f_m(x),r)\right)\geq 0.$$ Therefore, \pref{eqn:gradient_lar} implies that for any $S\in\calS$,
\begin{align*}
    \sum_{i:~i\in S}\frac{1}{w_i(\rho^\star)} &= \frac{\gamma_m}{(K+1)^4}\left( \max_{S^\star\in\calS}R(S^\star,f_m(x),r) - R(S,f_m(x),r) - \lambda_S + \lambda\right) \\
    &\leq \frac{\gamma_m}{(K+1)^4}\left(\max_{S^\star\in\calS}R(S^\star,f_m(x),r) - R(S,f_m(x),r)\right) + N,
\end{align*}
where the last inequality uses the fact that $\lambda \leq \frac{N(K+1)^4}{\gamma_m}$ and $\lambda_S\geq 0$.
This proves \pref{eqn:low_variance_main}.
\end{proof}
Now, to prove \pref{thm:stoOptReg}, we first prove the following lemma, which shows that the regret with respect to the true value function $f^\star$ and the one respect to the value predictor $f_m$ is within a factor of $2$ plus an additional term of order $\frac{N(K+1)^4}{\gamma_m}$. 

\begin{lemma}\label{lem:twoTimesAway}
    Suppose that \pref{event:good} holds. For all epochs $m\geq 2$, all rounds $t$ in this epoch, and all policies $\pi\in\Psi$, with $\gamma_m=\max\left\{1,\sqrt{\frac{N(K+1)^4}{\ErrLog(2^{m-2},1/T^2, \calF)}}\right\}$ and $\lambda=33$, we have
    \begin{align*}
        \Reg(\pi) &\leq 2\cdot \Reg_m(\pi) + \frac{\lambda N(K+1)^4}{\gamma_m},\\
        \Reg_m(\pi) &\leq 2\cdot \Reg(\pi) + \frac{\lambda N(K+1)^4}{\gamma_m}.
    \end{align*}
\end{lemma}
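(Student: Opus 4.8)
The plan is to prove both inequalities simultaneously by induction on the epoch index $m$, which is forced on us because the dispersion measure $V(q_{m-1},\cdot)$ that controls the epoch-$m$ prediction error is itself governed by the quality of the epoch-$(m-1)$ exploration distribution. Writing $e_m := \lambda N(K+1)^4/\gamma_m$ for the target error term, the base case is essentially free: since $r_t\in[0,1]^N$ forces $\Reg(\pi),\Reg_m(\pi)\in[0,1]$, whenever $e_m\ge 1$ both bounds hold trivially, and this covers all early/high-error epochs (in particular $m=2$, where $q_1$ is built from the empty first epoch so that $V(q_1,\cdot)\le N+1$ is a crude constant). The induction therefore only needs to ``bite'' once $e_m<1$.

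For the inductive step, set $\Delta(\rho):=R_m(\rho)-R(\rho)$ for a deterministic policy $\rho$. The first move is the elementary decomposition
\[
\left|\Reg_m(\pi)-\Reg(\pi)\right|\le \left|\Delta(\pi)\right| + \max\left\{\left|\Delta(\pi_{f^\star})\right|,\,\left|\Delta(\pi_{f_m})\right|\right\},
\]
which follows from $\Reg_m(\pi)-\Reg(\pi)=\big(R_m(\pi_{f_m})-R(\pi_{f^\star})\big)-\Delta(\pi)$ together with the optimality relations $R_m(\pi_{f_m})\ge R_m(\pi_{f^\star})$ and $R(\pi_{f^\star})\ge R(\pi_{f_m})$. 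Thus it suffices to control the single-policy prediction error $|\Delta(\rho)|$ for the three relevant policies $\pi$, $\pi_{f^\star}$, and $\pi_{f_m}$.

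The core estimate bounds $|\Delta(\rho)|$. Applying \pref{lem:variance_bound} gives $|\Delta(\rho)|\le \sqrt{V(q_{m-1},\rho)}\cdot\sigma_m$ with $\sigma_m^2=\E_{(x,r),\,S\sim q_{m-1}}[\sum_{i\in S}(f_{m,i}(x)-f_i^\star(x))^2]$. I would then convert the squared-value term into log loss using \emph{both} inequalities of \pref{lem:square2log} and \pref{event:good}, yielding $\sigma_m^2\le 4(K+1)^4\,\ErrLog(2^{m-2},1/T^2,\calF)$, and bound the dispersion via the high-dispersion guarantee \pref{eqn:low_variance_main} of \pref{lem:lowRegretLowVar} applied to $q_{m-1}$, i.e. $V(q_{m-1},\rho)\le N+\frac{\gamma_{m-1}}{(K+1)^4}\Reg_{m-1}(\rho)$. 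The key bookkeeping identity is that, directly from the definition of $\gamma_m$, $\gamma_{m-1}\,\ErrLog(2^{m-2},1/T^2,\calF)=\frac{\gamma_{m-1}}{\gamma_m}\cdot\frac{N(K+1)^4}{\gamma_m}$. Hence, splitting $\sqrt{N+(\cdot)}\le\sqrt N+\sqrt{(\cdot)}$ and applying AM--GM to the second piece with a free parameter $\alpha$, the $\sqrt N$ piece contributes a clean $2\cdot\frac{N(K+1)^4}{\gamma_m}$ and the remainder contributes $\alpha\,\Reg_{m-1}(\rho)+\order(1)\cdot\frac{N(K+1)^4}{\gamma_m}$. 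I then feed in the inductive hypothesis at epoch $m-1$: for $\rho=\pi$ it turns $\Reg_{m-1}(\pi)$ into $2\Reg(\pi)+e_{m-1}$ (resp. $2\Reg_m(\pi)+e_{m-1}$); for $\rho=\pi_{f^\star}$ it uses $\Reg(\pi_{f^\star})=0$ to give $\Reg_{m-1}(\pi_{f^\star})\le e_{m-1}$; and for $\rho=\pi_{f_m}$ one first shows $\Reg(\pi_{f_m})=\order(e_m)$ via $\Reg(\pi_{f_m})\le|\Delta(\pi_{f^\star})|+|\Delta(\pi_{f_m})|$ and the same single-policy estimate, which breaks the otherwise circular dependence on the unknown quality of the greedy-for-$f_m$ policy.

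Collecting terms, moving the $\Reg(\pi)$ (resp. $\Reg_m(\pi)$) contribution to the left, and choosing $\alpha\le\tfrac14$ so that the leading factor $\tfrac{1}{1-2\alpha}$ is at most $2$, the entire argument reduces to the scalar condition $\tfrac{2\alpha}{1-2\alpha}\,e_{m-1}+\order(1)\tfrac{N(K+1)^4}{\gamma_m}\le e_m$, where $e_{m-1}=\tfrac{\gamma_m}{\gamma_{m-1}}\,e_m$. Since both running classes enjoy a $1/n$-type rate we have $\gamma_m/\gamma_{m-1}=\sqrt2$, and one verifies this closes with $\lambda=33$. I expect the main obstacle to be exactly this inductive closure: because $V(q_{m-1},\cdot)$ injects the epoch-$(m-1)$ regret into the bound and the previous error term $e_{m-1}$ is \emph{strictly larger} than $e_m$ by the factor $\gamma_m/\gamma_{m-1}>1$, a naive AM--GM split with the factor-$2$ budget over-counts and the recursion diverges; the delicate part is to keep $\alpha$ small enough that the carried-over coefficient $\tfrac{2\alpha}{1-2\alpha}\cdot\tfrac{\gamma_m}{\gamma_{m-1}}$ stays below $1$ while preserving $\tfrac{1}{1-2\alpha}\le 2$, and then to check that the residual additive constant is dominated by $\lambda=33$.
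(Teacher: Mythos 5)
Your overall strategy is the same as the paper's: induction over epochs, with \pref{lem:variance_bound} plus both inequalities of \pref{lem:square2log} and \pref{event:good} giving $\sigma_m^2\le 4(K+1)^4\ErrLog(2^{m-2},1/T^2,\calF)$, the dispersion bound \pref{eqn:low_variance_main} injecting $\Reg_{m-1}(\cdot)$ into $V(q_{m-1},\cdot)$, an AM--GM split, and the inductive hypothesis to close. Your unified decomposition $|\Reg_m(\pi)-\Reg(\pi)|\le|\Delta(\pi)|+\max\{|\Delta(\pi_{f^\star})|,|\Delta(\pi_{f_m})|\}$ is a clean packaging of what the paper does in two separate one-sided passes, and your self-bounding treatment of $\Reg(\pi_{f_m})$ is a legitimate alternative to the paper's trick of first establishing the direction $\Reg(\pi)\le 2\Reg_m(\pi)+e_m$ for all $\pi$ and then feeding $\pi=\pi_{f_m}$ (with $\Reg_m(\pi_{f_m})=0$) into the reverse direction. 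These are cosmetic differences.

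The one genuine problem is your inductive closure. By running AM--GM with a \emph{fixed} free parameter $\alpha$ on the term $\frac{\gamma_{m-1}}{(K+1)^4}\Reg_{m-1}(\rho)$, the inductive hypothesis's additive error enters as $\alpha\, e_{m-1}=\alpha\,\frac{\gamma_m}{\gamma_{m-1}}\,e_m$, and your recursion only closes when $\frac{2\alpha}{1-2\alpha}\cdot\frac{\gamma_m}{\gamma_{m-1}}<1$. You resolve this by asserting $\gamma_m/\gamma_{m-1}=\sqrt{2}$, but that follows only from a $1/n$-type rate for $\ErrLog$; the lemma is stated for an arbitrary $\ErrLog$ that is merely non-increasing in $n$, so the ratio $\gamma_m/\gamma_{m-1}$ can be any number $\ge 1$ and your closure condition can fail. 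The fix, which is what the paper's proof implicitly does, is to tie the AM--GM weight to the ratio: bounding $2(K+1)^2\sqrt{V\cdot\ErrLog}\le\frac{(K+1)^4V}{8\gamma_m}+8\gamma_m\ErrLog$ makes the coefficient in front of $\Reg_{m-1}(\rho)$ equal to $\frac{\gamma_{m-1}}{8\gamma_m}\le\frac18$, so the carried-over error becomes $\frac{\gamma_{m-1}}{8\gamma_m}\cdot\frac{\lambda N(K+1)^4}{\gamma_{m-1}}=\frac{\lambda}{8}\cdot\frac{N(K+1)^4}{\gamma_m}$ --- the ratio cancels identically and only $\gamma_{m-1}\le\gamma_m$ is ever used. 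With that weighting the bookkeeping gives $\Reg(\pi)\le\frac43\Reg_m(\pi)+\frac43\bigl(16+\frac{\lambda+1}{4}\bigr)\frac{N(K+1)^4}{\gamma_m}$, which closes at $\lambda=33$. Your proof as written establishes the lemma only for the two running examples, not in the stated generality.
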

\begin{proof}
    ~We prove this by induction. The base case holds trivially. Suppose that this holds for all epochs with index less than $m$. Consider epoch $m$. We first show that $\Reg(\pi)\leq 2\Reg_m(\pi)+\frac{\lambda N(K+1)^4}{\gamma_m}$ for all deterministic policy $\pi\in\Psi$. This holds trivially if $\sqrt{\frac{N(K+1)^4}{\ErrLog(2^{m-2},1/T^2,\calF)}}\leq 1$ since $\Reg(\pi)\leq 1$. Consider the case in which $\gamma_m=\sqrt{\frac{N(K+1)^4}{\ErrLog(2^{m-2},1/T^2,\calF)}}$. Specifically, we have 
    \begin{align}\label{eqn:falcon_1}
        &\Reg(\pi) - \Reg_m(\pi) \nonumber\\
        &= \left(R(\pi_{f^\star}) - R(\pi)\right) -\left(R_m(\pi_{f_m}) - R_m(\pi)\right) \nonumber\\
        &\stackrel{(i)}{\leq} \left(R(\pi_{f^\star}) - R(\pi)\right) -\left(R_m(\pi_{f^\star}) - R_m(\pi)\right) \nonumber\\
        &\leq \left|R_m(\pi_{f^\star}) - R(\pi_{f^\star})\right| + \left|R_m(\pi) - R(\pi)\right| \nonumber\\
        &\stackrel{(ii)}{\leq} \sqrt{V(q_{m-1},\pi_{f^\star})\cdot \E_{(x,r)\sim\calD,~S\sim q_{m-1}(x,r)}\left[\sum_{i\in S}\left(f_{m,i}(x)-f_i^\star(x)\right)^2\right]} \nonumber\\
        &\qquad +\sqrt{V(q_{m-1},\pi)\cdot \E_{(x,r)\sim\calD,~S\sim q_{m-1}(x,r)}\left[\sum_{i\in S}\left(f_{m,i}(x)-f_i^\star(x)\right)^2\right]},
    \end{align}
    where $(i)$ is because $R_m(\pi_{f_m})\geq R_m(\pi_{f^\star})$ by definition and $(ii)$ follows \pref{lem:variance_bound}. Next, using \pref{lem:square2log} and \pref{lem:MNL_square_error}, since \pref{event:good} holds, we know that
    \begin{align*}
        &4(K+1)^4\ErrLog(2^{m-2},1/T^2,\calF) \\
        &\geq 2(K+1)^4\E_{(x,r)\sim \calD,~S\sim q_{m-1}(x,r)}\left[\|\mu(S,f_m(x))-\mu(S,f^\star(x))\|_2^2\right] \tag{\pref{lem:square2log}} \\
        &\geq \E_{(x,r)\sim \calD,~S\sim q_{m-1}(x,r)}\left[\sum_{i\in S}(f_{m,i}(x)-f_i^\star(x))^2\right] \tag{\pref{lem:MNL_square_error}}.
    \end{align*}
    Plugging the above back to \pref{eqn:falcon_1}, we obtain that
    \begin{align}
        &\Reg(\pi)-\Reg_m(\pi) \\
        &\leq 2(K+1)^2\sqrt{V(q_{m-1},\pi_{f^\star})\ErrLog(2^{m-2},1/T^2,\calF)} \nonumber\\
        &\qquad + 2(K+1)^2\sqrt{V(q_{m-1},\pi)\ErrLog(2^{m-2},1/T^2,\calF)} \nonumber\\
        &\leq \frac{(K+1)^4V(q_{m-1},\pi_{f^\star})}{8\gamma_m}+\frac{(K+1)^4V(q_{m-1},\pi)}{8\gamma_m} + 16\gamma_m\ErrLog(2^{m-2},1/T^2,\calF) \tag{AM-GM inequality} \\
        &= \frac{(K+1)^4V(q_{m-1},\pi_{f^\star})}{8\gamma_m}+\frac{(K+1)^4V(q_{m-1},\pi)}{8\gamma_m} + \frac{16N(K+1)^4}{\gamma_m}, \label{eqn:one_direction}
    \end{align}
    where the last equality is because $\gamma_m=\sqrt{\frac{N(K+1)^4}{\ErrLog(2^{m-2},1/T^2,\calF)}}$.
    According to \pref{lem:lowRegretLowVar}, we know that for all $\pi\in\Psi$,
    \begin{align}\label{eqn:var_sto}
        V(q_{m-1},\pi)&=\E_{(x,r)\sim \calD}\left[\sum_{i\in \pi(x,r)}\frac{1}{w_i(q_{m-1}|x,r)}\right]\nonumber\\
        &\leq \E_{(x,r)\sim \calD}\left[N+\frac{\gamma_{m-1}}{(K+1)^4}\left(\max_{S^\star\in\calS}R(S^\star,r,f_{m-1}(x)) - R(S,r,f_{m-1}(x))\right)\right] \nonumber\\
        &=N + \frac{\gamma_{m-1}}{(K+1)^4} \Reg_{m-1}(\pi).
    \end{align}
    Using \pref{eqn:var_sto}, we bound the first and the second term in \pref{eqn:one_direction} as follows 
    \begin{align*}
        \frac{(K+1)^4V(q_{m-1},\pi)}{8\gamma_m} &\leq \frac{N(K+1)^4}{8\gamma_m} + \frac{\gamma_{m-1}\Reg_{m-1}(\pi)}{8\gamma_m} \\
        &\leq \frac{N(K+1)^4}{8\gamma_m} + \frac{\gamma_{m-1}\left(2\Reg(\pi) +  \frac{\lambda N(K+1)^4}{\gamma_{m-1}}\right)}{8\gamma_m} \\
        &\leq \frac{1}{4}\Reg(\pi) + \frac{\lambda+1}{8\gamma_m}\cdot N(K+1)^4, \tag{since $\gamma_{m-1}\leq \gamma_m$}\\
        \frac{(K+1)^4V(q_{m-1},\pi_{f^\star})}{8\gamma_m} &\leq \frac{N(K+1)^4}{8\gamma_m} + \frac{\gamma_{m-1}\Reg_{m-1}(\pi_{f^\star})}{8\gamma_m} \\
        &\leq \frac{N(K+1)^4}{8\gamma_m} + \frac{\gamma_{m-1}\left(2\Reg(\pi_{f^\star}) +  \frac{\lambda N(K+1)^4}{\gamma_{m-1}}\right)}{8\gamma_m} \\
        &\leq \frac{\lambda+1}{8\gamma_m}\cdot N(K+1)^4. \tag{since $\Reg(\pi_{f^\star})=0$ and $\gamma_{m-1}\leq \gamma_m$}
    \end{align*}
    Plugging back to \pref{eqn:one_direction}, we know that
    \begin{align*}
        \Reg(\pi) - \Reg_m(\pi) \leq \frac{1}{4}\Reg(\pi) + \frac{16N(K+1)^4}{\gamma_m} + \frac{\lambda+1}{4\gamma_m}N(K+1)^4.
    \end{align*}
    Rearranging the terms, we know that
    \begin{align}
        \Reg(\pi) &\leq \frac{4}{3}\Reg_m(\pi) + \frac{12N(K+1)^4}{\gamma_m} + \frac{\lambda+1}{3\gamma_m}N(K+1)^4 \nonumber \\
        &\leq 2\Reg_m(\pi) + \frac{\lambda N(K+1)^4}{\gamma_m},\label{eqn:induct_one_direction}
    \end{align}
    where the last inequality uses $\lambda=33$.
    
    For the other direction, similar to \pref{eqn:one_direction}, we know that
    \begin{align}
        &\Reg_m(\pi) - \Reg(\pi) \nonumber\\
        &= \left(R_m(\pi_{f_m}) - R_m(\pi)\right) -\left(R(\pi_{f^\star}) - R(\pi)\right) \nonumber\\
        &\leq \left(R(\pi_{f_m}) - R(\pi)\right) -\left(R(\pi_{f_m}) - R(\pi)\right) \nonumber\\
        &\leq \left|R_m(\pi_{f_m}) - R(\pi_{f_m})\right| + \left|R_m(\pi) - R(\pi)\right| \nonumber\\
        &\leq 2(K+1)^2\sqrt{V(q_{m-1},\pi_{f_m})\ErrLog(2^{m-2},1/T^2,\calF)} \nonumber\\
        &\qquad + 2(K+1)^2\sqrt{V(q_{m-1},\pi)\ErrLog(2^{m-2},1/T^2,\calF)} \nonumber\\
        &\leq \frac{(K+1)^4V(q_{m-1},\pi_{f_m})}{8\gamma_m}+\frac{(K+1)^4V(q_{m-1},\pi)}{8\gamma_m} + 16\gamma_m\ErrLog(2^{m-2},1/T^2,\calF) \tag{AM-GM inequality} \\
        &\stackrel{(i)}{=} \frac{(K+1)^4V(q_{m-1},\pi_{f_m})}{8\gamma_m}+\frac{(K+1)^4V(q_{m-1},\pi)}{8\gamma_m} + \frac{16N(K+1)^4}{\gamma_m},\label{eqn:other_direction}
    \end{align}
    where $(i)$ is again because $\gamma_m=\sqrt{\frac{N(K+1)^4}{\ErrLog(2^{m-2},1/T^2,\calF)}}$. Applying \pref{eqn:var_sto} to the first term in \pref{eqn:other_direction}, we know that
    \begin{align*}
        &\frac{(K+1)^4V(q_{m-1},\pi_{f_m})}{8\gamma_m} \\
        &\leq \frac{N(K+1)^4}{8\gamma_m} + \frac{\gamma_{m-1}\Reg_{m-1}(\pi_{f_m})}{8\gamma_m} \\
        &\leq \frac{N(K+1)^4}{8\gamma_m} + \frac{\gamma_{m-1}\left(2\Reg(\pi_{f_m}) +  \frac{\lambda N(K+1)^4}{\gamma_{m-1}}\right)}{8\gamma_m} \\
        &\stackrel{(i)}{\leq} \frac{\lambda+1}{8\gamma_m}\cdot N(K+1)^4 + \frac{1}{4}\left(2\Reg_m(\pi_{f_m})+\frac{\lambda N(K+1)^4}{\gamma_m}\right) \\
        &\stackrel{(ii)}{=}\frac{1+3\lambda}{8\gamma_m} N(K+1)^4, 
    \end{align*}
    where $(i)$ is because $\gamma_{m-1}\leq \gamma_m$ and \pref{eqn:induct_one_direction}, and $(ii)$ is due to $\Reg_m(\pi_{f_m})=0$. Plugging the above back to \pref{eqn:other_direction}, we obtain that
    \begin{align*}
        \Reg_m(\pi) &\leq \Reg(\pi) + \frac{2+4\lambda}{8\gamma_m} N(K+1)^4 + \frac{1}{4}\Reg(\pi) + 
        \frac{16N(K+1)^4}{\gamma_m}\\
        &\leq 2\Reg(\pi) + \frac{\lambda N(K+1)^4}{\gamma_m}, \tag{since $\lambda=33$}
    \end{align*}
    which finishes the proof.
\end{proof}
Now we are ready to prove \pref{thm:stoOptReg}.
\stoOptReg*
\begin{proof}
~Choose $\gamma_m=\max\left\{1,\sqrt{\frac{N(K+1)^4}{\ErrLog(2^{m-2},1/T^2,\calF)}}\right\}$ for all $m\geq 2$. Consider the regret within epoch $m \geq 2$. We first show that $\sum_{\pi\in\Psi}Q_m(\pi)\Reg_m(\pi)\leq \frac{N(K+1)^4}{\gamma_m}$. Concretely, according to~\pref{lem:lowRegretLowVar} and \pref{lem:relation_q_reg}, we know that
\begin{align}\label{eqn:per_epoch_reg}
    &\sum_{\pi\in\Psi}Q_m(\pi)\Reg_m(\pi) \nonumber \\
    &= \E_{(x,r)\sim \calD}\left[\sum_{S\in\calS}q_m(S|x,r)\left(\max_{S^\star\in\calS}R(S^\star,f_m(x),r)-R(S,f_m(x),r)\right)\right] \leq \frac{N(K+1)^4}{\gamma_m}.
\end{align}
Now consider the regret within epoch $m$. Since \pref{event:good} holds with probability at least $1-\frac{1}{T}$, we know that
\begin{align*}
        &\E\left[\sum_{t=\tau_{m}+1}^{\tau_{m+1}}\left(\max_{S\in\calS}R(S,x_t,f^\star(x_t))-R(S_t,x_t,f^\star(x_t))\right)\right] \\
        &=(\tau_{m+1}-\tau_{m})\E\left[\sum_{\pi\in\Psi}Q_m(\pi)\Reg(\pi)\right] \\
        &\leq\frac{\tau_{m+1}-\tau_m}{T}+(\tau_{m+1}-\tau_{m})\E\left[\sum_{\pi\in\Psi}Q_m(\pi)\Reg(\pi)~\Bigg\vert~ \text{Event 1~holds}\right] \tag{since \pref{event:good} holds with probability at least $1-\frac{1}{T}$}\\
        &\stackrel{(i)}{\leq} \frac{\tau_{m+1}-\tau_m}{T}+(\tau_{m+1}-\tau_{m})\E\left[\sum_{\pi\in\Psi}Q_m(\pi)\left(2\Reg_m(\pi)+\frac{33N(K+1)^4}{\gamma_m}\right)~\Bigg\vert~\text{Event 1~holds} \right] \\
        &\leq \frac{\tau_{m+1}-\tau_m}{T}+(\tau_{m+1}-\tau_{m})\cdot \frac{35N(K+1)^4}{\gamma_m} \tag{using \pref{eqn:per_epoch_reg}}\\
        &=\order\left(\frac{\tau_{m+1}-\tau_m}{T}+2^{m-1}K^2\sqrt{N\ErrLog(2^{m-2},1/T^2,\calF)}\right),
\end{align*}
where $(i)$ uses \pref{lem:twoTimesAway}.
Taking summation over $m=2,3,\dots,\lceil\log_2 T+1\rceil$, we conclude that
\begin{align*}
    \RegMNL = \order\left(\sum_{m=1}^{\lceil\log_2 T\rceil}2^mK^2\sqrt{N\ErrLog(2^{m-1},1/T^2,\calF)}\right).
\end{align*}

\end{proof}

\section{Omitted Details in \pref{sec:squareCB}}\label{app:squareCB}
In this section, we show omitted details in \pref{sec:squareCB}. 

\subsection{Online Regression Oracle}\label{sec:advOracle}
We first show that there exists efficient online regression oracle for the finite class and the linear class.
\advOracle*
\begin{proof}
    ~We first consider the finite function class. Since for any $S\in\calS$, $i\in S\cup\{0\}$, and $x\in\calX$, we have $f_i(x)\geq \beta$, we know that $\ell_{\log}(\mu(S,f(x)),i)\leq \log\frac{K+1}{\beta}$. Therefore, Hedge~\citep{freund1997decision} guarantees that $\RegLog(T,\calF)=\order\left(\log\frac{K}{\beta}\sqrt{T\log|\calF|}\right)$.

    For the linear class, we first prove that given $S\in\calS$, $i\in S\cup\{0\}$ and $x\in \R^{d\times N}$, for any $f_{\theta}\in\calF$, $\ell_{\log}(\mu(S,f_{\theta}(x)),i)$ is convex in $\theta$. Specifically, for $u\in\R^d$, $h(u)=\log(\sum_{i=1}^de^{u_i})$ is convex in $u$ since for any $\alpha\in \R^d$,
    \begin{align*}
    \alpha^\top\nabla_u^2h(u)\alpha&=\alpha^\top\left(\frac{1}{\one^\top u}\text{diag}(u)-\frac{1}{(\one^\top u)^2}uu^\top\right)\alpha \\
    &=\frac{(\sum_{k=1}^du_k\alpha_k^2)(\sum_{k=1}^du_k)-(\sum_{k=1}^du_k\alpha_k)^2}{(\one^\top u)^2}\geq 0,
    \end{align*}
    where the last inequality is due to Cauchy-Schwarz inequality.
    Define $x_0=\mathbf{0}\in \R^d$ to be the $d$-dimensional all-zero vector. Then, we know that $\ell_{\log}(\mu(S,f_{\theta}(x),i))=\log\left(e^{\theta^\top x_0}+\sum_{j\in S}e^{\theta^\top x_j-B}\right)-(\theta^\top x_i-B)\cdot\mathbbm{1}\{i\neq 0\}$ is convex in $\theta$. Moreover, direct calculation shows that 
    \begin{align*}
        \left\|\nabla_\theta\ell_{\log}(\mu(S,f_\theta(x)),i)\right\|_2=\left\|\frac{\sum_{j\in S}e^{\theta^\top x_j-B}\cdot x_j}{1+\sum_{j\in S}e^{\theta^\top x_j-B}} - x_i\cdot\mathbbm{1}\{i\neq 0\} \right\|_2\leq 2.
    \end{align*}
    Therefore, Online Gradient Descent~\citep{zinkevich2003online} guarantees that $\RegLog(T,\calF)=\order(B\sqrt{T})$, since $\|\theta\|_2\leq B$.
\end{proof}

For completeness, we restate and prove \pref{lem:dec}, which is extended from the analysis in \citep{foster2021statistical,foster2022complexity}.
\decMain*
\begin{proof}
    ~Following the regret decomposition in~\citep{foster2021statistical,foster2022complexity}, we decompose $\RegMNL$ as follows:
    \begin{align}\label{eqn:dec_p1}
        &\RegMNL \nonumber\\
        &=\E\left[\sum_{t=1}^T \max_{S^\star\in\calS}R(S,f^\star(x_t),r_t)- \sum_{t=1}^Tq_{t}(S)R(S,f^\star(x_t),r_t)\right] \nonumber\\
        &=\E\left[\sum_{t=1}^T \max_{S^\star\in\calS}R(S^\star,f^\star(x_t),r_t)- \sum_{t=1}^Tq_{t}(S)R(S,f^\star(x_t),r_t) \right. \nonumber\\
        &\qquad  \left.- \gamma\sum_{S\in\calS}q_{t}(S)\|\mu(S,f_t(x_t))-\mu(S,f^\star(x_t))\|_2^2\right] \nonumber\\
        &\qquad + \gamma\E\left[\sum_{S\in\calS}q_{t}(S)\|\mu(S,f_t(x_t))-\mu(S,f^\star(x_t))\|_2^2\right]\nonumber\\
        &\leq\E\left[\sum_{t=1}^T \max_{S^\star\in\calS,v^\star\in[0,1]^N}\left\{R(S^\star,v^\star,r_t)- \sum_{t=1}^Tq_{t}(S)R(S,v^\star,r_t) - \right.\right.\nonumber\\
        &\qquad\left.\left.\gamma \sum_{S\in\calS}q_{t}(S)\|\mu(S,f_t(x_t))-\mu(S,v^\star)\|_2^2\right\}\right] \nonumber\\
        &\qquad + \gamma\cdot \E\left[\sum_{t=1}^T\|\mu(S_t,f_t(x_t))-\mu(S_t,f^\star(x_t))\|_2^2\right] \nonumber\\
        &= \E\left[\sum_{t=1}^T\dec(q_t;f_t(x_t),r_t)\right]+\gamma\cdot\E\left[\sum_{t=1}^T\|\mu(S_t,f_t(x_t))-\mu(S_t,f^\star(x_t))\|_2^2\right],
    \end{align}
    where the last equality is by the definition of $\dec(q_t;f_t(x_t),r_t)$. According to \pref{lem:square2log}, we know that
    \begin{align}\label{eqn:dec_p2}
        &\E\left[\sum_{t=1}^T\|\mu(S_t,f_t(x_t))-\mu(S_t,f^\star(x_t))\|_2^2\right] \nonumber\\
        &\leq 2\E\left[\sum_{t=1}^T\ell_{\log}(\mu(S_t,f_t(x_t)),i_t) - \sum_{t=1}^T\ell_{\log}(\mu(S_t,f^\star(x_t)),i_t)\right] \leq 2\RegLog(T,\calF).
    \end{align}
    Combining \pref{eqn:dec_p1} and \pref{eqn:dec_p2} finishes the proof.
\end{proof}

\subsection{Proof of \pref{thm:epsAdv}}
Next, we prove \pref{thm:epsAdv}, which shows that similar to the stochastic environment, a simple but efficient $\epsilon$-greedy strategy achieves $\order\left(T^{\nicefrac{2}{3}}(NK\RegLog(T,\calF))^{\nicefrac{1}{3}}\right)$ expected regret. 
\epsAdv*
\begin{proof}
    ~We first prove that $q_t$ defined in \pref{eqn:qt_eps_adv} guarantees  $\dec(q_t;f_t(x_t),r_t)\leq \order\left(\frac{NK}{\gamma\epsilon}+\epsilon\right)$. Specifically, for any $S^\star\in\calS$ and $v^\star\in[0,1]^N$, we know that
        \begin{align*}
        &R(S^\star,v^\star,r_t) - \sum_{S\in\calS}q_{t}(S)R(S,v^\star,r_t) - \gamma\sum_{S\in\calS}q_{t}(S)\|\mu(S,f_t(x_t))-\mu(S,f^\star(x_t))\|_2^2 \\
        &\stackrel{(i)}{\leq} \sum_{i\in S^\star}\left|v^\star_i-f_{t,i}(x_t)\right| + \sum_{S\in\calS}q_{t}(S)\sum_{i\in S}\left|\mu_i(S,v_i^\star)-\mu_i(S,f_t(x_t))\right| \\
        &\qquad+ R(S^\star,f_t(x_t),r_t) - \sum_{S\in\calS}q_{t}(S)R(S,f_t(x_t),r_t) - \gamma\sum_{S\in\calS}q_{t}(S)\|\mu(S,f_t(x_t))-\mu(S,v^\star)\|_2^2 \\
        &\stackrel{(ii)}{\leq} \sum_{i\in S^\star}\left|v^\star_i-f_{t,i}(x_t)\right| + \frac{2K}{\gamma} \\
        &\qquad+ R(S^\star,f_t(x_t),r_t) - \sum_{S\in\calS}q_{t}(S)R(S,f_t(x_t),r_t) - \frac{\gamma}{2}\sum_{S\in\calS}q_{t}(S)\|\mu(S,f_t(x_t))-\mu(S,v^\star)\|_2^2 \\
        &\stackrel{(iii)}{\leq} \sum_{i\in S^\star}\left|v^\star_i-f_{t,i}(x_t)\right| + \frac{2K}{\gamma} + \epsilon + R(S^\star,f_t(x_t),r_t) - \max_{S\in\calS}R(S,f_t(x_t),r_t)\\
        &\qquad - \frac{\gamma\epsilon}{2N}\sum_{i=1}^N\|\mu(\{i\},f_t(x_t))-\mu(\{i\},v^\star)\|_2^2  \\
        &\stackrel{(iv)}{\leq} \sum_{i\in S^\star}\left|v^\star_i-f_{t,i}(x_t)\right| + \frac{2K}{\gamma} + \epsilon + R(S^\star,f_t(x_t),r_t) - \max_{S\in\calS}R(S,f_t(x_t),r_t)\\
        &\qquad - \frac{\gamma\epsilon}{64N}\sum_{i=1}^N(f_{t,i}(x_t)-v_i^\star)^2 \\
        &\stackrel{(v)}{\leq} \frac{16 NK}{\gamma\epsilon} + \frac{2K}{\gamma}+\epsilon \\
        &\leq\order\left(\frac{NK}{\gamma\epsilon}+\epsilon\right),
    \end{align*}
    where $(i)$ uses \pref{lem:Lipschitz}, $(ii)$ is due to AM-GM inequality and $|S|\leq K$, $(iii)$ is according to the construction of $q_t$ and $R(S,v,r)\in[0,1]$, $(iv)$ uses \pref{lem:MNL_square_error} with $d=1$, and $(v)$ is uses AM-GM inequality and the fact that $|S^\star|\leq K$. Taking maximum over all $S^\star\in\calS$ and $v^\star\in[0,1]^N$ proves that $\dec(q_t;f_t(x_t),r_t)\leq \order\left(\frac{NK}{\gamma\epsilon}+\epsilon\right)$. 
    
    Combining the above result with \pref{lem:dec}, we know that
    \begin{align*}
        \RegMNL= \order\left(\frac{NKT}{\gamma\epsilon}+\epsilon T+\gamma\RegLog(T,\calF)\right).
    \end{align*}
    Picking $\gamma$ and $\epsilon$ optimally finishes the proof.
\end{proof}

\subsection{Proof of \pref{thm:dec_bound} and \pref{thm:dec_final}}
\noindent In this section, we restate and prove \pref{thm:dec_bound}, which proves that $q_t$ calculated via \pref{eqn:advOptProb} guarantees that $\dec(q_t;f_t(x_t),r_t)\leq \order\left(\frac{NK^4}{\gamma}\right)$.
\decBound*
\begin{proof}
    ~Since the construction of $q_t$ is the same as \pref{eqn:stoOptProb} with $f_m$ replaced by $f_t$ and $\gamma_m$ replaced by $\gamma$, according to \pref{lem:lowRegretLowVar}, we know that $q_t$ satisfies that
    \begin{align}
    &\max_{S^\star\in\calS}R(S^\star,f_t(x_t),r_t)-\sum_{S\in \calS}q_{t}(S)\cdot R(S,f_t(x_t),r_t) \leq \frac{N(K+1)^4}{\gamma},\label{eqn:low_reg_adv}\\
    \forall S\in \calS,~~&\sum_{i\in S}\frac{1}{w_i(q)} \leq N+\frac{\gamma}{(K+1)^4}\left(\max_{S^\star\in\calS}R(S^\star,f_t(x_t),r_t) - R(S,f_t(x_t),r_t)\right).\label{eqn:low_var_adv}
    \end{align}
    Using \pref{eqn:low_reg_adv} and \pref{eqn:low_var_adv}, we know that for any $S^\star\in\calS$ and $v^\star\in[0,1]^N$,
    \begin{align*}
        &R(S^\star,v^\star,r_t) - \sum_{S\in\calS}q_{t}(S)R(S,v^\star,r_t) - \gamma\sum_{S\in\calS}q_{t}(S)\|\mu(S,f_t(x_t))-\mu(S,f^\star(x_t))\|_2^2 \\
        &\leq \sum_{i\in S^\star}\left|v^\star_i-f_{t,i}(x_t)\right| + \sum_{S\in\calS}q_{t}(S)\sum_{i\in S}\left|v_i^\star-f_{t,i}(x_t)\right| \tag{according to \pref{lem:Lipschitz}}\\
        &\qquad+ R(S^\star,f_t(x_t),r_t) - \sum_{S\in\calS}q_{t}(S)R(S,f_t(x_t),r_t) - \gamma\sum_{S\in\calS}q_{t}(S)\|\mu(S,f_t(x_t))-\mu(S,v^\star)\|_2^2 \\
        &\leq \sum_{i\in S^\star}\left|v^\star_i-f_{t,i}(x_t)\right| + \sum_{i=1}^Nw_i(q_t)\cdot \left|v_i^\star-f_{t,i}(x_t)\right| \tag{by definition of $w_i(q)$}\\
        &\qquad+R(S^\star,f_t(x_t),r_t) - \sum_{S\in\calS}q_{t}(S)R(S,f_t(x_t),r_t) - \gamma\sum_{S\in\calS}q_{t}(S)\|\mu(S,f_t(x_t))-\mu(S,v^\star)\|_2^2. \\
        &\leq \sum_{i\in S^\star}\left|v^\star_i-f_{t,i}(x_t)\right| + \sum_{i=1}^Nw_i(q_t)\cdot \left|v_i^\star-f_{t,i}(x_t)\right| \\
        &\qquad+ R(S^\star,f_t(x_t),r_t) - \sum_{S\in\calS}q_{t}(S)R(S,f_t(x_t),r_t) - \frac{\gamma}{2(K+1)^4}\sum_{S\in\calS}q_{t}(S)\sum_{i\in S}(v_i^\star-f_{t,i}(x_t))^2 \tag{according to \pref{lem:square2log}}\\
        &= \sum_{i\in S^\star}\left|v^\star_i-f_{t,i}(x_t)\right| + \sum_{i=1}^Nw_i(q_t)\cdot \left|v_i^\star-f_{t,i}(x_t)\right| \\
        &\qquad+ R(S^\star,f_t(x_t),r_t) - \sum_{S\in\calS}q_{t}(S)R(S,f_t(x_t),r_t) - \frac{\gamma}{2(K+1)^4}\sum_{i=1}^Nw_i(q_t)(v_i^\star-f_{t,i}(x_t))^2 \\
        &\leq \frac{N(K+1)^4}{\gamma}+\sum_{i\in S^\star}\frac{(K+1)^4}{\gamma w_i(q_t)} +R(S^\star,f_t(x_t),r_t)-\sum_{S\in\calS}q_{t}(S)R(S,f_t(x_t),r_t) \tag{AM-GM inequality} \\
        &= \frac{N(K+1)^4}{\gamma}+\sum_{i\in S^\star}\frac{(K+1)^4}{\gamma w_i(q_t)} - \left(\max_{S_0\in\calS}R(S_0,f_t(x_t),r_t) -  R(S^\star,f_t(x_t),r_t)\right) \\
        &\qquad +\max_{S_0\in\calS}R(S_0,f_t(x_t),r_t) -\sum_{S\in\calS}q_{t}(S)R(S,f_t(x_t),r_t) \\
        &\leq \frac{N(K+1)^4}{\gamma} + \frac{(K+1)^4}{\gamma}\left(N + \frac{\gamma}{(K+1)^4}\left(\max_{S_0\in\calS}R(S_0,f_t(x_t),r_t) -  R(S^\star,f_t(x_t),r_t)\right)\right) \\
        &\qquad - \left(\max_{S_0\in\calS}R(S_0,f_t(x_t),r_t) -  R(S^\star,f_t(x_t),r_t)\right) + \frac{N(K+1)^4}{\gamma} \tag{according to \pref{eqn:low_reg_adv} and \pref{eqn:low_var_adv}}\\
        &= \frac{3N(K+1)^4}{\gamma}.
    \end{align*}
    Taking maximum over all $S^\star\in\calS$ and $v^\star\in[0,1]^N$ finishes the proof.
\end{proof}

Combining \pref{lem:dec} and \pref{thm:dec_bound}, we are able to prove \pref{thm:dec_final}.

\decFinal*
\begin{proof}
    ~Combining \pref{lem:dec} and \pref{thm:dec_bound}, we know that \pref{alg:squareCB_framework} with $q_t$ calculated via \pref{eqn:advOptProb} satisfies that
$\RegMNL=\order\left(\frac{NK^4}{\gamma}+\gamma\RegLog(T,\calF)\right).$
    Picking $\gamma=K^2\sqrt{\frac{NT}{\RegLog(T,\calF)}}$ finishes the proof.
\end{proof}
\section{Omitted Details in \pref{sec:FGTS}}\label{app:FGTS}

\begin{algorithm*}[t]
\caption{Feel-Good Thompson Sampling for Contextual MNL bandits}\label{alg:FGTS}

Input: a learning rate $\eta>0$.

Initialize $p_1\in\Delta(\calF)$ to be the uniform distribution over $\calF$.

\For{$t=1,2,\dots,T$}{

    Sample a value function $f_t$ from $p_t$.
    
    Receive context $x_t$ and reward vector $r_t\in [0,1]^N$.

    Select $S_t=\argmax_{S\in\calS}R(S,f_t(x),r_t)$ and receive feedback $i_t\in S_t\cup \{0\}$.

    Define the loss estimator $\ellhat_{t,f}$ for each $f\in \calF$ as
    \begin{align}\label{eqn:fgts}
        \ellhat_{t,f}=\frac{1}{8\eta K}\sum_{i\in S_t}\left(\mu_i(S_t,f(x_t))-\mathbbm{1}[i=i_t]\right)^2-\max_{S\in\calS}R(S,f(x_t),r_t).
    \end{align}

    Update $p_{t+1,f}\propto p_{t,f}\cdot\exp(-\eta \ellhat_{t,f})$.
}
\end{algorithm*}

In this section, we show omitted details in \pref{sec:FGTS}. Specifically, we restate and prove \pref{thm:FGTS_MNL} as follows.

\fgtsMNL*
\begin{proof}
~First, we decompose the regret as follows:
\begin{align}
    \RegMNL 
    &= \E\left[\sum_{t=1}^T(\max_{S\in\calS}R(S, f^\star(x_t), r_t) - R(S_t, f^\star(x_t),r_t))\right] \nonumber\\
    & = \E\left[\sum_{t=1}^T(R(S_t,f_t(x_t),r_t) - R(S_t, f^\star(x_t),r_t))\right] \nonumber\\
    &\qquad - \E\left[\sum_{t=1}^T(R(S_t,f_t(x_t),r_t) - \max_{S\in\calS}R(S, f^\star(x_t), r_t))\right] \nonumber\\ 
    & \stackrel{(i)}{=} \E\left[\sum_{t=1}^T(R(S_t,f_t(x_t),r_t) - R(S_t, f^\star(x_t),r_t))\right] \nonumber\\
    &\qquad - \E\left[\sum_{t=1}^T(\underbrace{\max_{S\in\calS}R(S,f_t(x_t),r_t) - \max_{S\in\calS}R(S, f^\star(x_t), r_t)}_{\triangleq\FG_t})\right] \nonumber\\
    & \stackrel{(ii)}{\leq} \E\left[\sum_{t=1}^T\sum_{i\in S_t}\left|f_{t,i}(x_t)-f^\star_i(x_t)\right|\right]  - \E\left[\sum_{t=1}^T\FG_t\right].\label{eqn:reg_ts}
\end{align}
where $(i)$ is because $S_t=\argmax_{S\in\calS}R(S,f_t(x_t),r_t)$ according to \pref{alg:FGTS} and $(ii)$ is using \pref{lem:Lipschitz}. Here, ``Feel-Good'' term $\FG_t$ measures the difference between the expected reward of the best subset given the value predictor $f_t$ and that of the true value predictor $f^\star$.

Next, we analyze the first term $\sum_{t=1}^T\sum_{i\in S_t}\left|f_{t,i}(x_t)-f^\star_i(x_t)\right|$. Given any context $x\in\calX$, reward vector $r_t\in[0,1]^N$, and a value predictor $f\in\calF$, let $S(f(x),r)=\argmax_{S\in\calS}R(S,f(x),r)$. According to \pref{alg:FGTS}, we have $S_t = S(\theta_t,x_t)$.
With a slight abuse of notation, for distribution $p_t$ over $\calF$, let $w_{t,i}=\E_{f\sim p_t}[\mathbbm{1}\{i\in S(f(x_t),r_t)\}]$ be the probability that item $i$ is included in the selected set at round $t$. Let $q_t\in\Delta(\calS)$ be the distribution over $\calS$ induced by $p_t$, meaning that $q_t(S)=\E_{f\sim p_t}[\mathbbm{1}\{S(f(x_t),r_t)=S\}]$. Then, for each $i\in [N]$, for any $\mu>0$,
\begin{align}
    &\E_{f\sim p_t}\left[\left|f_i(x_t) - f_i^\star(x_t)\right|\cdot\mathbbm{1}\{i\in S(f(x_t), r_t)\}\right] \nonumber\\
    &\leq \E_{f\sim p_t}\left[\frac{\mathbbm{1}\{i\in S(f(x_t), r_t)\}}{4\mu w_{t,i}} + w_{t,i}(f_i(x_t) - f_i^\star(x_t))^2\right] \tag{AM-GM inequality}\\
    &= \frac{1}{4\mu} + \mu w_{t,i}\E_{f\sim p_t}\left[(f_i(x_t) - f_i^\star(x_t) )^2\right]. \label{eqn:fgts-1}
\end{align}
Taking a summation over all $i\in[N]$, we know that for any $\mu>0$,
\begin{align}
    &\E\left[\sum_{i\in S_t}\left|f_{t,i}(x_t)-f^\star_i(x_t)\right|\right] \nonumber \\
    &= \E\left[\sum_{i=1}^N\left|f_{t,i}(x_t)-f^\star_i(x_t)\right|\cdot\mathbbm{1}\{i\in S(f_t(x_t),r_t)\}\right] \nonumber \\
    &=\E\left[\sum_{i=1}^N\E_{f\sim p_t}\left[\left|f_{i}(x_t)-f^\star_i(x_t)\right|\cdot\mathbbm{1}\{i\in S(f(x_t),r_t)\}\right]\right] \nonumber \\
    &\stackrel{(i)}{\leq} \frac{N}{4\mu} + \mu \E\left[w_{t,i}\E_{f\sim p_t}\left[\sum_{i=1}^N(f_i(x_t)-f^\star_i(x_t))^2\right]\right]\nonumber \\
    &\stackrel{(ii)}{=} \frac{N}{4\mu} + \mu \E_{S_t\sim q_t}\E_{f\sim p_t}\left[\sum_{i\in S_t}(f_i(x_t) - f_i^\star(x_t))^2\right],\label{eqn:fgts-item}
\end{align}
where $(i)$ uses \pref{eqn:fgts-1} and $(ii)$ is by definition of $w_{t,i}$ and $q_t$.

Let $\LS_t = \sum_{i\in S_t}(f_i(x_t) - f_i^\star(x_t))^2$ (``Least
Squares''). Combining \pref{eqn:reg_ts} with \pref{eqn:fgts-item}, we know that
\begin{align}
    \RegMNL \leq \frac{NT}{4\mu} + \mu \E\left[\sum_{t=1}^T\E_{S_t\sim q_t}\E_{f\sim p_t}[\LS_t]\right] - \E\left[\sum_{t=1}^T\FG_t\right].\label{eqn:ls-fg-bound}
\end{align}
To bound the last two terms in \pref{eqn:ls-fg-bound}, using \pref{lem:MNL_square_error} and the fact that $i_t$ is a drawn from the distribution $\mu(S_t,f^\star(x_t),r_t)$ and, we show in \pref{lem:LS_FG_sub} that
\begin{align}\label{eqn:ls-fg-bound-2}
    \frac{1}{48\eta K(K+1)^4}\E_{f\sim p_t}[\LS_t] - \E_{f_t\sim q_t}[\FG_t] \leq -\frac{1}{\eta}\log\E_{i_t|x_t,S_t}\E_{f\sim p_t}\left[\exp(-\eta(\ellhat_{t,f}-\ellhat_{t,f^\star}))\right] + 4\eta.
\end{align}
Therefore, picking $\mu=\frac{1}{48\eta K(K+1)^4}$ and combining \pref{eqn:ls-fg-bound} and \pref{eqn:ls-fg-bound-2}, we know that
\begin{align}\label{eqn:ls-fg-bound-3}
    &\RegMNL \nonumber\\
    &\leq 12\eta NK(K+1)^4T + 4\eta T - \frac{1}{\eta}\E\left[\sum_{t=1}^T\log\E_{i_t|x_t,S_t}\E_{f\sim p_t}\left[\exp\left(-\eta\left(\ellhat_{t,f}-\ellhat_{t,f^\star}\right)\right)\right]\right]
\end{align}
To bound the last term in \pref{eqn:ls-fg-bound-3}, we use the exponential weight update dynamic of $p_t$. Following a classic analysis of exponential weight update, we show in \pref{lem:mwu} that
\begin{align}\label{eqn:mwu_update}
    -\E\left[\log 
    \E_{i_t|x_t,S_t}\;\E_{f\sim p_t} \; \left[\exp(-\eta(\ellhat_{t,f}-\ellhat_{t,f^*}))\right]\right]
  \leq Z_{t}-Z_{t-1},
\end{align}
where $Z_t\triangleq-\E\left[\log\E_{f\sim p_1}\left[\exp\left(-\eta\sum_{\tau=1}^t\left(\ellhat_{t,f}-\ellhat_{t,f^*}\right)\right)\right]\right]$. Combining \pref{eqn:ls-fg-bound-3} and \pref{eqn:mwu_update}, we arrive at
\begin{align*}
    \RegMNL &\leq 12\eta NK(K+1)^4T + 4\eta T + \frac{1}{\eta}\sum_{t=1}^T(Z_t-Z_{t-1}) \\
    &\leq 12\eta NK(K+1)^4T +4\eta T + \frac{Z_T}{\eta},
\end{align*}
where the last inequality uses the fact that $Z_0=0$.
\end{proof}

\begin{lemma}\label{lem:LS_FG_sub}
Suppose that $\eta\leq 1$. For any distribution $p_t$ over $\calF$, we have
\begin{align*}
    \frac{1}{48\eta K(K+1)^4}\E_{f\sim p_t}[\LS_t] - \E_{f_t\sim q_t}[\FG_t] \leq -\frac{1}{\eta}\log\E_{i_t|x_t,S_t}\E_{f\sim p_t}\left[\exp(-\eta(\ellhat_{t,f}-\ellhat_{t,f^\star}))\right] + 4\eta,
\end{align*}
where $\LS_t$ and $\FG_t$ are defined in the proof of \pref{thm:FGTS_MNL}, and $\ellhat_{t,f}$ is defined in \pref{eqn:fgts}.
\end{lemma}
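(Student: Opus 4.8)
The plan is to exploit the specific design of the loss estimator \pref{eqn:fgts}, which splits $\ellhat_{t,f}-\ellhat_{t,f^\star}$ into a squared-loss part and a feel-good part, and then to bound the log-moment-generating function on the right-hand side by integrating out $i_t$ first and $f$ second. Writing
$$
A_f=\frac{1}{8\eta K}\sum_{i\in S_t}\Big[(\mu_i(S_t,f(x_t))-\mathbbm{1}[i=i_t])^2-(\mu_i(S_t,f^\star(x_t))-\mathbbm{1}[i=i_t])^2\Big],
$$
and letting $B_f=\FG_t$ denote the feel-good term evaluated at the integration variable $f$, we have $\ellhat_{t,f}-\ellhat_{t,f^\star}=A_f-B_f$, where $B_f$ does not depend on $i_t$. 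Since $f\sim p_t$ and $i_t$ are independent given the history, Fubini gives $\E_{i_t}\E_{f\sim p_t}[e^{-\eta(A_f-B_f)}]=\E_{f\sim p_t}\!\left[e^{\eta B_f}\,\E_{i_t}[e^{-\eta A_f}]\right]$, so it suffices to control the inner expectation over $i_t$ for fixed $f$ and then the outer expectation over $f$.

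For the inner expectation, each bracket above lies in $[-1,1]$ and there are at most $K$ of them, so $|\eta A_f|\le 1/8$. Expanding $a^2-b^2=(a-b)(a+b)$ and using $\E_{i_t}[\mathbbm{1}[i=i_t]]=\mu_i(S_t,f^\star(x_t))$, the cross terms telescope into the clean identity $\E_{i_t}[\eta A_f]=\tfrac{1}{8K}D_f$ with $D_f\defeq\sum_{i\in S_t}(\mu_i(S_t,f(x_t))-\mu_i(S_t,f^\star(x_t)))^2\ge 0$. To control the second moment I will use the bound $\mu_i(S_t,f^\star(x_t))\le \tfrac12$ (which holds because $i\in S_t$ and $f^\star_i\le 1$ force $\mu_i\le \tfrac{f^\star_i}{1+f^\star_i}\le\tfrac12$) together with the crude bound $D_f\le 2$; a direct variance computation then yields $\E_{i_t}[(\eta A_f)^2]\le \tfrac12\E_{i_t}[\eta A_f]$. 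Combining this with the elementary inequality $e^{-z}\le 1-z+z^2$ (valid on $|z|\le 1/8$) gives $\E_{i_t}[e^{-\eta A_f}]\le 1-\tfrac{D_f}{16K}$.

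It then remains to take the outer expectation over $f$ and convert $D_f$ into $\LS_t$. Applying \pref{lem:MNL_square_error} to the restriction of $f$ and $f^\star$ to the coordinates of $S_t$ (a problem of dimension $|S_t|\le K$) yields $D_f\ge \tfrac{1}{2(K+1)^4}\LS_t$. For the feel-good factor I use $e^{\eta B_f}\le 1+\eta B_f+\eta^2$ (valid since $\eta|B_f|\le 1$) and, crucially, group the resulting cross term with the discount as $-\tfrac{D_f}{16K}(1+\eta B_f)\le -\tfrac{(1-\eta)D_f}{16K}$, using only $B_f\ge -1$; this is what prevents any $\eta$-independent slack from appearing. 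After $\log(1+y)\le y$ and dividing by $-\eta$, I arrive at
$$
-\frac1\eta\log\E_{i_t}\E_{f\sim p_t}\!\left[e^{-\eta(\ellhat_{t,f}-\ellhat_{t,f^\star})}\right]\ge \frac{(1-\eta)\,\E_{f\sim p_t}[\LS_t]}{32\eta K(K+1)^4}-\E_{f\sim p_t}[\FG_t]-\eta,
$$
and a short case analysis ($\eta\le\tfrac13$ versus $\eta>\tfrac13$, using $\LS_t\le K$ in the latter) reconciles the coefficient $\tfrac{1-\eta}{32}$ with the target $\tfrac{1}{48}$ inside the allotted $4\eta$ slack.

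The main obstacle is the second step: securing a second moment that is a genuine small multiple of the first moment rather than a constant. A naive Hoeffding-type bound on $\log\E[e^{-\eta A_f}]$ would inject an $\eta$-independent error of order one which, after dividing by $\eta$, ruins the bound for small $\eta$. The two ingredients that circumvent this — the observation $\mu_i(S_t,f^\star)\le \tfrac12$ for $i\in S_t$, and the grouping $-\tfrac{D_f}{16K}(1+\eta B_f)$ — are precisely what pin down the constants $\tfrac{1}{48}$ and $4\eta$, while everything else is routine bookkeeping.
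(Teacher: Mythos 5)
Your proof is correct, and it reproduces the two ingredients that actually make the lemma work: the second-order expansion of the exponential together with the fact that the conditional second moment of the squared-loss increment is at most half its (nonnegative) conditional mean, which is what yields a bound scaling as $\E_{f\sim p_t}[\LS_t]/\eta$ with only an $O(\eta)$ additive error; and the reverse-Lipschitz bound of \pref{lem:MNL_square_error} to pass from $\sum_{i\in S_t}(\mu_i(S_t,f(x_t))-\mu_i(S_t,f^\star(x_t)))^2$ to $\LS_t$. Where you genuinely diverge from the paper is in decoupling the squared-loss factor from the feel-good factor. The paper applies Cauchy--Schwarz over $f\sim p_t$ (followed by Jensen over $i_t$), which separates the two terms cleanly at the price of doubling both exponents (hence the $\tfrac{1}{4K}$ and $2\eta$ in its displays) and lands directly on the constant $\tfrac{1}{48}$. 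You instead keep the product, integrate out $i_t$ first (legitimate, since the integration variable $f\sim p_t$ is independent of $i_t$ given $x_t,S_t$) to get the factor $1-\tfrac{D_f}{16K}$, and control the cross term via $1+\eta B_f\ge 1-\eta\ge 0$; this gives the slightly weaker coefficient $\tfrac{1-\eta}{32}$ and forces your closing case analysis on $\eta\le\tfrac13$ versus $\eta>\tfrac13$ (using $\LS_t\le K$) to recover $\tfrac{1}{48}$ within the $4\eta$ slack. Both routes are sound; yours avoids the exponent doubling but pays with the extra bookkeeping at the end. Two minor remarks: the bound $\E_{i_t}[(\eta A_f)^2]\le\tfrac12\E_{i_t}[\eta A_f]$ already follows from Cauchy--Schwarz and $|\mu_i(S_t,f(x_t))+\mu_i(S_t,f^\star(x_t))-2\cdot\mathbbm{1}[i=i_t]|\le 2$, so the observation $\mu_i(S_t,f^\star(x_t))\le\tfrac12$ is not needed; and before invoking $\log(1+y)\le y$ you should note the argument of the logarithm stays positive, e.g.\ it is at least $1-\eta+\eta^2-\tfrac18>0$ since $D_f\le 2$.
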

\begin{proof}
  ~For notational convenience, define $c_{t,i}=\mathbbm{1}\{i=i_t\}$ for all $i\in [N]$. Let $\epsilon_{t,i} = c_{t,i} - \mu_i(S_t,f^\star(x_t))$ for all $i\in S_t$. Consider the term $-\eta\left(\ellhat_{t,f}-\ellhat_{t,f^\star}\right)$ for an arbitrary $f\in\calF$.
  \begin{align*}
    &-\eta\left(\ellhat_{t,f}-\ellhat_{t,f^\star}\right) \\
    &= -\frac{1}{8K}\sum_{i\in S_t}(\mu_{i}(S_t,f(x_t))-c_{t,i})^2 + \frac{1}{8K}\sum_{i\in S_t}(\mu_i(S_t,f^\star(x_t))-c_{t,i})^2 \\
    &\qquad + \eta\cdot \max_{S\in\calS}R(S,f(x_t),r_t) - \eta\cdot\max_{S\in\calS}R(S,f^\star(x_t),r_t) \\
    &= -\frac{1}{8K}\sum_{i\in S_t}(\mu_{i}(S_t,f(x_t))-\mu_{i}(S_t,f^\star(x_t)))(2c_{t,i} - \mu_{i}(S_t,f(x_t))-\mu_{i}(S_t,f^\star(x_t))) \\
    &\qquad + \eta\cdot \max_{S\in\calS}R(S,f(x_t),r_t) - \eta\cdot\max_{S\in\calS}R(S,f^\star(x_t),r_t) \\
     &= -\frac{1}{8K}\underbrace{\sum_{i\in S_t}(\mu_{i}(S_t,f(x_t))-\mu_{i}(S_t,f^\star(x_t)))(\mu_{i}(S_t,f^\star(x_t)) - \mu_{i}(S_t,f(x_t))+2\epsilon_{t,i})}_{\wh{\LS}_t} \\
    &\qquad + \eta\FG_t(f),
  \end{align*}
  where we define the first term as $\wh{\LS}_t$, which we will show later how this term is related to $\LS_t$, and the second term $\FG_t(f) = \max_{S\in\calS}R(S,f(x_t),r_t) - \max_{S\in\calS}R(S,f^\star(x_t),r_t)$ (so $\FG_t=\FG_t(f_t)$).
Consider the log of the expectation of the exponent on both sides.
\begin{align}
    &
    \log\E_{f\sim p_t}\E_{c_t|x_t,S_t}\left[\exp\left(-\eta\left(\ellhat_{t,f}-\ellhat_{t,f^\star}\right)\right)\right] \nonumber \\
    &= \log\E_{f\sim p_t}\E_{c_t|x_t,S_t}\left[\exp\left(-\frac{1}{8K}\wh{\LS}_t+\eta\FG_t(f)\right)\right] \nonumber \\
    &\leq \frac{1}{2}\log\E_{f\sim p_t}\left(\E_{c_t|x_t,S_t}\left[\exp\left(-\frac{1}{8K}\wh{\LS}_t\right)\right]^2\right) + \frac{1}{2}\log\E_{f\sim p_t}\left[\exp\left(2\eta\FG_t(f)\right)\right] \nonumber\\
    &\leq \frac{1}{2}\log\E_{f\sim p_t}\left(\E_{c_t|x_t,S_t}\left[\exp\left(-\frac{1}{4K}\wh{\LS}_t\right)\right]\right) + \frac{1}{2}\log\E_{f\sim p_t}\left[\exp\left(2\eta\FG_t(f)\right)\right],\label{eqn:log-partition}
\end{align}
where the first inequality is by Cauchy-Schwarz inequality and the second inequality is because $\E[x]^2\leq \E[x^2]$. Next, we consider bounding each of the two terms. For the first term, since
\begin{align*}
    &\left|\frac{1}{4K}\sum_{i\in S_t}(\mu_{i}(S_t,f(x_t))-\mu_{i}(S_t,f^\star(x_t)))(\mu_{i}(S_t,f^\star(x_t)) - \mu_{i}(S_t,f(x_t))+2\epsilon_{t,i})\right|\\
    &\leq \frac{1}{4K}\cdot 2|S_t|\leq \frac{1}{2},
\end{align*}
using the fact that $\exp(x)\leq 1+x+\frac{2}{3}x^2$ when $x\leq \frac{1}{2}$, we know that
\begin{align*}
    &\E_{c_t|x_t,S_t}\left[\exp\left(-\frac{1}{4K}\wh{\LS}_t\right)\right] \\
    &\leq 1-\frac{1}{4K}\E_{c_t|x_t,S_t}\left[\sum_{i\in S_t}(\mu_{i}(S_t,f(x_t))-\mu_{i}(S_t,f^\star(x_t)))(\mu_{i}(S_t,f^\star(x_t)) - \mu_{i}(S_t,f(x_t))+2\epsilon_{t,i})\right] \\
    &\ + \frac{1}{24K^2}\E_{c_t|x_t,S_t}\left[\left(\sum_{i\in S_t}(\mu_{i}(S_t,f(x_t))-\mu_{i}(S_t,f^\star(x_t)))(\mu_{i}(S_t,f^\star(x_t)) - \mu_{i}(S_t,f(x_t))+2\epsilon_{t,i})\right)^2\right] \\
    &\stackrel{(i)}{\leq} 1-\frac{1}{4K}\|\mu(S_t,f(x_t))-\mu(S_t,f^\star(x_t)\|_2^2\\
    &\ + \frac{1}{24K}\E_{c_t|x_t,S_t}\left[\left(\sum_{i\in S_t}(\mu_{i}(S_t,f(x_t))-\mu_{i}(S_t,f^\star(x_t)))^2(\mu_{i}(S_t,f^\star(x_t)) - \mu_{i}(S_t,f(x_t))+2\epsilon_{t,i})^2\right)\right] \\
    &\stackrel{(ii)}{\leq} 1-\frac{1}{12K}\|\mu(S_t,f(x_t))-\mu(S_t,f^\star(x_t)\|_2^2 \\
    &\stackrel{(iii)}{\leq} 1-\frac{1}{24K(K+1)^4}\sum_{i\in S_t}\left(f_i(x_t)-f_i^\star(x_t)\right)^2 \\
    &=1-\frac{1}{24K(K+1)^4}\LS_t,
\end{align*}
where $(i)$ is due to Cauchy-Schwarz inequality, $(ii)$ is because $|\mu_{i}(S_t,f^\star(x_t)) - \mu_{i}(S_t,f(x_t))+2\epsilon_{t,i}|\leq 2$, and $(iii)$ is because \pref{lem:MNL_square_error}. Further using the fact that $\log(1+x)\leq x$ for all $x\geq -1$, we have
\begin{align}\label{eqn:t1}
    \frac{1}{2}\log\E_{f\sim p_t}\left(\E_{c_t|x_t,S_t}\left[\exp\left(-\frac{1}{4K}\wh{\LS}_t\right)\right]\right) \leq -\frac{1}{48K(K+1)^4}\LS_t.
\end{align}

Consider the second term in \pref{eqn:log-partition}. Since $\eta \leq 1 $ and $|\FG_t(f)| \leq 1$, using $e^x\leq 1+x+2x^2$ for $x\leq 1$, we know that
\begin{align*}
    \frac{1}{2} \log
    \E_{f\sim q_t} \left[\exp(2\eta \FG_t(f))\right] &\leq \frac{1}{2}\log\left(1+
    2\eta \E_{f\sim q_t}[\FG_t(f)] + 2(2\eta)^2 \right) \notag\\
   & \leq  \eta \E_{f\sim q_t} [\FG_t(f)] + 4\eta^2 \tag{$\log(1+x) \leq x$}\\
   &=     \eta \E_{f_t\sim q_t} [\FG_t] + 4\eta^2   \tag{$f_t$ is drawn from $q_t$}
   .
\end{align*}
 Plugging the last bound and \pref{eqn:t1} into
 \pref{eqn:log-partition} and rearranging finishes the proof.
\end{proof}

The next lemma follows the classic analysis of multiplicative weight update algorithm.
\begin{lemma}\label{lem:mwu}
\pref{alg:FGTS} guarantees that for each $t\in[T]$,
\begin{align*}
    -\E\left[\E_{S_t\sim q_t}\log 
    \E_{c_t|x_t,S_t}\;\E_{f\sim p_t} \; \left[\exp(-\eta(\ellhat_{t,f}-\ellhat_{t,f^*}))\right]\right]
  \leq Z_{t}-Z_{t-1},
\end{align*}
where $Z_t=-\E\left[\log\E_{f\sim p_1}\left[\exp\left(-\eta\sum_{\tau=1}^t\left(\ellhat_{t,f}-\ellhat_{t,f^*}\right)\right)\right]\right]$ and $q_t\in \Delta(\calS)$ satisfies that $q_t(S)=\E_{f\sim p_t}[\mathbbm{1}\{S=\argmax_{S'\in\calS}R(S',f(x_t),r_t)\}]$.
\end{lemma}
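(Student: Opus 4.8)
The plan is to recognize this as the standard potential-telescoping identity behind exponential weights, supplemented by a single Jensen step to account for the fact that the conditional expectation over $c_t$ sits \emph{inside} the logarithm on the left-hand side. First I would unroll the multiplicative update $p_{t+1,f}\propto p_{t,f}\exp(-\eta\ellhat_{t,f})$ from \pref{alg:FGTS} to obtain the closed form $p_{t,f}\propto p_{1,f}\exp(-\eta\sum_{\tau=1}^{t-1}\ellhat_{\tau,f})$, with normalization taken over $f\in\calF$ for each fixed realization of the history. Here it is important to note that in $\ellhat_{\tau,f}$ (defined in \pref{eqn:fgts}) the subset $S_\tau$ and the decision $i_\tau$ are the \emph{realized} played action and outcome, so the dependence on $f$ enters only through $\mu(S_\tau,f(x_\tau))$ and $\max_S R(S,f(x_\tau),r_\tau)$; the above closed form therefore holds verbatim.

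Next I would introduce the recentered partition function $W_t=\E_{f\sim p_1}[\exp(-\eta\sum_{\tau=1}^t(\ellhat_{\tau,f}-\ellhat_{\tau,f^\star}))]$, so that $Z_t=-\E[\log W_t]$, $W_0=1$, and $Z_0=0$. The key algebraic observation is that subtracting $\ellhat_{\tau,f^\star}$ is a constant shift in $f$ and hence cancels between numerator and denominator: writing $\E_{f\sim p_t}[\exp(-\eta(\ellhat_{t,f}-\ellhat_{t,f^\star}))]$ as the ratio of two $p_1$-expectations (using the closed form of $p_t$) and multiplying top and bottom by the $f$-independent factor $\exp(\eta\sum_{\tau<t}\ellhat_{\tau,f^\star})$ yields the exact pointwise identity $\E_{f\sim p_t}[\exp(-\eta(\ellhat_{t,f}-\ellhat_{t,f^\star}))]=W_t/W_{t-1}$.

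Taking logarithms of this identity and then the full expectation over all of the algorithm's randomness gives the exact telescoping relation $Z_t-Z_{t-1}=-\E[\log\E_{f\sim p_t}[\exp(-\eta(\ellhat_{t,f}-\ellhat_{t,f^\star}))]]$. I would then peel the outer expectation into the history $\calH_{t-1}$ (which determines $p_t$ and hence $q_t$), the round-$t$ action draw $S_t\sim q_t$, and the outcome $c_t\mid x_t,S_t$, so that $Z_t-Z_{t-1}=-\E_{\calH_{t-1}}\E_{S_t\sim q_t}\E_{c_t\mid x_t,S_t}[\log\E_{f\sim p_t}[\exp(-\eta(\ellhat_{t,f}-\ellhat_{t,f^\star}))]]$. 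Finally, since $\log$ is concave, Jensen's inequality gives $\E_{c_t\mid x_t,S_t}[\log(\cdot)]\le\log\E_{c_t\mid x_t,S_t}[(\cdot)]$; negating and re-attaching $\E_{\calH_{t-1}}\E_{S_t\sim q_t}$ produces exactly the left-hand side of the lemma, now with $\E_{c_t\mid x_t,S_t}$ inside the logarithm, and the inequality direction is precisely the claimed $\le Z_t-Z_{t-1}$.

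This is essentially the classical exponential-weights telescoping argument, so I do not expect a deep obstacle; the two points requiring care are (i) the constant-shift cancellation that makes the $f^\star$-recentered partition function telescope cleanly, and (ii) orienting the Jensen step correctly, since the lemma places $\E_{c_t\mid x_t,S_t}$ inside the logarithm whereas the exact identity has it outside — it is precisely the concavity of $\log$ that converts the exact equality into the stated upper bound.
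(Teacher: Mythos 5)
Your proposal is correct and follows essentially the same route as the paper's proof: both unroll the multiplicative update to write $p_t$ in terms of the $f^\star$-recentered cumulative weights, obtain the exact ratio identity $\E_{f\sim p_t}[\exp(-\eta(\ellhat_{t,f}-\ellhat_{t,f^\star}))]=W_t/W_{t-1}$ so that the potentials telescope, and then apply Jensen's inequality to move $\E_{c_t\mid x_t,S_t}$ inside the logarithm. No gaps.
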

\begin{proof}
~Let $G_{t,f} \triangleq \exp\left(-\eta\sum_{\tau=1}^t\left(\ellhat_{t,f}-\ellhat_{t,f^*}\right)\right)$.
According to \pref{alg:FGTS}, we know that
\begin{align*}
    p_{t,f} = \frac{\exp\left(-\eta\sum_{\tau=1}^{t-1}\ellhat_{\tau,f}\right)}{\int_{f'\in\calF}\exp\left(-\eta\sum_{\tau=1}^{t-1}\ellhat_{\tau,f'}\right)df'} = \frac{G_{t-1,f}}{\int_{f'\in\calF}G_{t-1,f'}df'}.
\end{align*}
Then, according to the definition of $Z_t$, we have
\begin{align*}
     & Z_{t-1} - Z_{t} \\
     & = \E\left[ \log 
    \frac{\int_{f\in\calF}G_{t,f}df}{\int_{f\in\calF} \; G_{t-1,f}df}\right]\\
  &= \E\left[\log 
    \frac{\int_{f\in\calF}  \;
     G_{t-1,f} \exp(-\eta(\ellhat_{t,f}-\ellhat_{t,f^*}))df}{\int_{f\in\calF} G_{t-1,f}df}\right]\\
    &=\E \left[\log 
    \E_{f\sim p_t} \left[\exp(-\eta(\ellhat_{t,f}-\ellhat_{t,f^*})\right]\right]  \\
  &\leq \E\left[\E_{S_t\sim q_t}\log\E_{c_t|x_t,S_t}\E_{f\sim p_t}       \left[\exp(-\eta(\ellhat_{t,f}-\ellhat_{t,f^*})\right]\right],
\end{align*}
where the last inequality is due to Jensen's inequality. Rearranging the terms finishes the proof.
\end{proof}

Next, we restate and prove \pref{cor:fgts}.
\corFGTS*
\begin{proof}
    ~For a finite function class $\calF$, since $q_1$ is uniform, we have
\begin{align*}
    Z_T &= -\E\left[\log\sum_{f\in\calF}\frac{1}{|\calF|}\exp\left(-\eta\sum_{t=1}^T\left(\ellhat_{t,f}-\ellhat_{t,f^\star}\right)\right)\right]\\
    &\leq -\E\left[\log \frac{1}{|\calF|}\exp\left(-\eta\sum_{t=1}^T\left(\ellhat_{t,f^\star}-\ellhat_{t,f^\star}\right)\right)\right]  = \log|\calF|.
\end{align*}
Combining with \pref{thm:FGTS_MNL} and picking $\eta=\frac{1}{K^{2.5}}\sqrt{\frac{N\log|\calF|}{T}}$, we prove the first conclusion.

To prove our results for the linear class, we first show a more general results for parametrized Lipschitz function class. Suppose that $\calF$ is a $d$-dimensional parametrized function class defined as:
\begin{align}\label{eqn:Lipschitz_Class}
    \calF=\{f_\theta: \calX \mapsto [0,1]^N, \|\theta\|_2\leq B, f_{\theta,i} \text{ is $\alpha$-Lipschitz with respect to $\|\cdot\|_2$ for all $i\in [N]$}\}.
\end{align}
Direct calculation shows that the linear function class we consider is an instance of \pref{eqn:Lipschitz_Class} with $\alpha=1$. For function class satisfying \pref{eqn:Lipschitz_Class}, we aim to show that $Z_T=\order(K\eta+d\log (\alpha BT))$. Specifically, we consider a small $\ell_2$-ball around the true parameter $\theta^\star$: $\Omega_T=\{\theta: \|\theta-\theta^*\|_2\leq \frac{1}{\alpha T}\}$. Since $\calF$ is $\alpha$-Lipschitz with respect to $\|\cdot\|_2$, we know that for any $x\in\calX$, and any $i\in[N]$, 
\begin{align}\label{eqn:Lip_FGTS_Cor}
    \left|f_{\theta,i}(x)-f_{\theta^\star,i}(x)\right|\leq \frac{1}{T}.
\end{align}
Therefore, for any $\theta\in \Omega_T$,
    \begin{align}\label{eqn:deltaell}
        &-\eta(\ellhat_{t,f_\theta}-\ellhat_{t,f_{\theta^\star}})\nonumber\\ &= -\frac{1}{8K}\sum_{i\in S_t}(f_{\theta,i}(x_t) - c_{t,i})^2 + \frac{1}{8K}\sum_{i\in S_t}(f_{\theta,i}(x_t) - c_{t,i})^2\nonumber \\
        &\qquad + \eta\cdot \max_{S\in\calS} R(S,f_\theta(x_t),r_t) - \eta\cdot \max_{S\in\calS}R(S,f_{\theta^\star}(x_t),r_t) \nonumber\\
        &\geq -\frac{1}{4K}\sum_{i\in S_t}|f_{\theta,i}(x_t)-f_{\theta^\star,i}(x_t)| + \eta\cdot \max_{S\in\calS} R(S,f_\theta(x_t),r_t) - \eta\cdot \max_{S\in\calS}R(S,f_{\theta^\star}(x_t),r_t).
    \end{align}
    Let $S(f_{\theta^\star}(x_t),r_t)=\argmax_{S\in\calS}R(S,f_{\theta^\star}(x_t),r_t)$. Then, we can further lower bound \pref{eqn:deltaell} as follows:
    \begin{align*}
        &-\eta(\ellhat_{t,f_\theta}-\ellhat_{t,f_{\theta^\star}})\nonumber\\
        &\geq -\frac{1}{4K}\sum_{i\in S_t}|f_{\theta,i}(x_t)-f_{\theta^\star,i}(x_t)| + \eta R(S(f_{\theta^\star}(x_t),r_t),f_\theta(x_t),r_t) - \eta\cdot \max_{S\in\calS}R(S,f_{\theta^\star}(x_t),r_t) \\
        &\stackrel{(i)}{\geq} -\frac{1}{4K}\sum_{i\in S_t}|f_{\theta,i}(x_t)-f_{\theta^\star,i}(x_t)| - \eta \sum_{i\in S(f_{\theta^\star}(x_t),r_t)}\left|f_{\theta,i}(x_t)-f_{\theta^\star,i}(x_t)\right| \\
        &\stackrel{(ii)}{\geq} -\frac{1}{4T} - \frac{\eta K}{T},
    \end{align*}
    where $(i)$ is because \pref{lem:Lipschitz} and $(ii)$ uses \pref{eqn:Lip_FGTS_Cor}.
    This means that
    \begin{align*}
        Z_{T}&=-\E\left[\log\E_{f\sim q_1}\exp\left(-\eta\sum_{t=1}^T\left(\ellhat_{t,f_\theta}-\ellhat_{t,f_{\theta^\star}}\right)\right)\right] \\
        &\leq -\E\left[\log (\alpha BT)^{-d}\inf_{\theta\in\Omega_T}\exp\left(-\eta\sum_{t=1}^T\left(\ellhat_{t,f_\theta}-\ellhat_{t,f_{\theta^\star}}\right)\right)\right] \\
        &\leq d\log (\alpha BT)+\frac{1}{4} + K \eta = \order(K\eta+d\log (\alpha BT)).
    \end{align*}
    With the optimal choice of $\eta =\frac{1}{K^{2.5}}\sqrt{\frac{Nd\log(BT)}{T}}$, \pref{thm:FGTS_MNL} shows that \pref{alg:FGTS} guarantees that for linear function class 
    \begin{align*}
        \RegMNL=\order\left(K^{2.5}\sqrt{dNT\log (BT)}\right).
    \end{align*}
\end{proof}
\end{document}